\newtheorem{proposition}{Proposition}
\newtheorem{lemma}{Lemma}
\newtheorem{remark}{Remark}
\def\isarxiv{1}
\DeclareMathOperator{\trace}{trace}
\renewcommand{\d}{\mathrm{d}}
\newcommand{\red}[1]{{\color{red} #1}}
\renewcommand{\DH}{\mathcal{D}_{\rm H}}
\newcommand{\DKL}{\mathcal{D}_{\rm KL}}
\newcommand{\E}{\mathbb{E}}
\newcommand{\Var}{\text{Var}}
\newcommand{\R}{\mathbb{R}}
\newcommand{\bs}{\boldsymbol}
\newcommand{\mH}{\mathsf H}
\newcommand{\mG}{\mathsf G}
\newcommand{\mM}{\mathsf M}
\newcommand{\mA}{\mathsf A}
\newcommand{\mC}{\mathsf C}
\newcommand{\mQ}{\mathsf Q}
\newcommand{\mR}{\mathsf R}
\newcommand{\mI}{\mathsf I}
\newcommand{\mV}{\mathsf V}
\newcommand{\chol}{\mathsf L}
\newcommand{\tA}{\bs{\mathsf A}}
\newcommand{\tB}{\bs{\mathsf B}}
\newcommand{\tC}{\bs{\mathsf C}}
\newcommand{\tH}{\bs{\mathsf H}}
\renewcommand{\ldots}{\makebox[1em][c]{.\hfil.\hfil.}}
\newcommand{\appropto}{\mathrel{\vcenter{
  \offinterlineskip\halign{\hfil$##$\cr
    \propto\cr\noalign{\kern2pt}\sim\cr\noalign{\kern-2pt}}}}}
\pgfplotsset{%
      every axis/.append style={width=.45\textwidth,
                                axis x line=bottom, axis y line=left,
                                x axis line style={very thick,->}, y axis line style={very thick,->},
                                tick align=inside, tick style={thick},
                                every x tick label/.style={font=\small},
                                every y tick label/.style={font=\small},
                                },
      every axis legend/.append style={
                                legend columns=1,
                                font=\small,
                                draw=none,
                                fill=white,
                                },
      every axis x label/.style={at={(0.5,-0.1)},below,fill=none,fill opacity=1,text opacity=1},
      every axis y label/.style={at={(-0.12,0.5)},fill=none,fill opacity=1,text opacity=1,rotate=90},
      }
\newcommand{\Yey}{Y\!{=} y}
\newcommand{\cond}[1]{\pi_{\Theta | Y\ifthenelse{\equal{#1}{}}{}{\!{=}#1}}}
\newcommand{\lowsup}[1]{{#1}}
\newcommand\mydots{\makebox[1em][c]{.\hfil.\hfil.}}
\newcommand{\appname}{
\ifx\isarxiv\undefined
Supplement Material\xspace
\else
Appendix\xspace
\fi
}
\newcommand{\smartref}[1]{
\ifx\isarxiv\undefined
\ref{#1} of the Supplement Material\xspace
\else
Appendix \ref{#1}\xspace
\fi
}
\newcommand{\smartheading}[1]{
\ifx\isarxiv\undefined
\paragraph{#1.}
\else
\subsection{#1}
\fi
}
\newcommand{\reone}[1]{{\color{teal}#1}}
\newcommand{\retwo}[1]{{\color{blue}#1}}
\newcommand{\cancel}[1]{{\color{red}\sout{#1}}}
\newcommand{\canceleq}[1]{#1}
\newcommand{\msout}[1]{{\color{red} \text{\sout{\ensuremath{#1}}}}}
\renewcommand{\cancel}[1]{}
\renewcommand{\canceleq}[1]{}
\renewcommand{\reone}[1]{#1}
\renewcommand{\retwo}[1]{#1}
\begin{document}

\begin{frontmatter}
\title{\bf Scalable conditional deep inverse Rosenblatt transports using tensor-trains and gradient-based dimension reduction}

\author[tc]{Tiangang Cui}
\ead{tiangang.cui@monash.edu}
\author[sd]{Sergey Dolgov}
\ead{s.dolgov@bath.ac.uk}
\author[oz]{Olivier Zahm}
\ead{olivier.zahm@inria.fr}

\address[tc]{School of Mathematics, Monash University, Victoria 3800, Australia}
\address[sd]{Department of Mathematical Sciences, University of Bath, Bath, BA2 7AY, UK}
\address[oz]{Universit\'{e} Grenoble Alpes, Inria, CNRS, Grenoble INP, LJK, 38000 Grenoble, France}


\begin{abstract}
We present a novel offline-online method to mitigate the computational burden of the characterization of posterior random variables in statistical learning. In the offline phase, the proposed method learns the joint law of the parameter random variables and the observable random variables in the tensor-train (TT) format. In the online phase, the resulting order-preserving conditional transport can characterize the posterior random variables given newly observed data in real-time. Compared with the state-of-the-art normalizing flows techniques, the proposed method relies on function approximation and is equipped with thorough performance analysis. The function approximation perspective also allows us to further extend the capability of transport maps in challenging problems with high-dimensional observations and high-dimensional parameters. On the one hand, we present novel heuristics to reorder and/or reparametrize the variables to enhance the approximation power of TT. On the other hand, we integrate the TT-based transport maps and the parameter reordering/reparametrization into layered compositions to further improve the performance of the resulting transport maps. We demonstrate the efficiency of the proposed method on various statistical learning tasks in ordinary differential equations (ODEs) and partial differential equations (PDEs). 
\end{abstract}

\begin{keyword}  
transport maps, tensor-train, dimension reduction, inverse problems, Markov chain Monte Carlo, approximate Bayesian computation
\end{keyword}  

\end{frontmatter}


\section{Introduction}\label{sec:intro}

In many statistical applications, a fundamental task is to characterize a random variable $\Theta$ conditioned on the realization of another random variable $\Yey$, denoted by $\Theta|\Yey$.
Here we consider the classical paradigm of Bayesian inference, where $\Theta$ is the unknown parameter of some statistical model and $Y$ represents observable information. In this case, the joint probability density function, denoted by $\pi_{Y,\Theta}$, of the random variables $(Y,\Theta)$ can be defined by the product of the likelihood function $\pi_{Y|\Theta}$ and the prior density $\pi_{\Theta}$. This way, given observed data $y$, the density of the conditional random variable, or the posterior density, is proportional to the joint density, i.e., $\pi_{\Theta|\Yey} \propto \pi_{Y,\Theta}(y,\cdot)$. In what follows, we let $\Theta$ and $Y$ take values in $\R^\lowsup{m}$ and $\R^\lowsup{n}$, respectively.

Most of the classical inference methods adopt an {\bf online} strategy to characterize posterior random variables---most of the computational tasks must be performed after observing the data $y$.
For example, methods such as Markov chain Monte Carlo (MCMC) \cite{liu2008monte,MCMC:BGJM_2011} and sequential Monte Carlo (SMC) \cite{Chopin_2002,SMC_2006} simulate posterior random variables via evaluating the posterior density $\pi_{\Theta|\Yey}$ at each of the candidate samples. 
This online strategy needs to be repeatedly applied to each newly observed data set and may require a significant amount of computational effort when the posterior density is costly to evaluate, for instance, problems involving ODEs, see \cite{ramsay2007parameter,girolami2008bayesian} for examples, or PDEs, see \cite{stuart2010inverse,MCMC:BuiGha_2012,MCMC:Petra_etal_2014} for examples.
To extend the applicability of the Bayesian framework to computationally costly but time-sensitive problems, we present a scalable {\bf offline-online} strategy to enable fast and accurate characterizations of posterior random variables for multiple sets of observations.

At the heart of our offline-online strategy is the approximation of the joint density $\pi_{Y,\Theta}$ by $p_{Y,\Theta} \coloneqq \mathcal{T}_\sharp\, \rho_{Y,\Theta}$, which is the pushforward of a product-form reference probability density $\rho_{Y,\Theta}(u_Y,u_\Theta)=\rho_{Y}(u_Y)\otimes\rho_{\Theta}(u_\Theta)$ on $\R^\lowsup{m} \times \R^\lowsup{n}$ under an order-preserving map
\begin{equation}\label{eq:map}
  \left[ \begin{array}{l} y \\ \theta \end{array}\right] = \mathcal{T}(u_Y, u_\Theta) = \left[ \begin{array}{l}\mathcal{T}_{Y}(u_Y) \\ \mathcal{T}_{\Theta}(u_Y, u_\Theta) \end{array}\right] .
\end{equation}
As suggested in \cite{baptista2020adaptive,spantini2019coupling}, the above triangular structure offers several useful properties: the transformed random variable $Y = \mathcal{T}_Y(U_Y)$, $U_Y\sim \rho_Y$, simply follows the marginal density $p_Y$; and more interestingly, conditioned on an observed data set $y$ and its corresponding reference variable $u_Y = \mathcal{T}_Y^\lowsup{-1}(y)$, the output of the {\bf conditional map} 
\begin{equation}\label{eq:cmap}
  \mathcal{T}_{\Theta|\Yey}(U_\Theta) \coloneqq \mathcal{T}_{\Theta}( \mathcal{T}_Y^\lowsup{-1}(y), U_\Theta ), \quad  U_\Theta \sim \rho_\Theta,
\end{equation}
follows the conditional density $p_{\Theta|\Yey}$ that approximates the posterior density $\pi_{\Theta|\Yey}$. 
This allows us to devote most of the computational resources to the offline phase to learn the map $\mathcal{T}$ before observing any data, in order to approximate the original joint density $\pi_{Y,\Theta}$ by $\mathcal{T}_\sharp\, \rho_{Y,\Theta}$. Then, in the online phase, given newly observed data, we can either carry real-time approximate posterior inference using the approximate density defined by the conditional map $\mathcal{T}_{\Theta|Y{=}y}$, or accelerate exact posterior inference using $\mathcal{T}_{\Theta|Y{=}y}$ as a preconditioner.

We present a novel framework that learns the map $\mathcal{T}$ \retwo{for high-dimensional data and parameters} from a {\bf function approximation} perspective.
\cancel{Following the work of [12,13], we employ the functional TT decomposition [14,15] of the joint density to explicitly construct the order-preserving map $\mathcal{T}$ and the subsequent conditional map $\mathcal{T}_{\Theta|Y{=}y}$.}
\retwo{We first extend the TT-based methods of \cite{cui2020deep,dolgov2020approximation} to the offline-online setting by explicitly constructing the order-preserving map $\mathcal{T}$ targeting the joint variables $(Y,\Theta)$ and the subsequent conditional map $\mathcal{T}_{\Theta|Y{=}y}$.}
For many problems of interests, the joint density can be concentrated to some sub-manifold. Then, the nonlinear interaction and the potential high-dimensionality of variables in the joint density can deteriorate the approximation power of TT. We propose a combined treatment to overcome this barrier.
\cancel{We first present a novel gradient-based method to either reorder or reparametrize the variables to enhance the approximation power of TT. }
\retwo{We first present a novel gradient-based method to either reorder or reparametrize the variables to enhance the approximation power of TT, and to reduce the dimensions of parameters and data with a controlled accuracy.}
\cancel{Then, we employ the deep inverse Rosenblatt transport (DIRT) method [12] to adaptively build the conditional map into a composition of layers, in which each layer of map is easier to construct.}
\retwo{To handle complicated interactions and concentration of the target density, we then employ the deep inverse Rosenblatt transport (DIRT) method of \cite{cui2020deep} to adaptively build the conditional map---after reordering or reparametrization---into a composition of layers, in which each layer of map is easier to construct.}

The function approximation perspective permits the control of the approximation error of the joint density in the Hellinger distance 
\begin{equation}\label{eq:Hellinger}
 \DH( \pi_{Y,\Theta}, p_{Y,\Theta} ) \coloneqq \Big(\frac12 \int \left(\sqrt{\pi_{Y,\Theta}}-\textstyle{\sqrt{p_{Y,\Theta}}}\right)^2 \d y \d\theta  \Big)^\frac12.
\end{equation}
The performance of the resulting conditional map is guaranteed with high probability. 
As shown in \ref{proof:MarkovBound}, if $\DH( \pi_{Y,\Theta}, p_{Y,\Theta} ) \leq \varepsilon $ for some $\varepsilon < \sqrt2 / 4$, then the expectation of any function $\theta \mapsto h(\theta)$ over the posterior has an error satisfying
\begin{equation}\label{eq:MarkovBound}
  \left| \E_{\pi_{\Theta|Y}}(h) - \E_{p_{\Theta|Y}}(h)\right|
  \leq \frac{4 \varepsilon}{ \sqrt2 \delta -4 \varepsilon} \left(\sqrt{ \Var_{\pi_{\Theta|Y}}(h)} + \sqrt{\Var_{p_{\Theta|Y}}(h)}\right),
\end{equation}
with probability greater than $1{-}\delta$ for some $\delta > 2 \sqrt{2} \, \varepsilon$. 

We highlight relevant works in transport maps, see, e.g., \cite{baptista2020adaptive,bigoni2019greedy,kovachki2020conditional,parno2018transport,tabak2013family,trigila2016data} and normalizing flows, see, e.g., \cite{caterini2021variational,chen2019residualflows,pmlr-v119-cornish20a,kruse2019hint,papamakarios2021normalizing}, in which the map is identified by minimizing the Kullback-Leibler (KL) divergence between the target density and its approximation over some class of tractable triangular maps. By and large, these approaches adopt a {\bf density estimation} perspective in which one only has access to samples from the target density, and so minimizing the KL divergence boils down to (penalized) maximum likelihood estimation.
In contrast, we build the map using function approximation techniques in which the square root of the density is seen as the function to approximate, and hence the Hellinger distance \eqref{eq:Hellinger} can be interpreted as an $L^\lowsup{2}$ error. In addition, the access to pointwise evaluations of the density and its derivatives enables the gradient-based variable ordering method, which is the key to successfully implementing our method for high-dimensional problems.

Section \ref{sec:background} provides background of the Rosenbaltt transport and the TT-based density approximation. Section \ref{sec:Variable_ordering} introduces the gradient-based method for variable reordering and reparametrization. Section \ref{sec:DIRT} presents the DIRT-based construction of conditional maps. Section \ref{sec:online} presents online inference algorithms based on resulting conditional maps. Section \ref{sec:numerics} provides several applications. In the Appendix, we provide proofs, derivations, and additional numerical results section-by-section.

\section{From tensor-train to the conditional transport}\label{sec:background}
We first introduce the conditional transport as a  Rosenblatt transport \cite{rosenblatt1952remarks}, and then discuss its numerical construction using TT. The following notation is used throughout the paper. 
\cancel{For a $d$-dimensional random variable $X$ and an integer $1 {<} k {<} d$, we express the first $k{-}1$ coordinates and the last $d{-}k$ coordinates of $X$ as
\(
X_{<k} := [X_1, \ldots, X_{k-1}]^\top
\) 
and 
\(
X_{>k} := [X_{k+1}, \ldots, X_{d}]^\top,
\)
respectively. We also define $X_{\leq k} \equiv [X_{<k}^\top, X_k]^\top$, $X_{\geq k} \equiv [X_{k}, X_{>k}^\top]^\top$, $X_{\leq 1} \equiv X_1$, and $X_{\geq d} \equiv X_{d}$. To be consistent with TT, we index the observable variable $Y$ in the reverse order and the parameter $\Theta$ in the forward order, i.e., $(Y, \Theta) := (Y_m, \ldots, Y_{2}, Y_1, \Theta_1, \Theta_2, \ldots, \Theta_n)$. The tuple expression $Y,\Theta = y,\theta$ is used in the subscript to denote $Y = y$ and $\Theta = \theta$.}
\reone{For a $d$-dimensional random variable $X$ and integers $i,j \in [d] := \{1, 2, \ldots, d\}$, we define $X_{i:j} := (X_i, \ldots, X_j)$. Note that for $i < j$, this leads to $X_{i:j} := (X_i, X_{i+1}, \ldots, X_j)$, whereas for $i > j$, this gives $X_{i:j} := (X_i, X_{i-1}, \ldots, X_j)$. To be consistent with TT, we index the observable variable $Y$ in the reverse order and the parameter $\Theta$ in the forward order, i.e., $(Y, \Theta) := (Y_m, \ldots, Y_{2}, Y_1, \Theta_1, \Theta_2, \ldots, \Theta_n)$.}

\subsection{Rosenblatt transport}\label{sec:RT}

Suppose $(Y, \Theta)$ jointly follow the density $\pi_{Y,\Theta}$ with respect to the Lebesgue measure. 
As a starting point, we consider the marginal densities of the random variables \cancel{$Y_m \equiv Y_{\geq m}, Y_{\geq m-1}, \ldots, Y_{\geq 2}, Y_{\geq 1} \equiv Y$} \reone{$Y_m, Y_{m:m-1}, \ldots, Y_{m:2}, Y_{m:1} \equiv Y$}, which can be expressed as
\canceleq{\[
\msout{\pi_{Y_{\geq j}}(y_{\geq j}) :=  \left\{ \begin{array}{ll} 
  \displaystyle \int \int \pi_{Y,\Theta}( y_{\geq j}, y_{< j}, \theta ) \, \d y_{< j}\, \d \theta,  &  m \geq  j > 1  \\
  \displaystyle \int \pi_{Y,\Theta}( y, \theta )\, \d\theta, & j = 1 
\end{array}\right.;}
\]}
\reone{
\begin{align}\label{eq:marginal_y}
  \pi_{Y_{m:j}}(y_{m:j})  := \left\{ \begin{array}{rll} 
  \displaystyle \int \hspace{-12pt} & \displaystyle \int \pi_{Y,\Theta}( y_{m:j}, y_{j-1:1}, \theta ) \, \d y_{j-1:1}\, \d \theta,  &   j = m, m-1, \ldots, 2  \\
  & \displaystyle \int \pi_{Y,\Theta}( y, \theta )\, \d\theta, & j = 1 
  \end{array}\right.;
\end{align}}
and the marginal densities of \cancel{$(Y, \Theta_{\leq 1}), (Y, \Theta_{\leq 2}), \ldots , (Y, \Theta_{\leq n})\equiv (Y, \Theta)$} \reone{$(Y, \Theta_{1}), (Y, \Theta_{1:2}), \ldots , (Y, \Theta_{1:n})\equiv (Y, \Theta)$}, which are given by
\canceleq{\[
  \msout{\pi_{Y,\Theta_{\leq k}}(y, \theta_{\leq k}) :=  \left\{ \begin{array}{ll} \displaystyle\int \pi_{Y,\Theta}( y, \theta_{\leq k}, \theta_{>k} )\, \d\theta_{>k}, & 1 \leq k < n \\
    \pi_{Y,\Theta}(y, \theta), & k = n 
  \end{array}\right..}
\]}
\reone{
\begin{align}\label{eq:marginal_theta}
\pi_{Y,\Theta_{1:k}}(y, \theta_{1:k})  := \left\{ \begin{array}{rll} \displaystyle\int \hspace{-12pt} & \pi_{Y,\Theta}( y, \theta_{1:k}, \theta_{k+1:n} )\, \d\theta_{k+1:n}, &  k = 1,2,\ldots, n-1 \\
& \pi_{Y,\Theta}(y, \theta), & k = n 
\end{array}\right..
\end{align}}
Then, one can decompose the joint density $\pi_{Y,\Theta}$ as
\canceleq{\begin{align*}
    \msout{\pi_{Y,\Theta}(y,\theta)} & \msout{= \pi_{Y_m}(y_m) \! \left( \prod_{j = m{-}1}^{1} \frac{\pi_{Y_{\geq j}}(y_{\geq j})}{\pi_{Y_{>j}}(y_{> j})} \right) 
    \frac{\pi_{Y, \Theta_{1}}(y, \Theta_1)}{\pi_{Y}(y)} \left( \prod_{k = 2}^{n} \frac{\pi_{Y, \Theta_{\leq k}}(y, \theta_{\leq k})}{\pi_{Y, \Theta_{<k}}(y, \theta_{< k})} \right) } \\
    & \msout{=  \pi_{Y_m}(y_{m}) \! \left( \prod_{j=m{-}1}^{1} \hspace{-10pt} \pi_{Y_j | Y_{>j}{=}y_{>j}} ( y_j | y_{>j} ) \hspace{-4pt} \right)   
    \! \pi_{\Theta_1 | Y{=}y} ( \theta_1 | y ) \! \left( \prod_{k = 2}^{n} \hspace{-3pt} \pi_{\Theta_k | Y,\Theta_{<k}{=}y,\theta_{<k}} ( \theta_k | y, \theta_{<k} ) \hspace{-3pt} \right) ,} \nonumber
\end{align*}}
\reone{
\begin{align}\label{eq:conditionals}
\pi_{Y,\Theta}(y,\theta) & = \pi_{Y_m}(y_m) \! \left( \prod_{j = m{-}1}^{1} \frac{\pi_{Y_{m:j}}(y_{m:j})}{\pi_{Y_{m:j+1}}(y_{m:j+1})} \right) 
\frac{\pi_{Y, \Theta_{1}}(y, \Theta_1)}{\pi_{Y}(y)} \left( \prod_{k = 2}^{n} \frac{\pi_{Y, \Theta_{1:k}}(y, \theta_{1:k})}{\pi_{Y, \Theta_{1:k-1}}(y, \theta_{1:k-1})} \right)  \\
& =  \pi_{Y_m}(y_{m}) \! \left( \prod_{j=m{-}1}^{1} \hspace{-10pt} \pi_{Y_j | Y_{m:j+1}} ( y_j | y_{m:j+1} ) \hspace{-4pt} \right)   
\! \pi_{\Theta_1 | Y} ( \theta_1 | y ) \! \left( \prod_{k = 2}^{n} \hspace{-3pt} \pi_{\Theta_k | Y,\Theta_{1:k-1}} ( \theta_k | y, \theta_{1:k-1} ) \hspace{-3pt} \right) , \nonumber
\end{align}
}
which is the product of a sequence of one dimensional marginal and conditional densities. 

We denote the distribution functions of the marginal random variable $Y_m$, each of the conditional random variables \cancel{$Y_j | Y_{>j}{=}y_{>j}$} \reone{$Y_j | Y_{m:j+1}$}, the conditional random variable $\Theta_1 | Y$, and each of the conditional random variables \cancel{$\Theta_k | Y, \Theta_{<k} {=} y,\theta_{<k}$} \reone{$\Theta_k | Y, \Theta_{1:k-1}$} by 
\reone{ 
\[
F_{Y_m}(y_m), \quad F_{Y_j | Y_{m:j+1}}(y_j|y_{m:j+1}), \quad F_{\Theta_1 | Y}(\theta_1|y), \quad \text{and} \quad F_{\Theta_k | Y,\Theta_{1:k-1}}(\theta_k | y, \theta_{1:k-1}),
\]
}
respectively. Each of the distribution functions defines a univariate order-preserving transformation that maps the corresponding (conditional) random variable to a uniform random variable on $[0,1]$. This sequence of distribution functions leads to the Rosenblatt transport
$\mathcal{F}:\R^\lowsup{m{+}n}\rightarrow [0,1]^\lowsup{m{+}n}$ defined by $\mathcal{F}(y,\theta)=[\mathcal{F}_Y(y) , \mathcal{F}_{\Theta|Y}(\theta|y)]^\lowsup{\top}$ where
\canceleq{\begin{equation*} 
    \arraycolsep=-1pt
    \msout{\mathcal{F}_Y(y)=
     \left[
     \begin{array}{ll}
      F_{Y_{m}} & (y_{m}) \\ 
      F_{Y_{m{-}1}|Y_m{=}y_m } & (y_{m{-}1}|y_{m}) \\ 
      \quad\vdots &\\ 
      F_{Y_1|Y_{>1}{=}y_{>1}} & (y_{1}|y_{>1})
     \end{array}
     \right]
     \quad\text{and}\quad
     \mathcal{F}_{\Theta|\Yey}(\theta|y)=
     \left[
     \begin{array}{ll}
      F_{\Theta_{1}|\Yey} & (\theta_{1}|y) \\ 
      F_{\Theta_{2}|Y,\Theta_{1}{=}y,\theta_1} & (\theta_{2}|y, \theta_{1}) \\ 
      \quad\vdots & \\ 
      F_{\Theta_{n}|Y,\Theta_{<n}{=}y,\theta_{<n}} & (\theta_{n}|y,\theta_{<n})
     \end{array}
     \right].}
\end{equation*}}
\reone{
\begin{equation}\label{eq:RosenblattComponents} 
\arraycolsep=-1pt
\mathcal{F}_Y(y)=
 \left[
 \begin{array}{ll}
  F_{Y_{m}} &(y_{m}) \\ 
  F_{Y_{m{-}1}|Y_m }&(y_{m{-}1}|y_{m}) \\ 
  \quad\vdots &\\ 
  F_{Y_1|Y_{m:2}}&(y_{1}|y_{m:2})
 \end{array}
 \right]
 \quad\text{and}\quad
 \mathcal{F}_{\Theta|Y}(\theta|y)=
 \left[
 \begin{array}{ll}
  F_{\Theta_{1}|Y}&(\theta_{1}|y) \\ 
  F_{\Theta_{2}|Y,\Theta_{1}}&(\theta_{2}|y, \theta_{1}) \\ 
  \quad\vdots & \\ 
  F_{\Theta_{n}|Y,\Theta_{1:n-1}}&(\theta_{n}|y,\theta_{1:n-1})
 \end{array}
 \right].
\end{equation}
}
The Rosenblatt transport has a lower-triangular structure, because the evaluation of each row of $\mathcal{F}$ involves a univariate (conditional) distribution function that only depends on variables in the previous dimensions. 
As a result, the Rosenblatt transport maps the joint variables $(Y, \Theta)$ to uniform random variables $(U_Y, U_\Theta)$ on the hypercube $[0,1]^{m{+}n}$ dimension-by-dimension. Denoting the density function of the uniform measure on $[0,1]^\lowsup{m{+}n}$ by $\mu_{Y,\Theta}$, the pushforward of the joint density $\pi_{Y,\Theta}$ under the Rosenblatt transport satisfies $\mathcal{F}_\sharp \, \pi_{Y,\Theta} (u_Y, u_\Theta) =\mu_{Y,\Theta} (u_Y, u_\Theta)$. 

\begin{figure}[h!]
\begin{center}
\vspace{-1em}
\begin{tikzpicture}
\node[anchor=west, minimum width=0.26\linewidth, minimum height=0.26\linewidth] (r2) {\begin{minipage}{0.25\linewidth}\includegraphics[width=\linewidth]{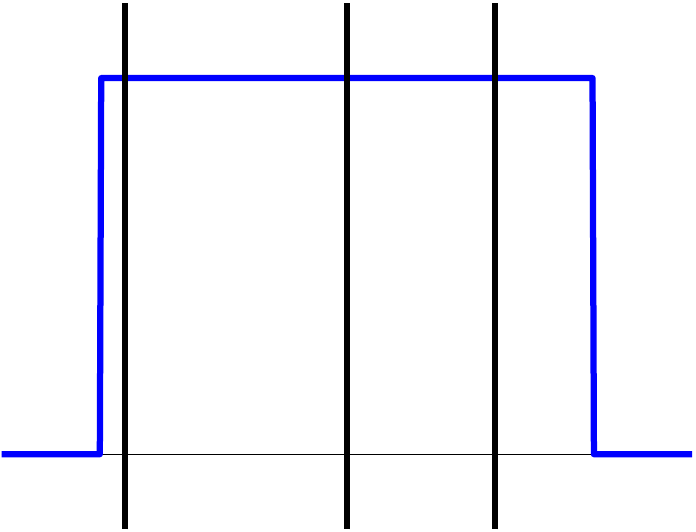}\end{minipage}};

\node[anchor=north] (u2) at ($(r2.south)+(0,0.2)$) {\begin{minipage}{0.4\linewidth} \centering\hspace{-1em}\footnotesize (a) $\mu_{Y}(u_Y) $\hspace{-1em} \end{minipage}};

\node[anchor=west, minimum width=0.26\linewidth, minimum height=0.26\linewidth] (s2) at ($(r2.east)+(4,0)$) {\begin{minipage}{0.25\linewidth}\centering \includegraphics[width=\linewidth]{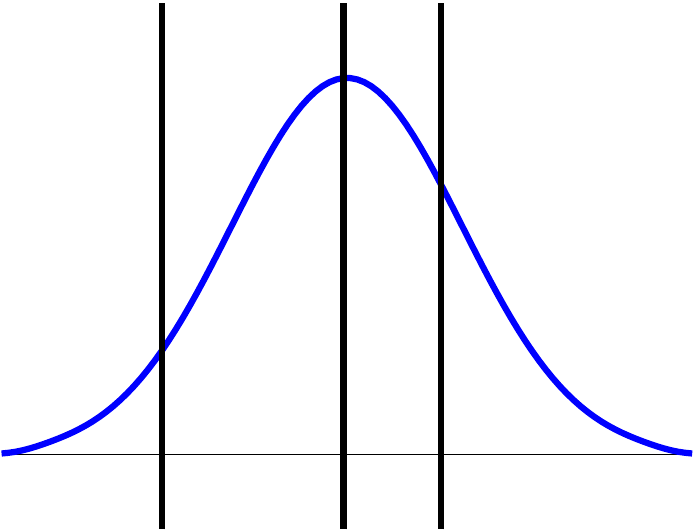}\end{minipage}};

\node[anchor=north] (j2) at ($(s2.south)+(0,0.2)$) {\begin{minipage}{0.25\linewidth} \centering\hspace{-1em}\footnotesize (b) $\pi_{Y}(y) $\hspace{-1em} \end{minipage}};

\node[anchor=north, minimum width=0.26\linewidth, minimum height=0.26\linewidth] (r1) at ($(r2.south)-(0,0.3)$) {\begin{minipage}{0.25\linewidth}\centering\includegraphics[width=\linewidth, trim=0 20pt 0 20pt, clip]{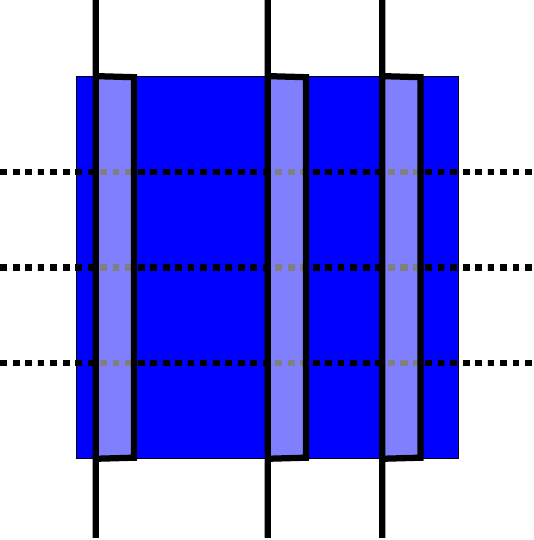} \end{minipage}};

\node[anchor=north] (u1) at ($(r1.south)+(0,0.2)$) {\begin{minipage}{0.4\linewidth} \centering\hspace{-1em}\footnotesize (c) $\mu_{Y,\Theta}(u_Y,u_\Theta) = \mu_{Y}(u_Y) \mu_{\Theta}(u_\Theta) $\hspace{-1em} \end{minipage}};

\node[anchor=north, minimum width=0.26\linewidth, minimum height=0.26\linewidth] (s1) at ($(s2.south)-(0,0.3)$) {\begin{minipage}{0.25\linewidth}\centering\includegraphics[width=\linewidth]{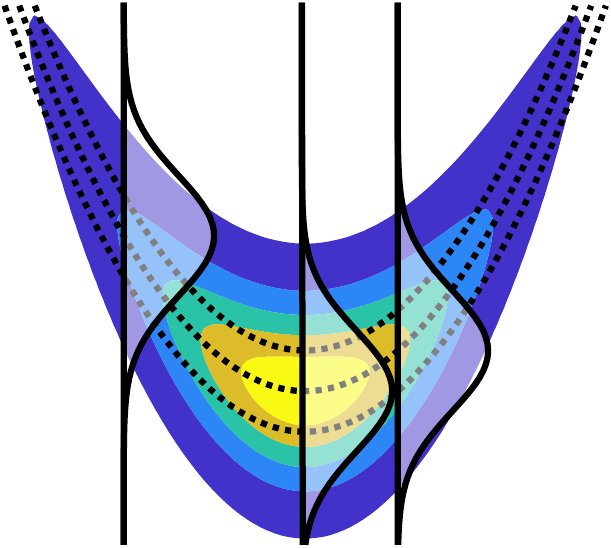} \end{minipage}};

\node[anchor=north] (j1) at ($(s1.south)+(0,0.2)$) {\begin{minipage}{0.4\linewidth} \centering\hspace{-1em}\footnotesize (d) $\pi_{Y,\Theta}(y,\theta) = \pi_{Y}(y) \pi_{\Theta|Y{=}y}(\theta|y) $\hspace{-1em} \end{minipage}};

\draw[->,line width=1pt] (r2.east) -- (s2.west) node[midway,above] {\footnotesize $Y^i = F_{Y}^{-1}(U_Y^i)$};
\draw[->,line width=1pt] (r1.east) -- (s1.west) node[midway,above] {\footnotesize $\Theta^i|Y{=}Y^i = F^{-1}_{\scriptscriptstyle \Theta|Y{=}Y^i}(U_\Theta^i| Y^i)$};
\end{tikzpicture}\vspace{-1.5em}
\end{center}
\caption{An example of the inverse Rosenblatt transport with univariate random variables $Y$ and $\Theta$. Here $Y$ has the marginal density $\pi_Y$ and $(Y, \Theta)$ jointly follow the density $\pi_{Y,\Theta}$. From (a) to (b): mapping uniform random variables $U_Y^i$ to $Y^i$ by inverting the marginal distribution function. From (c) to (d): drawing uniform random variables $U_\Theta^i$ and mapping them to $\Theta^i|Y{=}Y^i$ by inverting the conditional distribution function. \retwo{The top row shows density functions in one dimension. The bottom row shows contours of density functions in two dimension.} }\label{fig:demo}
\end{figure}

By inverting $\mathcal{F}$, we can equivalently express the joint density $\pi_{Y, \Theta}$ as the pushforward of the uniform density $\mu_{Y,\Theta}$ under the inverse map $\mathcal{T}:=\mathcal{F}^\lowsup{-1}$, i.e., $\pi_{Y,\Theta} = \mathcal{T}_\sharp\,\mu_{Y,\Theta}$. 
Thus, joint random variables $(Y,\Theta)$ can be drawn by mapping uniform random variables $(U_Y, U_\Theta) \sim \mu_{Y,\Theta}$ through the inverse Rosenblatt transport $\mathcal{T}$. Similar to the Rosenblatt transport, the inverse map $(y,\theta) = \mathcal{T}(u_Y, u_\Theta)$ also has a lower-triangular structure in the block form of
\[
\left[\begin{array}{ll} y \\ \theta \end{array} \right] = \left[\begin{array}{ll} \mathcal{F}^{-1}_Y(u_Y) \\ \mathcal{F}^{-1}_{\Theta|Y}( u_\Theta | y) \end{array} \right],
\]
and thus can be evaluated as a sequence of univariate inverse transforms. See Figure \ref{fig:demo} for a two dimensional example. 
As a final remark, for any observed data $y$, we can characterize the posterior random variable $\Theta|\Yey$ by using the inverse conditional map $\mathcal{T}_{\Theta|\Yey}(U_\Theta) = \mathcal{F}^{-1}_{\Theta|Y}(U_\Theta|y)$ with $U_\Theta \sim \mu_\Theta$. In the rest of this paper, we will introduce the construction of the Rosenblatt transport for high-dimensional random variables and the multilayered construction of the conditional map that follows the general form of \eqref{eq:cmap}.

\subsection{Tensor-train for density approximation and marginalization}

The key to materializing the Rosenblatt transport is the construction of the marginal densities in \eqref{eq:marginal_y} and \eqref{eq:marginal_theta}, and hence the conditional density functions and distribution functions. 
Treating the multivariate density function as a continuous analogue of a tensor, we can compute the sequence of marginalizations by decomposing the density function into a functional-form of TT \cite{bigoni2016spectral,gorodetsky2019continuous}. Specifically, we consider the TT decomposition of the square root of the joint density,
\begin{align}\label{eq:g_TT}
 \sqrt{\pi_{Y,\Theta}(y,\theta)} \approx g_{Y,\Theta}(y,\theta) & := \mG_{-m}(y_m)   \hdots   \mG_{-1}(y_1)   \mG_1(\theta_1)   \hdots   \mG_n(\theta_n) 
\end{align}
where each $\mG_{i} {\,:\,} \R {\,\rightarrow\,} \R^\lowsup{r_{i{-}1} \times r_{i}} $ is a matrix-valued function called the $i$-th TT core and the function $\text{rank}_\text{TT}(g_{Y,\Theta}) = (r_{{-}(m{+}1)}, \ldots, r_n )$ gives the TT ranks of $g_{Y,\Theta}$. Here negative integers are used to index the TT cores of the observable variables $y_j, m {\,\geq\,} j {\,\geq\,} 1$. 
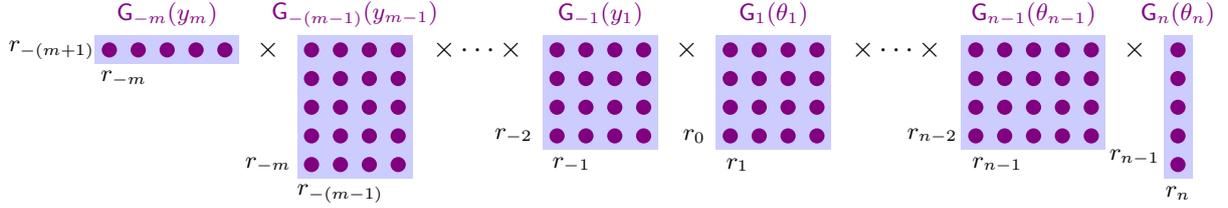
\begin{figure}[h!]
\begin{center}
\begin{tikzpicture}[scale=0.95]
\def\blue{blue!20!white}
\def\red{red!50!blue}
\def\xshift{0}
\def\yshift{1.6}
\def\h{0.4}
\def\w{2}
\fill[\blue] (\xshift,\yshift) rectangle (\xshift+\w,\yshift+\h);
\foreach \x in {0.2,0.6,...,\w} {
  \foreach \y in {0.2} {
    \fill[\red] (\xshift+\x,\yshift+\y) circle (3pt);
    }
  }
\node at (\xshift-0.6,\yshift+0.2) {\footnotesize $r_{{-}(m{+}1)}$};
\node at (\xshift+0.4,\yshift-0.2) {\footnotesize $r_{{-}m}$};
\node[text=\red] at (\xshift+1,\yshift+\h+0.3) {\footnotesize $\mG_{-m}(y_m)$};
\node at (2.4, 1.8) {$\times$};
\def\xshift{2.8} 
\def\yshift{0}
\def\h{2}
\def\w{1.6}
\fill[\blue] (\xshift,\yshift) rectangle (\xshift+\w,\yshift+\h);
\foreach \x in {0.2,0.6,...,\w} {
  \foreach \y in {0.2,0.6,...,\h} {
    \fill[\red] (\xshift+\x,\yshift+\y) circle (3pt);
    }
  }  
\node at (\xshift-0.4,\yshift+0.2) {\footnotesize $r_{{-}m}$};
\node at (\xshift+0.6,\yshift-0.2) {\footnotesize $r_{{-}(m{-}1)}$};
\node[text=\red] at (\xshift+0.8,\yshift+\h+0.3) {\footnotesize $\mG_{-(m{-}1)}(y_{m{-}1})$};
\node at (5.3, 1.8) {$\times \cdots \times$};
\def\xshift{6.2}
\def\yshift{0.4}
\def\h{1.6}
\def\w{1.6}
\fill[\blue] (\xshift,\yshift) rectangle (\xshift+\w,\yshift+\h);
\foreach \x in {0.2,0.6,...,\w} {
  \foreach \y in {0.2,0.6,...,\h} {
    \fill[\red] (\xshift+\x,\yshift+\y) circle (3pt);
    }
  }  
\node at (\xshift-0.4,\yshift+0.2) {\footnotesize $r_{{-}2}$};
\node at (\xshift+0.4,\yshift-0.2) {\footnotesize $r_{{-}1}$};
\node[text=\red] at (\xshift+0.8,\yshift+\h+0.3) {\footnotesize $\mG_{-1}(y_1)$};
\node at (8.2, 1.8) {$\times$};
\def\xshift{8.6}
\def\yshift{0.4}
\def\h{1.6}
\def\w{1.6}
\fill[\blue] (\xshift,\yshift) rectangle (\xshift+\w,\yshift+\h);
\foreach \x in {0.2,0.6,...,\w} {
  \foreach \y in {0.2,0.6,...,\h} {
    \fill[\red] (\xshift+\x,\yshift+\y) circle (3pt);
    }
  }  
\node at (\xshift-0.3,\yshift+0.2) {\footnotesize $r_0$};
\node at (\xshift+0.3,\yshift-0.2) {\footnotesize $r_1$};
\node[text=\red] at (\xshift+0.8,\yshift+\h+0.3) {\footnotesize $\mG_{1}(\theta_1)$};
\node at (11.1, 1.8) {$\times \cdots \times$};
\def\xshift{12}
\def\yshift{0.4}
\def\h{1.6}
\def\w{2}
\fill[\blue] (\xshift,\yshift) rectangle (\xshift+\w,\yshift+\h);
\foreach \x in {0.2,0.6,...,\w} {
  \foreach \y in {0.2,0.6,...,\h} {
    \fill[\red] (\xshift+\x,\yshift+\y) circle (3pt);
    }
  }  
\node at (\xshift-0.4,\yshift+0.2) {\footnotesize $r_{n-2}$};
\node at (\xshift+0.5,\yshift-0.2) {\footnotesize $r_{n-1}$};
\node[text=\red] at (\xshift+1,\yshift+\h+0.3) {\footnotesize $\mG_{n-1}(\theta_{n-1})$};
\node at (14.4, 1.8) {$\times$};
\def\xshift{14.8}
\def\yshift{0}
\def\h{2}
\def\w{0.4}
\fill[\blue] (\xshift,\yshift) rectangle (\xshift+\w,\yshift+\h);
\foreach \x in {0.2} {
  \foreach \y in {0.2,0.6,...,\h} {
    \fill[\red] (\xshift+\x,\yshift+\y) circle (3pt);
    }
  }  
\node at (\xshift-0.4,\yshift+0.3) {\footnotesize $r_{n-1}$};
\node at (\xshift+0.2,\yshift-0.2) {\footnotesize $r_n$};
\node[text=\red] at (\xshift+0.2,\yshift+\h+0.3) {\footnotesize $\mG_{n}(\theta_{n})$};
\end{tikzpicture}\vspace{-1.5em}
\end{center}
\caption{A schematic of the TT decomposition in \eqref{eq:g_TT}. Note that variables $y_m$ and $\theta_n$ have vector-valued TT cores ($r_{{-}(m{+}1)} {\,=\,} r_n {\,=\,} 1$). Each dot represents a scalar-valued function.}\label{fig:tt}
\end{figure}
We set $r_{{-}(m{+}1)} {\,=\,} r_n {\,=\,} 1$ and use the convention $r_{-1}{\,=\,}r_0$. As shown in Figure \ref{fig:tt}, for a given pair of variables $(y,\theta) = (y_{m}, y_{m-1}, \ldots, y_1, \theta_1, \ldots, \theta_{n-1}, \theta_n)$, one can evaluate each of the matrix-valued TT cores and compute a sequence of matrix-vector products to evaluate $g_{Y,\Theta}(y,\theta)$.

The TT decomposition \eqref{eq:g_TT} can be built by alternating-direction TT-cross approximation methods \cite{goreinov1997pseudo,mahoney2009cur,oseledets2010tt} using $\mathcal{O}(\sum_{k=-m}^{n} r_{k-1}r_k)$ samples of $\pi_{Y,\Theta}(y,\theta)$,
which has complexity scaling linearly in dimension. The procedure is described in \ref{sec:cross}. The TT decomposition \eqref{eq:g_TT} leads to an approximate joint density
\begin{equation}\label{eq:Pitilde_g}
 p_{Y,\Theta}(y,\theta) \propto g_{Y,\Theta}(y,\theta)^2 .
\end{equation}
Approximation of this form offers threefold benefits. Firstly, it preserves non-negativity, which is crucial for building order-preserving maps that are later used for random variable generation and coordinate transformation in our proposed framework. 
As an alternative, one may directly decompose the joint density function $\pi_{Y,\Theta}$ into the TT form, however, the rank adaptation and truncation used in building the TT decomposition can violate the non-negativity of the joint density function. A discrete analogue is that the truncated SVD (or any rank truncated matrix decomposition) of a matrix filled with non-negative entries generally cannot preserve non-negativity. As a result, the conditional density functions built by directly decomposing $\pi_{Y,\Theta}$ may take negative values in some region, and thus the resulting Rosenblatt transport is no longer order-preserving.

Secondly, following \cite{dolgov2014alternating}, we can adapt the TT ranks of $g_{Y,\Theta}$ in the cross approximation to reach a desired error threshold $\varepsilon$ such that $\| \sqrt{\pi_{Y,\Theta}(y,\theta)} - g_{Y,\Theta}(y,\theta) \|_2 \leq \varepsilon/\sqrt{2}$, where $\|\cdot\|_{2}$ denotes the $L^\lowsup{2}$-norm. As shown in \ref{sec:sirt_error}, this function approximation error can be directly translated into the Hellinger error of the approximate joint density \eqref{eq:Pitilde_g}, which satisfies $\DH(\pi_{Y,\Theta},p_{Y,\Theta}) \leq \varepsilon$.

More importantly, the separable form of the TT decomposition permits explicit dimension-by-dimension marginalization of the approximation $p_{Y,\Theta}(y,\theta)$. 
To illustrate the idea, for some index $1 \leq k < n$, we group the TT cores in \eqref{eq:g_TT} into two vector-valued functions
\canceleq{\[
\msout{\mG_{\leq k}(y,\theta_{\leq k}) = \mG_{-m}(y_m)  \hdots   \mG_k(\theta_k) \quad \text{and} \quad \mG_{> k}(\theta_{>k}) = \mG_{k+1}(\theta_{k+1})   \hdots   \mG_n(\theta_n),
}
\]}
\[
\reone{
 \mG_{-m:k}(y,\theta_{1:k}) = \mG_{-m}(y_m)  \hdots  \mG_k(\theta_k) \quad \text{and} \quad \mG_{k+1:n}(\theta_{k+1:n}) = \mG_{k+1}(\theta_{k+1})   \hdots   \mG_n(\theta_n),
}
\]
where the outputs of $\mG_{-m:k}:\R^{m{+}k} \rightarrow \R^{1 \times r_k}$ and $\mG_{k+1:n}:\R^{n{-}k} \rightarrow \R^{r_k \times 1}$ are row vectors and column vectors, respectively. This way, the TT decomposition can be written as
\[
\reone{
g_{Y,\Theta}(y,\theta) = \sum_{\alpha_k = 1}^{r_k} \mG_{-m:k}^{(\alpha_k)}(y,\theta_{1:k}) \, \mG_{k+1:n}^{(\alpha_k)}(\theta_{k+1:m}) ,
}
\]
where \reone{$\mG_{-m:k}^{(\alpha_k)}(y,\theta_{-m:k})$ and $\mG_{k+1:n}^{(\alpha_k)}(\theta_{k+1:n})$} are $\alpha_k$-th elements of the vector-valued functions. Then, each pair of random variables $(Y,\Theta_{1:k})$ has the approximate marginal density
\canceleq{\begin{align*}
  \msout{p_{Y,\Theta_{\leq k}}(y, \theta_{\leq k})} & \msout{\propto \int \left( g_{Y,\Theta}( y, \theta_{\leq k}, \theta_{>k} ) \right)^2\, \d \theta_{>k}}\nonumber \\
  & \msout{= \sum_{\alpha_k = 1}^{r_k} \sum_{\beta_k = 1}^{r_k} \mG_{\leq k}^{(\alpha_k)}(y,\theta_{\leq k}) \, \mG_{\leq k}^{(\beta_k)}(y,\theta_{\leq k})\int \mG_{> k}^{(\alpha_k)}(\theta_{> k}) \mG_{> k}^{(\beta_k)}(\theta_{> k}) \, \d\theta_{>k} }\nonumber \\
  & \msout{= \sum_{\alpha_k = 1}^{r_k} \sum_{\beta_k = 1}^{r_k} \mG_{\leq k}^{(\alpha_k)}(y,\theta_{\leq k}) \, \mG_{\leq k}^{(\beta_k)}(y,\theta_{\leq k})\,\bar\mM^{(\alpha_k,\beta_k)}_{>k} }\nonumber \\
  & \msout{= \sum_{\tau_k = 1}^{r_k} \bigg( \sum_{\alpha_k = 1}^{r_k} \mG_{\leq k}^{(\alpha_k)}(y,\theta_{\leq k}) \bar\chol^{(\alpha_k,\tau_k)}_{>k} \bigg)^2,}
\end{align*}}
\reone{
\begin{align}\label{eq:marginal1}
p_{Y,\Theta_{1:k}}(y, \theta_{1:k}) & \propto \int \left( g_{Y,\Theta}( y, \theta_{1:k}, \theta_{k+1:n} ) \right)^2\, \d \theta_{k+1:n}\nonumber \\
& = \sum_{\alpha_k = 1}^{r_k} \sum_{\beta_k = 1}^{r_k} \mG_{-m:k}^{(\alpha_k)}(y,\theta_{1:k}) \, \mG_{-m:k}^{(\beta_k)}(y,\theta_{1:k})\int \mG_{k+1:n}^{(\alpha_k)}(\theta_{k+1:n}) \mG_{k+1:n}^{(\beta_k)}(\theta_{k+1:n}) \, \d\theta_{k+1:n} \nonumber \\
& = \sum_{\alpha_k = 1}^{r_k} \sum_{\beta_k = 1}^{r_k} \mG_{-m:k}^{(\alpha_k)}(y,\theta_{1:k}) \, \mG_{-m:k}^{(\beta_k)}(y,\theta_{1:k})\,\bar\mM^{(\alpha_k,\beta_k)}_{k+1:n} \nonumber \\
& = \sum_{\tau_k = 1}^{r_k} \bigg( \sum_{\alpha_k = 1}^{r_k} \mG_{-m:k}^{(\alpha_k)}(y,\theta_{1:k}) \bar\chol^{(\alpha_k,\tau_k)}_{k+1:n} \bigg)^2,
\end{align}
where $\bar\chol_{k+1:n}$ is the Cholesky factorization of the symmetric positive definite matrix $\bar\mM_{k+1:n} \in \R^{r_k \times r_k}$, i.e., $\bar\chol_{k+1:n}^{} \bar\chol_{k+1:n}^\top = \bar\mM_{k+1:n}$.}
As detailed in \ref{sec:tt_marginal}, the matrix $\bar\mM_{k+1:n}$ is recursively computed by a sequence of one-dimensional integrals starting with the last variable $\theta_n$. 
Extending this into the observable variables, for $m \geq j \geq 1$, we can similarly group the TT cores as \cancel{$\mG_{\leq -j}(y_{\geq j}) = \mG_{-m}(y_m)  \hdots   \mG_{-j}(y_j)$ and $\mG_{> -j}(y_{<j}, \theta) =  \mG_{-j+1}(y_{j-1}) \hdots   \mG_n(\theta_n)$} \reone{$\mG_{-m:-j}(y_{m:j}) = \mG_{-m}(y_m)  \hdots   \mG_{-j}(y_j)$ and $\mG_{-(j-1):n}(y_{j-1:1}, \theta) =  \mG_{-(j-1)}(y_{j-1}) \hdots   \mG_n(\theta_n)$}, and continue the recursion used in \eqref{eq:marginal1} to compute the matrix 
\canceleq{\[
\msout{
\bar\mM_{>-j}^{(\alpha_{-j}, \beta_{-j})}  = \int \mG_{> k}^{(\alpha_{-j})}(y_{<j}, \theta) \mG_{> -j}^{(\beta_{-j})}(y_{<j}, \theta) \, d y_{<j} \, \d\theta.
}
\]}
\[
\reone{
\bar\mM_{-(j-1):n}^{(\alpha_{-j}, \beta_{-j})}  = \int \mG_{-(j-1):n}^{(\alpha_{-j})}(y_{j-1:1}, \theta) \mG_{-(j-1):n}^{(\beta_{-j})}(y_{j-1:1}, \theta) \, d y_{j-1:1} \, \d\theta.
}
\]
Then, using the Cholesky factorization $\bar\chol_{-(j-1):n}^{} \bar\chol_{-(j-1):n}^\top = \bar\mM_{-(j-1):n}$, the approximate marginal densities for \cancel{$Y_{m}, Y_{\geq m{-}1}, \ldots, Y_{\geq 1}$} \reone{$Y_{m}, Y_{m:m{-}1}, \ldots, Y_{m:1}$} can also be expressed as a sum-of-squares
\canceleq{\begin{equation*}
\msout{
  p_{Y_{\geq j}}(y_{\geq j}) \propto \sum_{\tau_{-j} = 1}^{r_{-j}} \bigg( \sum_{\alpha_{-j} = 1}^{r_{-j}} \mG_{\leq -j}^{(\alpha_{-j})}(y_{\geq j}) \bar\chol^{(\alpha_{-j},\tau_{-j})}_{>-j} \bigg)^2.}
\end{equation*}}
\begin{equation}\label{eq:marginal2}
\reone{
p_{Y_{m:j}}(y_{m:j}) \propto \sum_{\tau_{-j} = 1}^{r_{-j}} \bigg( \sum_{\alpha_{-j} = 1}^{r_{-j}} \mG_{-m:-j}^{(\alpha_{-j})}(y_{m:j}) \bar\chol^{(\alpha_{-j},\tau_{-j})}_{-(j-1):n} \bigg)^2.
}
\end{equation}
Note that after integrating over all dimensions, the resulting scalar \cancel{$M_{\geq -m} \in \R$} \reone{$M_{-m:n} \in \R$} gives the normalizing constant of the approximate joint density $p_{Y,\Theta}$.

\subsection{Squared inverse Rosenblatt transport}

The squared TT decomposition $p_{Y,\Theta}$ constructs the approximate marginal densities of the random variables \cancel{$Y_m, \ldots, Y_{\geq j}, \ldots, (Y, \Theta_{\leq k}), \ldots , (Y, \Theta_{< n})$} \reone{$Y_m, \ldots, Y_{m:j}, \ldots, (Y, \Theta_{1:k}), \ldots , (Y, \Theta_{1:n-1})$}. Using the corresponding conditional distribution functions
\canceleq{\begin{align*}
  \msout{F_{Y_j | Y_{>j}{=}y_{>j}}(y_j|y_{>j})} & \msout{= \int_{-\infty}^{y_j} \frac{p_{Y_{\geq j}}(y_{> j}, y_j')}{p_{Y_{> j}}(y_{> j})} \d y_j',} \\
  \msout{F_{\Theta_k | Y,\Theta_{<k}{=} y,\theta_{<k}}(\theta_k | y, \theta_{<k})} & \msout{= \int_{-\infty}^{\theta_k} \frac{p_{Y,\Theta_{\leq k}}(y, \theta_{< k}, \theta_k')}{p_{Y,\Theta_{< k}}(y, \theta_{< k})} \d \theta_k' ,}
\end{align*}}
\reone{
\begin{align}
 F_{Y_j | Y_{m:j+1}}(y_j|y_{m:j+1}) &= \int_{-\infty}^{y_j} \frac{p_{Y_{m:j}}(y_{m:j+1}, y_j')}{p_{Y_{m:j+1}}(y_{m:j+1})} \d y_j', \label{eq:cdf_sirty}\\
 F_{\Theta_k | Y,\Theta_{1:k-1}}(\theta_k | y, \theta_{1:k-1}) &= \int_{-\infty}^{\theta_k} \frac{p_{Y,\Theta_{1:k}}(y, \theta_{1:k-1}, \theta_k')}{p_{Y,\Theta_{1:k-1}}(y, \theta_{1:k-1})} \d \theta_k' ,\label{eq:cdf_sirtt}
\end{align}
}
we can then follow the definition in \eqref{eq:RosenblattComponents} to build a Rosenblatt transport $\mathcal{F}: \R^\lowsup{m{+}n} \rightarrow [0,1]^\lowsup{m{+}n}$ and its inverse $\mathcal{T} = \mathcal{F}^{-1}$ such that $p_{Y,\Theta} = \mathcal{T}_\sharp \,\mu_{Y,\Theta}$. 
More generally, given an arbitrary product-form probability density function $\rho_{Y,\Theta}(y,\theta) = \prod_{i = 1}^\lowsup{m} \rho_{Y_i}(y_i)\prod_{j = 1}^\lowsup{n} \rho_{\Theta_j}(\theta_j)$, we can define a diagonal map $\mathcal{R}_{Y,\Theta}$ such that $\mathcal{R}_\sharp \,\rho_{Y,\Theta}= \mu_{Y,\Theta}$. Then, the composite map $\mathcal{T}=\mathcal{F}^\lowsup{-1}\circ \mathcal{R}$ also has the lower-triangular structure and satisfies $\mathcal{T}_\sharp\,\rho_{Y,\Theta} = p_{Y,\Theta}$. 
This allows one to transform non-uniform reference random variables and also provides flexibility in the layered construction introduced later on.

Utilizing the transport map $\mathcal{T}$, we can divide the inference into the {\bf offline phase} and the {\bf online phase}.
In the offline phase, we first decompose the square root of the joint density $\sqrt{\pi_{Y,\Theta}}$ into the TT approximation $g_{Y,\Theta}$, and then construct all the conditional distribution functions based on the squared TT decomposition to obtain the inverse Rosenblatt $\mathcal{T}$. 
The offline phase requires the majority of the computational resources, where $\mathcal{O}(\sum_{k=-m}^{n} r_{k-1}r_k)$ of potentially costly joint density evaluations and an additional $\mathcal{O}(\sum_{k=-m}^{n} r_{k-1}^2 r_k + r_{k-1}r_k^2)$ floating point operations (flops) are needed to build the TT decomposition and conditional distribution functions.
See Algorithm \ref{alg:sirt} for a summary of the procedure. 
In the online phase, for observed data $y$, the TT-based conditional density $p_{\Theta|\Yey} = p_{Y,\Theta}(y,\theta) / p_{Y}(y)$ provides a surrogate to the true posterior density $\pi_{\Theta|\Yey}(\theta|y)$. 
We can draw i.i.d. samples from the TT-based surrogate by evaluating 
the conditional map  $\mathcal{T}_{\Theta|Y{=}y}:= \mathcal{F}^\lowsup{-1}_{\Theta|Y}( \cdot | y)$, where each evaluation only costs $\mathcal{O}(\sum_{k=0}^{n} r_{k-1}r_k)$ flops. 

\begin{algorithm}[h]
\caption{Squared inverse Rosenblatt transport (SIRT).}\label{alg:sirt}
\begin{algorithmic}[1]
\Procedure{SIRT}{$\rho_{Y, \Theta}$, $\pi_{Y, \Theta}$, $\varepsilon$}
\State Approximate $\sqrt{\pi_{Y, \Theta}}$ by a TT decomposition $g_{Y, \Theta}$ such that $\|\sqrt{\pi_{Y, \Theta}}-g_{Y, \Theta}\|_{2}\leq \varepsilon / \sqrt{2}$.
\State Compute the conditional distribution functions \eqref{eq:cdf_sirty} and \eqref{eq:cdf_sirtt}. 
\State Assemble the Rosenblatt transport $\mathcal{F}:\R^\lowsup{d}\rightarrow[0,1]^\lowsup{d}$ as in \eqref{eq:RosenblattComponents}.
\State Assemble the diagonal map $\mathcal{R}:\R^\lowsup{d}\rightarrow[0,1]^\lowsup{d}$ such that $\mathcal{R}_\sharp\,\rho_{Y, \Theta}=\mu_{Y, \Theta}$.
\State \Return the lower-triangular map $\mathcal{T} = \mathcal{F}^\lowsup{-1}\circ \mathcal{R}$ that satisfies $\DH(\mathcal{T}_\sharp\,\rho_{Y, \Theta},\pi_{Y, \Theta})\leq\varepsilon$.
\EndProcedure
\end{algorithmic}
\end{algorithm}

It is worth mentioning that TT has also been employed to explore the conditional distribution in alternative ways. In the Bayesian context, the work of \cite{eigel2020low,eigel2018sampling} employs TT to approximate elements of the posterior density, such as the log-likelihood function, to compute posterior statistics. In comparison, our method approximates the square root of the target density using TT, which naturally devising an order-preserving transport map. This becomes a key ingredient for building the composition of maps in Section \ref{sec:DIRT} that is able to characterize concentrated distributions.

\section{Variable reordering and reparametrization}\label{sec:Variable_ordering}

\subsection{Variable reordering}
The efficiency of TT decompositions strongly depends on the variable ordering. Since various subsets of the observable variables may interact differently with various subsets of the parameters in the joint density $\pi_{Y,\Theta}$, we propose to reorder the variables so that the most {\bf interdependent} variables are placed in the middle of the TT decomposition, whereas the rest of {\bf mutually independent} variables are placed in the tail coordinates of TT. The rationale is the following: for some indices $s < m$ and $t < n$, if the variables $Y_{m},\hdots,Y_{s+1}, \Theta_{t+1},\hdots,\Theta_n$ and the middle block of variables \cancel{$(Y_{\leq s}, \Theta_{\leq t})$} \reone{$(Y_{s:1}, \Theta_{1:t})$} are mutually independent, then the joint density $\pi_{Y,\Theta}$ can be equivalently expressed in a product-form of
\begin{equation}\label{eq:pi_with_independence_structure}
 \widetilde\pi_{Y,\Theta}(y,\theta) = 
 \bigg(\prod_{i=s{+}1}^{m} \widetilde\pi_{Y_i}(y_i)  \bigg)
 \reone{\widetilde\pi_{Y_{s:1},\Theta_{1:t}}\left(y_{s:1},\theta_{1:t}\right)}
 \bigg(\prod_{j=t+1}^{n} \widetilde\pi_{\Theta_j}(\theta_j) \bigg) .
\end{equation}
Such a density can be approximated using a tensor train $g_{Y,\Theta}\approx\sqrt{\widetilde\pi_{Y,\Theta}}$ with truncated ranks $(1,\hdots,1,r_{{-}(s{+}1)},\hdots,r_t,1,\hdots,1)$. In practice, however, $\pi_{Y,\Theta}$ might not have exactly the mutual independence structure as in \eqref{eq:pi_with_independence_structure}. Instead, after some variable reordering or reparametrization, it can have a {\bf low-dependency structure} in the sense that $\sqrt{\pi_{Y,\Theta}}$ can be accurately approximated by a TT $g_{Y,\Theta}$, where $\text{rank}_\text{TT}(g_{Y,\Theta}) =(r_{-m},\hdots,r_i,\hdots,r_n)$ quickly decrease to 1 in the left tail coordinates ($i\rightarrow -m$) and in the right tail coordinates ($i\rightarrow n$).

We propose to exploit the derivative information of the joint density $\pi_{Y,\Theta}$ to detect its low-dependency structure and to reorder/reparametrize the variables. As a baseline to detect the desired low-dependency structure in \eqref{eq:pi_with_independence_structure}, we define a reference density 
\begin{equation}\label{eq:rho}
 \rho_{Y,\Theta}(y,\theta) = \rho_{Y}(y) \rho_{\Theta}(\theta), \;\; {\rm where} \;\;
 \rho_{Y}(y) = \prod_{i=1}^{m} \pi_{Y_i|\Theta=\theta_0}(y_i) \;\; {\rm and} \;\; \rho_{\Theta}(\theta) = \prod_{j=1}^{n} \pi_{\Theta_j}(\theta_j), 
\end{equation}
where $\theta_0\in\R^\lowsup{n}$ is an anchor variable. In our analysis, any product-form density $\rho_{Y,\Theta}(y,\theta)=\rho_{Y_m}(y_m)\hdots \rho_{\Theta_n}(\theta_n)$ can be used. The construction in \eqref{eq:rho} is a natural choice in the Bayesian context, as the likelihood function conditioned on the anchor variable and the prior density are often known analytically. 
Then, we measure the dependency among various coordinates of $(Y,\Theta)$ using the sensitivity matrices 
\begin{align}
 H_Y      &= \int \bigg(\nabla_y      \log\frac{\pi_{Y,\Theta}}{\rho_{Y,\Theta}}   \bigg)\bigg(  \nabla_y  \log\frac{\pi_{Y,\Theta}}{\rho_{Y,\Theta}} \bigg)^\top \d\pi_{Y,\Theta}    ,\label{eq:defHY} \\
 H_\Theta &= \int \bigg(\nabla_\theta \log\frac{\pi_{Y,\Theta}}{\rho_{Y,\Theta}}   \bigg)\bigg(  \nabla_\theta  \log\frac{\pi_{Y,\Theta}}{\rho_{Y,\Theta}} \bigg)^\top \d\pi_{Y,\Theta}, \label{eq:defHTheta}
\end{align}
which bounds the statistical divergence of $\pi_{Y,\Theta}$ from the product-form $\rho_{Y,\Theta}(y,\theta)$ that represents independence. Note that $\trace(H_\Theta){+}\trace(H_Y)=\E_{\pi_{Y,\Theta}}[\| \nabla\log\pi_{Y,\Theta} - \nabla\log\rho_{Y,\Theta} \|^\lowsup{2}]$ is known as the relative Fisher information and can be used to bound the KL divergence $\DKL(\pi_{Y,\Theta}||\rho_{Y,\Theta})$, see \cite{otto2000generalization}. The following proposition offers a viable path to reordering variables according to the diagonals of $H_\Theta$ and $H_Y$. 

\begin{proposition}\label{prop:bound_Hellinger}
For any truncated data rank $s \leq m$ and truncated parameter rank $t \leq n$, the lower-dimensional density function \reone{$\widetilde \pi_{Y_{s:1},\Theta_{1:t}} : \R^s\times \R^t \rightarrow \R$} given by
\canceleq{\begin{align*}
\msout{
  \widetilde \pi_{Y_{\leq s},\Theta_{\leq t}}(y_{\leq s},\theta_{\leq t})
  \propto \left(\int \left(\frac{\pi_{Y,\Theta}(y,\theta)}{\rho_{Y,\Theta}(y,\theta)}\right)^\frac12  \rho_{Y_{> s},\Theta_{> t}}(y_{> s},\theta_{> t}) \d y_{>s}\d\theta_{>t} \right)^2 \rho_{Y_{\leq s},\Theta_{\leq t}}(y_{\leq s},\theta_{\leq t}) 
}
\end{align*}}
\reone{
 \begin{align}
  &\widetilde \pi_{Y_{s:1},\Theta_{1:t}}(y_{s:1},\theta_{1:t}) \nonumber \\
  &\qquad \propto \left(\int \left(\frac{\pi_{Y,\Theta}(y,\theta)}{\rho_{Y,\Theta}(y,\theta)}\right)^\frac12  \rho_{Y_{m:s+1},\Theta_{t+1,n}}(y_{m:s+1},\theta_{t+1:n}) \d y_{m:s+1}\d\theta_{t+1:n} \right)^2 \rho_{Y_{s:1},\Theta_{1:t}}(y_{s:1},\theta_{1:t}) \nonumber 
 \end{align}
 }
 is a minimizer of $\widetilde \pi_{Y_{s:1},\Theta_{1:t}} \mapsto \DH(\pi_{Y,\Theta},\widetilde\pi_{Y,\Theta})$, where 
 \begin{equation}\label{eq:api_with_independence_structure}
\widetilde\pi_{Y,\Theta}(y,\theta) = \bigg( \prod_{i=s+1}^{m} \rho_{Y_i}(y_i) \bigg) \reone{\widetilde\pi_{Y_{s:1},\Theta_{1:t}}\left(y_{s:1},\theta_{1:t}\right)}
 \bigg(\prod_{j=t+1}^{n} \rho_{\Theta_j}(\theta_j) \bigg).
\end{equation}
 Furthermore, if we assume that $\rho_{Y,\Theta}$ satisfies the Poincaré inequality, i.e., there exists a constant $\kappa<\infty$ so that  $\Var_{\rho_{Y,\Theta}}(h) \leq \kappa \, \E_{\rho_{Y,\Theta}}(\|\nabla h\|_2^\lowsup{2})$ holds for any sufficiently smooth function $h:\R^\lowsup{n{+}m}\rightarrow\R$, then we have
 \begin{equation}\label{eq:bound_Hellinger}
  \DH(\pi_{Y,\Theta},\widetilde\pi_{Y,\Theta})^2
  \leq \frac{\kappa}{4} \bigg(\sum_{i=s+1}^{m} (H_Y)_{ii} + \sum_{j=t+1}^{n} (H_\Theta)_{jj} \bigg). 
 \end{equation}
\end{proposition}

The proof is given in Section \ref{sec:proofs}. This proposition suggests to sort the variables $\theta_{1},\hdots,\theta_{n}$ and $y_{1},\hdots,y_{m}$ such that the diagonals of the sensitivity matrices, $i\mapsto(H_Y)_{ii}$ and $j\mapsto(H_\Theta)_{ii}$, are decreasing. This way, the right-hand-side of \eqref{eq:bound_Hellinger} is minimized for any $s$ and $t$.
This proposition also suggests that one only needs to build the TT decomposition of \reone{$\widetilde\pi_{Y_{s:1},\Theta_{1:t}}$} after truncating the variables $Y$ and $\Theta$ to the first $s$ and $t$ components, while controlling the resulting truncation error by \eqref{eq:bound_Hellinger}.

\subsection{Variable reparametrization}

In addition to variable reordering, we can also reparametrize the variables $y$ and $\theta$ to further reduce the interdependence among $(Y,\Theta)$.
To preserve the product-form structure of the reference density in \eqref{eq:rho} after reparametrization, we restrict the analysis to Gaussian prior and Gaussian observation error. %

\begin{proposition}\label{prop:bound_Hellinger_rotation}
Assume the joint density takes the form 
 \begin{equation}\label{eq:PiGaussian}
  \pi_{Y,\Theta}(y,\theta)\propto \exp \Big( -\frac12\|y-G(\theta)\|_2^2 - \frac12\|\theta-\theta_0\|_2^2\Big) , 
 \end{equation}
 where $\theta\mapsto G(\theta)\in\R^\lowsup{m}$ is a forward operator mapping the parameter $\theta$ to observables and $\|\cdot\|_2$ is the Euclidean norm.
The matrices $H_Y$ and $H_\Theta$ are given by
\begin{equation}\label{eq:HwithGaussianJoint}
 H_Y \! = \! \int \!\! \big( G(\theta) - G(\theta_0)\big)\big( G(\theta) - G(\theta_0)\big)^\top \!\! \d \pi_\Theta \quad\text{and}\quad H_\Theta \! = \! \int \! \nabla G(\theta)^\top \nabla G(\theta)  \d \pi_{\Theta},
\end{equation}
respectively. Let $A\in\R^\lowsup{m{\times} m} $ and $B\in\R^\lowsup{n{\times} n}$ be unitary matrices containing the eigenvectors corresponding to decreasing eigenvalues of $H_Y$ and $H_\Theta$, respectively.
 Then there exists a density function $\widetilde \pi_{Y,\Theta}^\lowsup{A,B}(y,\theta) = \widetilde \pi_{Y,\Theta}(A^\lowsup{\top}y,B^\lowsup{\top}\theta ),$ with $\widetilde\pi_{Y,\Theta}$ as in \eqref{eq:api_with_independence_structure} such that
 \begin{equation}\label{eq:bound_Hellinger_rotation}
  \DH(\pi_{Y,\Theta},\widetilde\pi_{Y,\Theta}^\lowsup{A,B})^2
  \leq \frac14 \bigg(\sum_{i=s+1}^{m} \lambda_i(H_Y)+ \sum_{i=t+1}^{n} \lambda_j(H_\Theta)  \bigg) , 
 \end{equation}
 holds for any truncated data rank $s\leq m$ and truncated parameter rank $t\leq n$, where $\lambda_i(\cdot)$ denotes the $i$-th largest eigenvalue.
\end{proposition}

The proof is given in Section \ref{sec:proofs}.
Proposition \ref{prop:bound_Hellinger_rotation} provides an intuitive guideline that we can reparametrize $Y$ using the leading principal components of the model outputs $G(\Theta)$  and reparametrize $\Theta$ using the directions in which the linearized model $\nabla G(\Theta)$ varies the most. Proposition \ref{prop:bound_Hellinger_rotation} also certifies the dimension truncation after  reparametrization (by choosing $s{<}m$ and $t{<}n$) using the eigenvalues of the sensitivity matrices $H_Y$ and $H_\Theta$.
Although the result of Proposition \ref{prop:bound_Hellinger_rotation} is based on Gaussian assumptions, for problems equipped with non-Gaussian priors and non-Gaussian observation errors, there may exist analytical transformations that can reparametrize the joint density into the form of \eqref{eq:PiGaussian} before applying Proposition \ref{prop:bound_Hellinger_rotation}. 


\begin{remark}[Whitening transform]
For problems which are not equipped with Euclidean norms as in \eqref{eq:PiGaussian}, one can apply whitening transforms in order to apply Proposition \eqref{prop:bound_Hellinger_rotation}.
As a starting point, we consider a general parameter $\Xi \in \R^{n}$ equipped with a Gaussian prior $\mathcal{N}(\xi_0, \Gamma)$, where $\xi_0 \in \R^{n}$ and $\Gamma \in \R^{ n \times n }$ are the mean vector and covariance matrix, respectively. Given a smooth parameter-to-observable map $\xi\mapsto \Phi(\xi)\in\R^\lowsup{m}$, we assume the observable random variable $Z$ follows $\mathcal{N}(\Phi(\xi), \Sigma)$. Then the joint density takes the form
\begin{equation}\label{eq:PiGaussianColored}
  \pi_{Z,\Xi}(z,\xi)\propto \exp \Big( -\frac12\|z-\Phi(\xi)\|_{\Sigma}^2 - \frac12\|\xi-\xi_0\|_{\Gamma}^2\Big) ,
\end{equation}
where $\| v \|_\Gamma = \sqrt{v^\lowsup{\top} \Gamma^\lowsup{-1} v}$ is the matrix weighted norm.
Then, the whitening transforms
\[
Y = \Sigma^{-\frac12} Z, \quad {\rm and} \quad \Theta = \Gamma^{-\frac12} \Xi,
\]
permits one to obtain a joint density as in \eqref{eq:PiGaussian} where $G(\theta) := \Sigma^{-\frac12}\Phi(\Gamma^{\frac12}\theta)$ and $\theta_0 = \Gamma^{-\frac12} \xi_0$.
This change of variable makes Proposition \eqref{prop:bound_Hellinger_rotation} applicable to joint densities in the form of \eqref{eq:PiGaussianColored}.

\end{remark}

\subsection{Proofs of Proposition \ref{prop:bound_Hellinger} and Proposition \ref{prop:bound_Hellinger_rotation}}\label{sec:proofs}
\retwo{
The proofs of Proposition \ref{prop:bound_Hellinger} and Proposition \ref{prop:bound_Hellinger_rotation} rely on the following lemmas which are given for a general probability densities on $\R^d$. To denote operations such as marginalization, we define an index set $\gamma \subseteq [d]:=\{1, 2, \ldots, d\}$ and let $\gamma^c = [d] \setminus \gamma$. For a subset of parameters indexed by $\gamma$, we defined it as $x_\gamma = (x_{\gamma_1}, \ldots, x_{\gamma_{|\gamma|}})$, where $|\gamma|$ is the cardinality of $\gamma$.

\begin{lemma}\label{prop:defG_optimal}
Let $\pi$ and $\rho$ be probability densities on $\R^d$. For any index set $\gamma \subseteq [d]$, let $\widetilde\pi^*$ be the probability density on $\R^d$ defined by $\widetilde\pi^*(x) \propto g(x_{\gamma})^\lowsup{2}\rho(x)$, where $g:\R^{|\gamma|}\rightarrow\R_{\geq0}$ is given by
\begin{equation}\label{eq:defG_optimal}
    g(x_{\gamma}) = \int \left( \frac{\pi(x_{\gamma},x_{\gamma^c})}{\rho(x_{\gamma},x_{\gamma^c})}\right)^\frac12\rho_{\gamma^c|\gamma}(x_{\gamma^c}|x_{\gamma})\,\d x_{\gamma^c}.
\end{equation}
Here, $\rho_{\gamma^c|\gamma}(x_{\gamma^c}|x_{\gamma}) = \rho(x_{\gamma},x_{\gamma^c}) / \rho_{\gamma}(x_{\gamma})$ is a conditional probability density.
For any function $\widetilde g:\R^{|\gamma|}\rightarrow\R_{\geq0}$ such that $\widetilde\pi(x) \propto \widetilde g(x_{\gamma})^\lowsup{2}\rho(x)$ is a probability density, we have
\begin{equation}\label{eq:Pythagore}
    \DH(\pi,\widetilde\pi)^2 = \DH(\pi,\widetilde\pi^*)^2 + \|g\sqrt{\rho(x)}\| \DH(\widetilde\pi^*,\widetilde\pi)^2 .
\end{equation}
In particular $\DH(\pi,\widetilde\pi^*) \leq \DH(\pi,\widetilde\pi)$ holds for any $\widetilde g$ and equality is attained when $\DH(\widetilde\pi^*,\widetilde\pi)=0$, meaning that $\widetilde g=g$ in the $L^2$-sense.
\end{lemma}

\begin{proof}
 See \ref{proof:defG_optimal}.
\end{proof}

\begin{lemma}\label{prop:bound_Hellinger_Poincare}
 Let $\rho(x) = \rho_1(x_1)\hdots\rho_d(x_d) $ be a product-form probability density on $\R^d$ that satisfies the Poincaré inequality, that is, there exists a constant $\kappa<\infty$ such that
 \begin{equation}\label{eq:PoincareRho}
   \int \Big(h(x) - \int h(x) \rho(x)\d x \Big)^2 \rho(x)\d x \leq \kappa \int \| \nabla h(x) \|^2 \rho(x)\d x,
 \end{equation}
 holds for any sufficiently smooth function $h:\R^d\rightarrow\R$, where $\|\cdot\|$ denotes the euclidean norm.
 Let $\pi$ be a probability density on $\R^d$. Given some reduced dimensional index set $\gamma\in[d]$, consider the probability density $\widetilde\pi^*$ on $\R^d$ defined by
 $\widetilde\pi^*(x) \propto g(x_{\gamma})^2  \rho(x)$ where $g:\R^{|\gamma|}\rightarrow\R_{\geq0}$ is defined by
 \begin{equation}\label{eq:defG_proof}
  g(x_{\gamma}) = \int \left(\frac{\pi(x_{\gamma},x_{\gamma^c})}{\rho(x_{\gamma},x_{\gamma^c})}\right)^\frac12\rho_{\gamma^c}(x_{\gamma^c})\d x_{\gamma^c} .
 \end{equation}
 Then we have
 \begin{equation}\label{eq:bound_Hellinger_BIS}
  \DH(\pi,\widetilde\pi^*)^2
  \leq \frac{\kappa}{4} \sum_{i=s+1}^{d} H_{ii} ,
 \end{equation}
 where $H = \int \big(\nabla\log\frac{\pi(x)}{\rho(x)}\big)\big(\nabla\log\frac{\pi(x)}{\rho(x)}\big)^\top \pi(x) \d x$.

\end{lemma}

\begin{proof}
 See \ref{proof:bound_Hellinger_Poincare}.
\end{proof}

A straightforward application of the above lemmas leads to the proof of Proposition \ref{prop:bound_Hellinger}.

\begin{proof}[Proof of Proposition \ref{prop:bound_Hellinger}]
Recall that the data $y$ and the parameter $\theta$ have dimensions $m$ and $n$, respectively. We denote the reduced dimensions of data and parameter by $s$ and $t$, respectively. Proposition \ref{prop:bound_Hellinger} is a direct application of Lemmas \ref{prop:defG_optimal} and \ref{prop:bound_Hellinger_Poincare} by setting
\begin{align*}
 x &= (y_{m}, y_{m-1}, \ldots, y_1, \theta_1, \ldots, \theta_{n-1}, \theta_n)\in\R^d, &  d &=m + n, \\
 x_{\gamma} &= (y_{s},y_1,\hdots,\theta_1,\hdots,\theta_{t})\in\R^\lowsup{|\gamma|}, & \gamma &= \{m{-}s{+}1, \ldots, m, m{+}1, \ldots, m{+}t\} ,
\end{align*}
and defining the probability densities \(\pi(x) = \pi_{Y,\Theta}(y, \theta),\) and\(\rho(x) = \rho_{Y,\Theta}(y,\theta).\) 
\end{proof}

We now give the proof for Proposition \ref{prop:bound_Hellinger_rotation}.

\begin{proof}[Proof of Proposition \ref{prop:bound_Hellinger_rotation}]
First, we show that the joint density \eqref{eq:PiGaussian} yields the sensitivity matrices as in \eqref{eq:HwithGaussianJoint}.
Because the joint density $\pi_{Y,\Theta}$ and the reference density $\rho_{Y,\Theta}$ as in \eqref{eq:rho} write
\begin{align*}
 \pi_{Y,\Theta}(y,\theta)&\propto \exp \Big( -\frac12\|y-G(\theta)\|_2^2 - \frac12\|\theta-\theta_0\|_2^2\Big)  \\
 \rho_{Y,\Theta}(y,\theta)&\propto \exp \Big( -\frac12\|y-G(\theta_0)\|_2^2 - \frac12\|\theta-\theta_0\|_2^2\Big) ,
\end{align*}
we obtain
\[
 \log \left(\frac{\pi_{Y,\Theta}}{\rho_{Y,\Theta}}(y, \theta)\right) = C + \frac12 \left\| y - G(\theta_0) \right\|^2_2  - \frac12 \left\| y - G(\theta) \right\|^2_2 ,
\]
for some constant $C$, so that
\[
\nabla_y\log \left(\frac{\pi_{Y,\Theta}}{\rho_{Y,\Theta}}(y, \theta)\right) = G(\theta) - G(\theta_0) ,\quad {\rm and} \quad \nabla_\theta  \log \left(\frac{\pi_{Y,\Theta}}{\rho_{Y,\Theta}}(y, \theta)\right) = \nabla G(\theta)^\top \left( y - G(\theta) \right),
\]
where $\nabla G(\theta) \in \R^{m \times n} $ is the Jacobian of the forward model at $\theta$.
Then, the matrix $H_Y$ defined in \eqref{eq:defHY} can be expressed as
\begin{align*}
H_Y
& = \int \big( G(\theta) - G(\theta_0)\big)\big(G(\theta) - G(\theta_0)\big)^\top \pi_{Y,\Theta}(y, \theta) \d y \d \theta \\
& = \int \big( G(\theta) - G(\theta_0)\big)\big(G(\theta) - G(\theta_0)\big)^\top \bigg(\int  \pi_{Y|\Theta}(y|\theta) \d y \bigg) \pi_{\Theta}(\theta) \d \theta \\
& = \int \big( G(\theta) - G(\theta_0)\big)\big(G(\theta) - G(\theta_0)\big)^\top \d \pi_{\Theta} .
\end{align*}
The matrix $H_\Theta$ defined in \eqref{eq:defHTheta} as
\begin{align*}
H_\Theta
& = \int  \nabla G(\theta)^\top  \left( y - G(\theta) \right) \left( y - G(\theta) \right)^\top  \nabla G(\theta) \pi_{Y,\Theta}(y, \theta) \d y \d \theta, \\
& = \int \nabla G(\theta)^\top \left( \int \left( y - G(\theta) \right) \left( y - G(\theta) \right)^\top \pi_{Y|\Theta}(y|\theta) \d y \right) \nabla G(\theta)  \pi_{\Theta}(\theta)\d \theta \\
&= \int \nabla G(\theta)^\top \nabla G(\theta)  \d \pi_{\Theta},
\end{align*}
where, for the last equality, we used the fact that $Y|\Theta=\theta=\theta \sim\mathcal{N}(G(\theta),I_d)$ has, by assumption \eqref{eq:PiGaussian}, an identity covariance matrix.

The rest of the proof consists essentially in applying Proposition \ref{prop:bound_Hellinger} after the change of variables
\begin{align*}
 \bar y = A^\top y\quad {\rm and } \quad \bar\theta = B^\top\theta,
\end{align*}
where $A\in\R^\lowsup{m\times m}$ and $B\in\R^\lowsup{n\times n}$ are the unitary matrices containing the eigenvectors (with corresponding eigenvalues sorted in the decreasing order) of $H_Y$ and $H_\Theta$ respectively.
Because $\rho_{Y,\Theta}$ is Gaussian with identity covariance, for $(U_\Theta,U_Y)\sim\rho_{Y,\Theta}$, the reparametrized random variable $(A^\lowsup{\top}U_Y, B^\lowsup{\top}U_\Theta)$ is also Gaussian with identity covariance.
Since any Gaussian density with identity covariance matrix satisfies Poincar\'{e} inequality with $\kappa=1$, see \cite{chernoff1981note}, Proposition \ref{prop:bound_Hellinger} ensures there exists a probability density $\widetilde\pi_{\bar Y, \bar \Theta}(\bar y, \bar \theta)$ as in \eqref{eq:pi_with_independence_structure} such that
\begin{equation}\label{eq:peitq}
 \DH\left(\pi_{\bar Y, \bar \Theta} \,,\, \widetilde\pi_{\bar Y, \bar \Theta}\right)^2 \leq \frac14 \left(\sum_{i=s+1}^{m} (H_Y^{A})_{ii} + \sum_{j=t+1}^{n} (H_\Theta^B)_{jj} \right),
\end{equation}
where $\pi_{\bar Y, \bar \Theta}$ is the probability density of $(A^\lowsup{\top}Y,B^\lowsup{\top}\Theta)$
and where $H_Y^\lowsup{A}$ and $H_\Theta^\lowsup{B}$ are given by
\begin{align*}
 H_Y^{A}       &= \int \left(\nabla_y      \log\frac{\pi_{\bar Y, \bar \Theta}}{\rho_{\bar Y, \bar \Theta}}   \right)\left(  \nabla_y  \log\frac{\pi_{\bar Y, \bar \Theta}}{\rho_{\bar Y, \bar \Theta}} \right)^\top\d\pi_{\bar Y, \bar \Theta} ,  \\
 H_\Theta^{B} &= \int \left(\nabla_\theta \log\frac{\pi_{\bar Y, \bar \Theta}}{\rho_{\bar Y, \bar \Theta}}   \right)\left(  \nabla_\theta  \log\frac{\pi_{\bar Y, \bar \Theta}}{\rho_{\bar Y, \bar \Theta}} \right)^\top \d\pi_{\bar Y, \bar \Theta}  .
\end{align*}
Applying the chain rule, we have $H_Y^{A} = A^\top H_Y A$ and $H_\Theta^{B} = B^\top H_\Theta B$ and, because $A$ and $B$ contain the eigenvectors of $H_Y$ and $H_\Theta$, the inequality in \eqref{eq:peitq} becomes
$$
 \DH\left(\pi_{\bar Y, \bar \Theta} \,,\, \widetilde\pi_{\bar Y, \bar \Theta}\right)^2 \leq \frac14 \left(\sum_{i=s+1}^{m} \lambda_i(H_Y)+ \sum_{j=t+1}^{n} \lambda_j(H_\Theta)  \right) .
$$
Finally, since the change of variables $\bar y = A^\top y$ and $\bar\theta = B^\top\theta$ are isometries, we have
$$
 \DH\left(\pi_{\bar Y, \bar \Theta} \,,\, \widetilde\pi_{\bar Y, \bar \Theta}\right)^2
 = \DH(\pi_{Y,\Theta},\widetilde\pi_{Y,\Theta}^{A,B})^2,
$$
where $\widetilde\pi_{Y,\Theta}^{A,B}(y,\theta) := \widetilde\pi_{Y,\Theta}(A^\lowsup{\top}y,B^\lowsup{\top}\Theta)$.
This concludes the proof.
\end{proof}
}

\section{Deep inverse Rosenblatt transport}\label{sec:DIRT}

For target densities that are concentrated to a small subdomain or have complicated correlation structures, it can be challenging to construct in one step a TT approximation to $\sqrt{\pi_{Y,\Theta}}$. DIRT alleviates this difficulty by building a composition of transport maps 
$$
 \mathcal{T}_L = \mathcal{Q}_0 \circ \mathcal{Q}_1 \circ \hdots\circ \mathcal{Q}_L,
$$
guided by a sequence of bridging densities
\[
 \pi_{Y,\Theta}^\lowsup{0} ,\, \pi_{Y,\Theta}^\lowsup{1} ,\,\hdots,\, \pi_{Y,\Theta}^\lowsup{L}\coloneqq \pi_{Y,\Theta},
\]
with increasing complexity. 
For example, using the tempering technique \cite[see, e.g.,][]{gelman1998simulating,neal1996sampling,swendsen1986replica}, one may consider the bridging densities $\pi_{Y,\Theta}^\lowsup{\ell} \propto \varphi_{Y,\Theta}^\lowsup{\beta_\ell}\, \rho_{Y,\Theta}$, where $\varphi_{Y,\Theta} = \pi_{Y,\Theta} / \rho_{Y,\Theta}$, $0 = \beta_0 \leq \beta_1\leq \cdots\leq \beta_L=1$ are some chosen temperatures, and $\rho_{Y,\Theta}$ as in \eqref{eq:rho}. 

We propose to build the layers $\mathcal{Q}_0 ,\hdots, \mathcal{Q}_L$ in a greedy way. For each $0\leq\ell\leq L$, the composed map $\mathcal{T}_{\ell}=\mathcal{Q}_0 \circ \hdots\circ \mathcal{Q}_\ell$ yields an approximation $p_{Y,\Theta}^\lowsup{\ell} := (\mathcal{T}_{\ell})_\sharp \rho_{Y,\Theta}$ to the bridging density $\pi_{Y,\Theta}^\lowsup{\ell}$. 
At each iteration, a new layer of map $\mathcal{Q}_{\ell{+}1}$ is constructed as a SIRT so that 
the new pushforward density $p_{Y,\Theta}^\lowsup{\ell{+}1} = (\mathcal{T}_\ell \circ \mathcal{Q}_{\ell{+}1})_\sharp \rho_{Y,\Theta}^\lowsup{\ell}$ approximates the next bridging density $\pi_{Y,\Theta}^\lowsup{\ell{+}1}$ in a way that
\[
\DH( p_{Y,\Theta}^\lowsup{\ell{+}1} , \pi_{Y,\Theta}^\lowsup{\ell{+}1}  ) \leq \omega \, \DH (  p_{Y,\Theta}^\lowsup{\ell},  \pi_{Y,\Theta}^\lowsup{\ell{+}1} ),
\] 
for some relative error bound $\omega <1$. This is equivalent to
\begin{equation}\label{eq:Greedy_DIRT}
 \DH\big( (\mathcal{Q}_{\ell{+}1})_\sharp \rho_{Y,\Theta} , \mathcal{T}_{\ell}^\sharp \pi_{Y,\Theta}^{\ell{+}1} \big)
 \leq \omega \, \DH\big( \rho_{Y,\Theta} , \mathcal{T}_{\ell}^\sharp \pi_{Y,\Theta}^{\ell{+}1} \big).
\end{equation}
Thus, in each layer, we can first decompose the square root of the pullback of the next bridging density, $\sqrt{\mathcal{T}_{\ell}^\sharp \pi_{Y,\Theta}^{\ell{+}1}}$, in the TT format, and then build the next map $\mathcal{Q}_{\ell{+}1}$ as an SIRT. The following proposition shows the convergence of the greedy construction of DIRT. 

\begin{proposition}\label{prop:DIRT_CV}
Assume there exist constants $\eta(L) < 1$ and $\omega<1$ such that the bridging densities satisfy
\begin{equation}\label{eq:BoundOnBridgingPdfs}
   \sup_{0 \leq \ell < L} \DH\big( \pi_{Y,\Theta}^{\ell} ,  \pi_{Y,\Theta}^{\ell+1} \big) = \eta(L),
\end{equation}
the initial map $\mathcal{Q}_0$ satisfies $\DH( (\mathcal{Q}_0)_\sharp \rho_{Y,\Theta} , \pi_{Y,\Theta}^{0} ) \leq \omega \eta(L)$, and the maps $\mathcal{Q}_{1},\hdots,\mathcal{Q}_{L}$ satisfy \eqref{eq:Greedy_DIRT} for any $\ell$.
Then the resulting composite map $\mathcal{T}_L=\mathcal{Q}_{0}\circ\mathcal{Q}_{1}\circ\hdots\circ\mathcal{Q}_{L}$ defines a pushforward density $p_{Y,\Theta} = (\mathcal{T}_L)_\sharp \rho_{Y,\Theta} $ that satisfies
\begin{equation}\label{eq:DIRT_CV}
  \DH\big( p_{Y,\Theta} , \pi_{Y,\Theta} \big) \leq \frac{\omega}{1-\omega} \, \eta(L).
\end{equation}
\end{proposition}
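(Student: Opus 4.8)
The plan is to collapse \eqref{eq:DIRT_CV} into a one-dimensional recursion on the layerwise errors $e_\ell \coloneqq \DH\big(p_{Y,\Theta}^{\ell},\pi_{Y,\Theta}^{\ell}\big)$ and then sum a geometric series. First I would exploit the equivalence already recorded in the text between \eqref{eq:Greedy_DIRT} and the relation
\[
 e_{\ell+1} = \DH\big( p_{Y,\Theta}^{\ell+1}, \pi_{Y,\Theta}^{\ell+1} \big) \leq \omega\, \DH\big( p_{Y,\Theta}^{\ell}, \pi_{Y,\Theta}^{\ell+1} \big), \qquad 0 \leq \ell < L,
\]
which rests on the fact that $\DH$ is exactly preserved (not merely non-increased) under a common pushforward by the bijection $\mathcal{T}_\ell$; the hypothesis on the initial map reads $e_0 \leq \omega\,\eta(L)$ in the same notation.

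Next I would insert the intermediate bridging density $\pi_{Y,\Theta}^{\ell}$ and use that $\DH$ is a genuine metric on densities, so the triangle inequality together with \eqref{eq:BoundOnBridgingPdfs} gives
\[
 \DH\big( p_{Y,\Theta}^{\ell}, \pi_{Y,\Theta}^{\ell+1} \big) \;\leq\; \DH\big( p_{Y,\Theta}^{\ell}, \pi_{Y,\Theta}^{\ell} \big) + \DH\big( \pi_{Y,\Theta}^{\ell}, \pi_{Y,\Theta}^{\ell+1} \big) \;\leq\; e_\ell + \eta(L).
\]
Combining the two displays yields the contraction-type recursion $e_{\ell+1} \leq \omega\big(e_\ell + \eta(L)\big)$ for $0 \leq \ell < L$, started from $e_0 \leq \omega\,\eta(L)$.

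Then I would close the argument by induction on $\ell$, proving the explicit bound $e_\ell \leq \eta(L)\sum_{k=1}^{\ell+1}\omega^k$: the base case is the bound on $e_0$, and the inductive step is
\[
 e_{\ell+1} \leq \omega\big(e_\ell + \eta(L)\big) \leq \omega\Big(\eta(L)\textstyle\sum_{k=1}^{\ell+1}\omega^k + \eta(L)\Big) = \eta(L)\textstyle\sum_{k=1}^{\ell+2}\omega^k.
\]
Since $\omega < 1$, the partial sums are dominated by the full geometric series $\sum_{k=1}^{\infty}\omega^k = \omega/(1-\omega)$, so specializing to $\ell = L$ gives $e_L \leq \eta(L)\sum_{k=1}^{L+1}\omega^k \leq \tfrac{\omega}{1-\omega}\,\eta(L)$, which is \eqref{eq:DIRT_CV}.

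I do not anticipate a real obstacle: this is a standard contraction-plus-triangle-inequality estimate, and $\omega<1$ is the only structural ingredient needed for the series to converge. The one point that deserves care is the very first step — justifying that the change of variables induced by $\mathcal{T}_\ell$ leaves $\DH$ invariant, which requires $\mathcal{T}_\ell$ to be an invertible map; this holds because each layer $\mathcal{Q}_\ell$ is an inverse Rosenblatt transport, hence a bijection, and a composition of bijections is a bijection. A minor secondary point is keeping the per-layer reference densities consistent with the global reference in the definition $p_{Y,\Theta}^{\ell} = (\mathcal{T}_\ell)_\sharp \rho_{Y,\Theta}$, but since the proof only invokes the equivalence between \eqref{eq:Greedy_DIRT} and the bound on $e_{\ell+1}$ stated in the text, this bookkeeping does not affect the estimates.
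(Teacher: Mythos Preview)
Your proposal is correct and follows essentially the same approach as the paper: derive the recursion $e_{\ell+1}\leq\omega(e_\ell+\eta(L))$ from the greedy bound plus the triangle inequality for $\DH$, then unroll it into a geometric series. The paper unrolls the recursion directly rather than phrasing it as an induction on the partial-sum formula, but the content is identical.
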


The proof is given in \ref{proof:DIRT_CV}. Under mild assumptions, \ref{sec:dirt_fix} shows that tempered bridging densities $\pi_{Y,\Theta}^\lowsup{\ell} \propto \varphi_{Y,\Theta}^\lowsup{\beta_\ell}\, \rho_{Y,\Theta}$ with uniformly spaced temperatures $\beta_\ell=\ell/L$ satisfy \eqref{eq:BoundOnBridgingPdfs} with $\eta(L)=\mathcal{O}(1/L)$. 
Algorithm \ref{alg:dirt} details the construction of DIRT with a controlled error, in which each layer of the composition $\mathcal{Q}_{\ell{+}1}$ is constructed using the SIRT Algorithm \ref{alg:sirt}. The resulting map $\mathcal{T}_L$ is lower-triangular as a composition of lower-triangular maps.

\begin{algorithm}[h]
\caption{Deep Inverse Rosenblatt Transform (DIRT).}\label{alg:dirt}
\begin{algorithmic}[1]
\Procedure{DIRT}{$\rho_{Y, \Theta}$, $\{\pi_{Y,\Theta}^\lowsup{\ell}\}_{\ell=1}^\lowsup{L}$, $\varepsilon$} 
\Comment{$\varepsilon \leq \omega\eta(L)$}
\State $\mathcal{T}_{0} \gets$ \Call{SIRT}{$\rho_{Y,\Theta}$, $\pi_{Y,\Theta}^\lowsup{0}$, $\varepsilon$} \Comment{$\mathcal{Q}_{0}\equiv\mathcal{T}_{0}$.}
\For{$\ell = 0, \ldots, L-1$}
\State $\mathcal{Q}_{\ell+1} \gets$ \Call{SIRT}{$\rho_{Y,\Theta}$, $\mathcal{T}_{\ell}^\lowsup{\sharp}\pi_{Y,\Theta}^\lowsup{\ell+1}$, $\varepsilon$}
\State $\mathcal{T}_{\ell+1} \gets \mathcal{T}_{\ell}\circ \mathcal{Q}_{\ell+1}$
\EndFor
\State \Return $\mathcal{T}_L$
\EndProcedure
\end{algorithmic}
\end{algorithm}

\section{Online inference}\label{sec:online}
In the offline phase, we apply the variable reordering/reparametrization discussed in Section \ref{sec:Variable_ordering}, followed by DIRT in Section \ref{sec:DIRT} to build the triangular map 
\[
  \left[ \begin{array}{l} y \\ \theta \end{array}\right] = \mathcal{T}(u_Y, u_\Theta) = \left[ \begin{array}{l}\mathcal{T}_Y(u_Y) \\ \mathcal{T}_{\Theta}(u_Y, u_\Theta) \end{array}\right],
\]
where
\canceleq{\[
\msout{
\mathcal{T}_Y(u_Y) = \left[ \begin{array}{r} u_{Y_{>s}} \phantom{)} \\ \mathcal{T}_{Y_{\leq s}}(u_{Y_{\leq s}})\end{array}\right] \quad \text{and} \quad \mathcal{T}_\Theta(u_\Theta) = \left[ \begin{array}{r} \mathcal{T}_{\Theta_{\leq t}}(u_{Y_{\leq s}}, u_{\Theta_{\leq t}}) \\ u_{\Theta_{>t}} \phantom{)} \end{array}\right] ,}
\]}
\[
\reone{
\mathcal{T}_Y(u_Y) = \left[ \begin{array}{r} u_{Y_{m:s+1}} \phantom{)} \\ \mathcal{T}_{Y_{s:1}}(u_{Y_{s:1}})\end{array}\right] \quad \text{and} \quad \mathcal{T}_\Theta(u_Y,u_\Theta) = \left[ \begin{array}{r} \mathcal{T}_{\Theta_{1:t}}(u_{Y_{s:1}}, u_{\Theta_{1:t}}) \\ u_{\Theta_{t+1:n}} \phantom{)} \end{array}\right] ,
}
\]
so that $\mathcal{T}_\sharp \rho_{Y,\Theta}$ approximates the joint density $\pi_{Y,\Theta}$ with a controlled error. In the online phase, the pushforward of $\rho_Y$ under the marginal map $\mathcal{T}_Y$ and the pushforward of $\rho_\Theta$ under the conditional map $\mathcal{T}_{\Theta|\Yey}(\cdot) \coloneqq \mathcal{T}_{\Theta}( \mathcal{T}_Y^\lowsup{-1}(y), \cdot )$ for given data $y$ respectively take the form 
\begin{align}
p_{Y} (y) & := \big(\mathcal{T}_{Y}\big){}_\sharp \rho_Y (y) =  \rho_Y \big(\mathcal{T}_{Y}^{-1}(y)\big) \big|\nabla \mathcal{T}_{Y}^{-1}(y)\big|,\\
p_{\Theta|\Yey} (\theta|y) & := \big(\mathcal{T}_{\Theta|\Yey}\big){}_\sharp \rho_\Theta (\theta) =  \rho_\Theta \big(\mathcal{T}_{\Theta|\Yey}^{-1}(\theta)\big) \big|\nabla \mathcal{T}_{\Theta|\Yey}^{-1}(\theta)\big|,
\end{align}
which approximate the evidence $\pi_Y$ and the posterior $\pi_{\Theta|\Yey}$, respectively. This way, carrying approximate posterior inference via the conditional map does not incur any additional likelihood function evaluations. For example, we can draw i.i.d. random samples or quasi Monte Carlo (QMC) samples $\{U_\Theta^i\}_{i = 1}^{N_{\rm online}}$ from the reference density $\rho_\Theta$, and then evaluate the conditional map, $\Theta^i = \mathcal{T}_{\Theta|\Yey}(U_\Theta^i)$, to obtain approximate posterior samples. 

Although the conditional map can offer accurate posterior approximations over possible observed data with high probability (cf. the Markov bound in \eqref{eq:MarkovBound}), we may still want to correct for the potential bias in the approximate inference results. One option is to post-process the approximate inference results using weights. Given a set of samples drawn from the approximate posterior, $\{\Theta^i\}_{i = 1}^{N_{\rm online}}$, we can compute the weights 
\[
W^i = \frac{\pi_{\Theta|\Yey}(\Theta^i|y)}{p_{\Theta|\Yey}(\Theta^i|y)}, \quad i = 1, \ldots, N_{\rm online},
\] 
to obtain weighted posterior samples and to estimate posterior expectations of some statistics using importance sampling (IS). In this case, the efficiency of the post-processing is controlled by the divergence of $\pi_{\Theta|\Yey}$ from its approximation $p_{\Theta|\Yey}$.

Another option is to accelerate exact posterior inference using the conditional map as a preconditioner. Since the pushforward density $(\mathcal{T}_{\Theta|\Yey})_\sharp \rho_\Theta$ approximates the posterior density $\pi_{\Theta|\Yey}$, the pullback of the posterior density,
\(
(\mathcal{T}_{\Theta|\Yey}){}^\sharp \pi_{\Theta|\Yey}, 
\)
can be viewed as a perturbation of the reference density $\rho_\Theta$. Instead of directly sampling the complicated posterior density using MCMC, one can simulate Markov chains invariant to the pullback of the posterior density, which has geometry similar to that of the reference density. This makes the design of efficient MCMC proposals a much easier task. For example, with a Gaussian reference density, e.g., $\rho_\Theta = \mathcal{N}(0, \mI)$, we can use the preconditioned Crank-Nicholson (pCN) proposal of \cite{MCMC:BRSV_2008,MCMC:CRSW_2013},
\begin{equation}\label{eq:pcn}
\theta' = \frac{2 - \Delta t}{2 + \Delta t} \, \theta + \frac{2\sqrt{2\Delta t}}{2 + \Delta t}\, u_\Theta, \quad u_\Theta \sim \rho_\Theta, 
\end{equation}
where $\Delta t > 0$ is the step size, to construct Markov chains. The pCN proposal is suitable for sampling perturbations of Gaussian densities, because it has an acceptance probability one for sampling Gaussian densities with arbitrary dimensions.


\section{Numerical examples}\label{sec:numerics}

\subsection{Parameter estimation of an SIR model}

{\bfseries Setup.} We first consider the susceptible-infected-recovered (SIR) model commonly used in epidemiology, which is governed by a system of ODEs
\begin{equation}
 \frac{dS(t)}{dt}  = -\beta S I, \qquad
 \frac{dI(t)}{dt}  = \beta S I - \gamma I, \qquad
 \frac{dR(t)}{dt}  = \gamma I, \label{eq:sir}
\end{equation}
started from $S(0)=99$, $I(0)=1$, $R(0)=0$,
and solved using \texttt{ode45} in Matlab with a relative error threshold $10^{-6}$.
The parameters $\theta = (\beta,\gamma)$ are subject to inference from the posterior distribution, conditional on $4$ noisy observations of the number of infected individuals.
The latter are modelled as
\begin{equation}\label{eq:sir:y}
y = \left(I(1.25), I(2.5), I(3.75), I(5)\right) + e^y, \quad \mbox{where} \quad e^y_i \sim \mathcal{N}(0,1), \quad i=1,\ldots,4.
\end{equation}
Reasonable values of $\theta$ vary between $0$ and $2$,
so we use a uniform prior $\pi_\Theta =  \mu_{\scriptscriptstyle [0,2]^2}$.
The total initial number of individuals of $100$ yields $y$ in the range $[0,100]$.

Therefore, in the initial DIRT layer, we introduce $17$ equispaced discretization points on $[0,2]$ for $\theta_1,\theta_2$, and $17$ equispaced discretization points on $[0,100]$ for $y_1,\ldots,y_4$.
We set the TT ranks to $17$ and the reference measure to the truncated normal distribution on $[-3,3]$. We use nine layers of maps in DIRT with the tempering powers starting from $\beta_0=10^{-4}$ and increasing geometrically as $\beta_{\ell+1}=\sqrt{10} \cdot \beta_\ell$ until $\beta_8=1$. 
\cancel{Constructing DIRT with those parameters requires a total of $182070$ evaluations of the bridging densities in all layers,
and 21 seconds of CPU time in Matlab R2020a on an Intel Core i7-9750H CPU.}
\retwo{In the offline phase, constructing DIRT with those parameters requires a total of $182070$ evaluations of the bridging densities in all layers,
and 21 seconds of CPU time in Matlab R2020a on an Intel Core i7-9750H CPU. To compare DIRT with density estimation approaches, we employ the Hierarchical Invertible Neural Transport (HINT) \cite{kruse2019hint} in this example. HINT approximates a transport map using i.i.d. samples drawn from the joint density that can be generated explicitly in this case. 
The HINT map is constructed from neural networks with $8$ dense layers of size $200\times 200$ each, trained by $50$ epochs of $10^6$ training samples.
The training took 28 minutes on a NVidia GeForce GTX 1650 Max-Q card. }

{\bfseries Accuracy and online performance.}  Since we are mostly interested in sampling from the posterior rather than the joint,
the accuracy of \cancel{DIRT} \retwo{DIRT and HINT} is benchmarked as follows.
We sample $256$ realizations of $\Theta \sim  \pi_\Theta$ and
$e^y \sim \mathcal{N}(0,\mI)$.
By solving the ODE model~\eqref{eq:sir} for each $\Theta$ adding the corresponding $e^y$ as in~\eqref{eq:sir:y}, we generate $256$ realizations of synthetic data. Then, we plug each data set $y^\lowsup{i}$ into the \cancel{conditional DIRT } \retwo{conditional maps defined by DIRT and HINT} to obtain a resulting approximate posterior density $p_{\Theta|Y{=}y^\lowsup{i}}$ and draw $N=5\cdot 10^4$ conditional random samples $\Theta|Y{=}y^\lowsup{i}$ from each $p_{\Theta|Y{=}y^\lowsup{i}}$. \cancel{Producing one conditional sample from DIRT takes 43 microseconds of CPU time.} These samples are used to estimate the Hellinger distance between the approximate posterior density $p_{\Theta|Y{=}y^\lowsup{i}}$ and the exact posterior density $\pi_{\Theta|Y=y^\lowsup{i}}$. The $256$ instances of the Hellinger distances, corresponding to the different realizations of $Y$, are binned in a histogram as shown in Figure~\ref{fig:sir:acc} \cancel{(left)}.
\cancel{We observe that although conditioning the approximate joint density on the tail of the data distribution might result in a rather large relative error in the DIRT-approximation of the corresponding conditional density, such outliers are rare.}
\retwo{For the DIRT-approximation (left, Figure~\ref{fig:sir:acc}), we observe that although conditioning the approximate joint density on the tail of the data distribution might result in a rather large relative error, such outliers are rare.} This behavior is expected as the Hellinger error of the approximate conditional density is controlled with high probability in our framework. \retwo{In comparison, the approximate joint density obtained by HINT (right, Figure~\ref{fig:sir:acc}) is generally less accurate than those of DIRT. However, HINT uses an order of magnitude more training samples and CPU time. This accuracy margin is not surprising. DIRT is a function approximation procedure that uses target density functions evaluated at collocation points to train the transport map. In comparison, HINT is a density estimation procedure trained by minimizing some loss function approximated using samples drawn from the target distribution. This way, HINT does not use the target density values in training and its loss function is subject to the Monte Carlo error.}

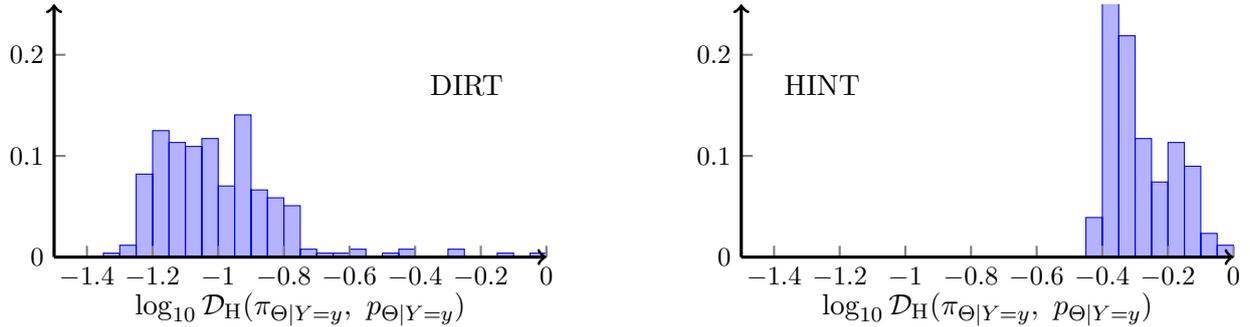
\begin{figure}[htb]
\centering\begin{tikzpicture}
 \begin{axis}[%
 width=0.49\linewidth,
 height=0.3\linewidth,
 area style,
 xlabel={$\log_{10}\DH(\pi_{\Theta|Y=y}, ~p_{\Theta|Y=y})$},
 xmin=-1.5, xmax=0, ymin=0,ymax=0.25,
 ytick={0,0.1,0.2,0.3},
 yticklabels={0,0.1,0.2,0.3},
 ]
  \addplot+[ybar interval,mark=no] plot coordinates{
     (   -1.3500,    0.0039)     
   (   -1.3000,    0.0117)
   (   -1.2500,    0.0820)
   (   -1.2000,    0.1250)
   (   -1.1500,    0.1133)
   (   -1.1000,    0.1094)
   (   -1.0500,    0.1172)
   (   -1.0000,    0.0703)
   (   -0.9500,    0.1406)
   (   -0.9000,    0.0664)
   (   -0.8500,    0.0586)
   (   -0.8000,    0.0508)
   (   -0.7500,    0.0078)
   (   -0.7000,    0.0039)
   (   -0.6500,    0.0039)
   (   -0.6000,    0.0078)
   (   -0.5500,         0)
   (   -0.5000,    0.0039)
   (   -0.4500,    0.0078)
   (   -0.4000,         0)
   (   -0.3500,         0)
   (   -0.3000,    0.0078)
   (   -0.2500,         0)
   (   -0.2000,         0)
   (   -0.1500,    0.0039)
   (   -0.1000,         0)
   (   -0.0500,    0.0039)
   (         0,    0.0039)
  };
  \node[anchor=north east] at (axis cs: -0.1,0.19) {DIRT};
 \end{axis}
\end{tikzpicture}
\hfill
\begin{tikzpicture}
 \begin{axis}[%
 width=0.49\linewidth,
 height=0.3\linewidth,
 area style,
 xlabel={$\log_{10}\DH(\pi_{\Theta|Y=y}, ~p_{\Theta|Y=y})$},
 xmin=-1.5, xmax=0, ymin=0,ymax=0.25,
 ytick={0,0.1,0.2,0.3},
 yticklabels={0,0.1,0.2,0.3},
 ]
  \addplot+[ybar interval,mark=no] plot coordinates{
   (-0.4500,     0.0391)      
   (-0.4000,     0.3125)
   (-0.3500,     0.2188)
   (-0.3000,     0.1172)
   (-0.2500,     0.0742)
   (-0.2000,     0.1133)
   (-0.1500,     0.0898)
   (-0.1000,     0.0234)
   (-0.0500,     0.0117)
   (      0,     0.0117)
  };
  \node[anchor=north west] at (axis cs: -1.4,0.19) {HINT};
 \end{axis}
\end{tikzpicture}
\caption{Top: histograms (out of $256$ experiments with different $y$) of Hellinger distances between exact and approximate posterior densities for DIRT (left) and HINT (right).
}
\label{fig:sir:acc}
\end{figure}

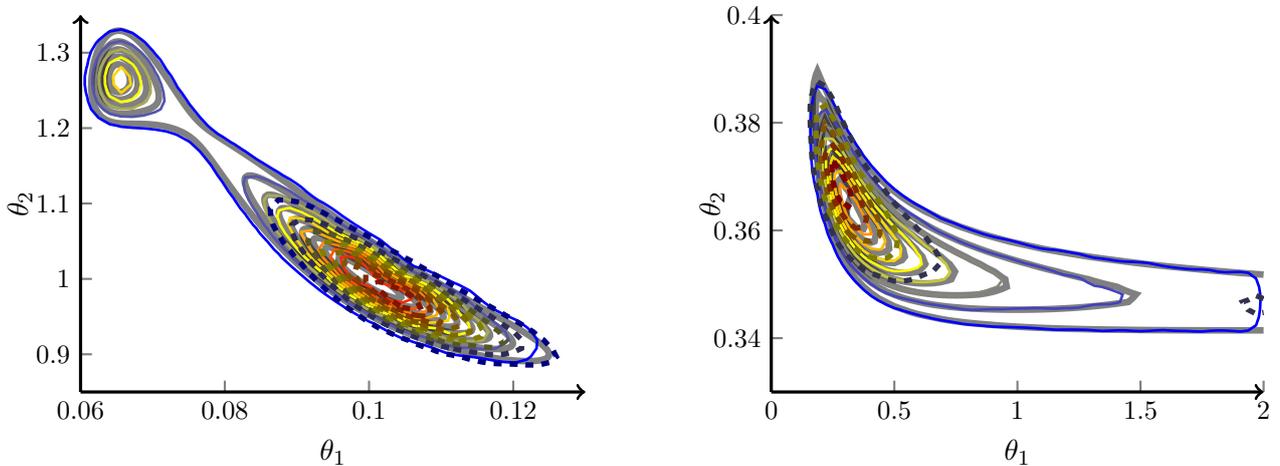
\begin{figure}[htb]
\centering
\noindent\hspace{-12pt}\begin{tikzpicture}
 \begin{axis}[%
width=0.5\linewidth,
height=0.40\linewidth,
xmin=0.06,xmax=0.13,ymin=0.85,ymax=1.35,
xtick={0.06,0.08,0.1,0.12},
xticklabels={0.06,0.08,0.1,0.12},
xlabel={$\theta_1$},ylabel={$\theta_2$},
]
\pgfplotsset{
    contour/every contour plot/.style={labels=false},
}
 \addplot[contour prepared,contour prepared format=matlab,contour/draw color={black},opacity=0.5,line width=2.5pt] table{contour0ex.dat};
 \addplot[contour prepared,contour prepared format=matlab,line width=1pt] table{contour0.dat};
 \addplot[contour prepared,contour prepared format=matlab,contour/draw color={mapped color!50!black},dashed,line width=2pt] table{contour0h.dat};
 \end{axis}
\end{tikzpicture}
\hfill
\begin{tikzpicture}
 \begin{axis}[%
width=0.49\linewidth,
height=0.40\linewidth,
xmin=0.0,xmax=2.0,ymin=0.33,ymax=0.40,
xlabel={$\theta_1$},ylabel={$\theta_2$},
]
\pgfplotsset{
    contour/every contour plot/.style={labels=false},
}
 \addplot[contour prepared,contour prepared format=matlab,contour/draw color={black},opacity=0.5,line width=2.5pt] table{contour8ex.dat};
 \addplot[contour prepared,contour prepared format=matlab,line width=1pt] table{contour8n.dat};
 \addplot[contour prepared,contour prepared format=matlab,contour/draw color={mapped color!50!black},dashed,line width=2pt] table{contour8hn.dat};
 \end{axis}
\end{tikzpicture}
\caption{Exact posterior densities (thick grey) and approximations for two realizations of $y$ for DIRT (solid bright color lines) and HINT (dashed dark lines).
}
\label{fig:sir:contours}
\end{figure}

\cancel{
We employ the Hierarchical Invertible Neural Transport (HINT) \cite{kruse2019hint} in this example to compare DIRT with density estimation approaches. HINT approximates a transport map using i.i.d. samples drawn from the joint density that can be generated explicitly in this case. 
The HINT map is constructed from neural networks with $8$ dense layers of size $200\times 200$ each, trained by $50$ epochs of $10^6$ training samples.
The training took 28 minutes on a NVidia GeForce GTX 1650 Max-Q card,
and the evaluation of one test sample took 13 microseconds.
A histogram of Hellinger-distance errors in the HINT approximation of the posterior is shown on Figure~\ref{fig:sir:acc} (right). We see that DIRT is systematically more accurate than HINT but uses an order of magnitude fewer training evaluations of the joint density and CPU time.}

Figure~\ref{fig:sir:contours} \cancel{shows} \retwo{further illustrates this accuracy margin by showing} the contours of approximate posterior densities conditioned on two different data realizations.
The left plot of Figure~\ref{fig:sir:contours} shows an interesting multimodal posterior distribution, in which the ``true'' parameter $\theta=(0.1, 1)$ is used to generate the synthetic data. This corresponds to a realistic balance between infection and recovery rates.
For this test case, DIRT gives the Hellinger distance $\DH(\pi_{\Theta|Y=y},~ p_{\Theta|Y=y}) \approx 0.32,$
whereas HINT gives
$\DH(\pi_{\Theta|Y=y},~ p_{\Theta|Y=y}) \approx 0.68.$ \cancel{As a final remark, although DIRT appears to be more computationally expensive than HINT in the online phase, this bottleneck can be largely alleviated by replacing the current MATLAB implementation of DIRT with GPUs and parallel computing platforms.}

\retwo{In the online phase, producing one conditional sample from DIRT and HINT respectively take 43 and 13 microseconds of CPU time. Although DIRT appears to be more computationally expensive than HINT in the online phase, this bottleneck can be largely alleviated by replacing the current MATLAB implementation of DIRT with GPUs and parallel computing platforms.}

{\bfseries Accelerating posterior sampling.}  We also demonstrate exact posterior inference accelerated using the conditional DIRT as a preconditioner. In this experiment, we fix the data $y$ produced from \eqref{eq:sir} with the ``true'' parameter $\theta=(0.1, 1)$. We also transform the uniform prior density into a Gaussian density so that we can apply the No-U-Turn sampler (NUTS) \cite{hoffman2014no} and the pCN proposal \eqref{eq:pcn} to sample the exact posterior distribution to setup baselines of the benchmark. The step size of NUTS is automatically adjusted. For pCN, we use a manually tuned step size $\Delta t=\mathrm{e}^{-7}$. Then, we apply pCN and NUTS to the pullback of the posterior density,
\(
(\mathcal{T}_{\Theta|\Yey}){}^\sharp \pi_{\Theta|\Yey},
\)
and refer to the corresponding sampling schemes as pCN-DIRT and NUTS-DIRT, respectively. In contrast to the unpreconditioned pCN, pCN-DIRT has high acceptance probability with arbitrary step size. Thus, we experiment it with $\Delta t=10$ and $\Delta t=2$, which correspond to a proposal with negatively correlated samples and an independence Metropolis proposal, respectively. We also report the performance of importance sampling using random samples drawn from the conditional DIRT (IS-DIRT).

\cancel{For each MCMC method we produce $N=5\cdot 10^4$ posterior samples $\Theta|Y=y$, and measure its performance using the estimated IACT of the Markov chain. For IS-DIRT, we measure its performance by the ratio between the total number of samples and the effective sample size (ESS). Moreover, we compute the empirical standard deviations of the estimated posterior means: we split $N=5\cdot 10^4$ points into $5$ batches of $10^4$ samples each, compute averages of $\theta_1,\theta_2$ over each batch, and finally compute std over the $5$ batches.
Table~\ref{tab:sir-err} shows that all MCMC methods are more accurate when preconditioned by conditional DIRT.
Since the pullback distribution is approximately normal, the pCN is stable even with negatively correlated proposal samples (with $\Delta t>2$), which further reduces the IACT and the estimation error.
Random samples and QMC samples drawn by IS-DIRT and the paths of the first 5 MCMC samples are shown in Figure~\ref{fig:sir:chains}. In this example, the preconditioned methods are able to explore the second mode in the first several iterations.}

\retwo{We consider two benchmarks to quantify the sampling performance. First, we consider the integrated autocorrelation time (IACT) of the parameter Markov chains, averaged over parameters, to measure the independence of simulated samples. Since different MCMC methods use different numbers of posterior density evaluations in each Markov chain step, we also report the IACT per density evaluation, which is the product of the IACT and the average number of density evaluations per MCMC step. For IS-DIRT, we measure its performance by the ratio of the total number of samples to the effective sample size (ESS).} 

\retwo{Second, we demonstrate the variance-reduction performance of various methods discussed here using some statistical estimators. Suppose we want to compute the posterior expectation of some quantity of interest (QoI), $h$, using the sample average, denoted by $\hat h$. Since all methods discussed here are asymptotically converging, we assume that the variance of the estimator $\hat h$ satisfies the central limit theorem, and thus follows
\[
  \mathrm{var}(\hat h) = \frac{\tau(h)\, \mathrm{var}(h)}{N}, 
\]
where $\mathrm{var}(h)$ is the posterior variance of $h$, $N$ is the number of samples used in computing the estimator $\hat h$, and $\tau(h)$ is a QoI-specific prefactor that measures either the correlation of MCMC samples or the weight imbalance of importance sampling. To summarize the computational efficiency of the estimator, we then express $\mathrm{var}(\hat h)$ in the form of
\[
  \mathrm{var}(\hat h) = \frac{\tau_{\rm online}(h)}{N_{\rm online}} ,
\]
where $N_{\rm online}$ is the number of posterior density evaluations needed to generate $N$ samples, and $\tau_{\rm online}(h)$ is an overhead factor that measures the deficiency of the estimator. This way, we have 
\begin{equation}\label{eq:overhead}
\tau_{\rm online}(h) = \mathrm{var}(\hat h) \times N_{\rm online} .
\end{equation}
Note that the overhead factor $\tau_{\rm online}$ is only a computable indicator of the statistical efficiency. With finite sample size, removing the initialization bias of MCMC is often an art, whereas the self-normalized importance sampling has a bias that decays at a faster rate than the square root of the variance.}

\begin{table}[!h]
\caption{\retwo{SIR model. The setup of sampling methods, the number of density evaluations for generating $N = 50000$ samples, the IACT, the IACT per density evaluation, the estimated square roots of the vairances of $(\hat\theta_1, \hat\theta_2)$, as well as the overhead factors $\tau_{\rm online}(\theta_1)$ and $\tau_{\rm online}(\theta_2)$. $^\dagger$The ratio of the sample size to the ESS is used here for IS-DIRT.}}
\label{tab:sir-err}
\small
\begin{center}
\renewcommand{\arraystretch}{1.2}
\begin{tabular}{c|ccc|c|cc}
                     & pCN & pCN-DIRT & pCN-DIRT & IS-DIRT  & NUTS & NUTS-DIRT \\
step size            & $\Delta t=\mathrm{e}^{-7}$ & $\Delta t=10$ & $\Delta t=2$ & -  & auto & auto \\
$N_{\mbox{online}}$          &  50000    & 50000    & 50000   &  50000 & 149738 & 116938  \\ \hline
IACT                 & 251.8     & 1.61     & 3.94     & - & 29.5     & 9.43   \\
IACT per eval.  & 251.8     & 1.61     & 3.94     & 1.88 $^\dagger$ & 88.3     & 22.1   \\\hline
$\surd\mbox{var}(\hat\theta_1)$  & 2.40e-3  & 1.44e-4  & 4.58e-4 & 4.98e-4 & 1.10e-3 & 4.91e-4 \\
$\tau_{\rm online}(\theta_1)$  & 2.88e-1  & 1.04e-3  & 1.05e-2 & 1.24e-2 & 1.82e-1 & 2.82e-2 \\\hline
$\surd \mbox{var}(\hat\theta_2)$ & 1.74e-2  & 9.87e-4  & 3.11e-3 & 3.16e-3  & 8.04e-3 & 4.08e-3 \\
$\tau_{\rm online}(\theta_2)$ & 15.1  & 4.87e-2  & 4.84e-1 & 4.99e-1  & 9.68 & 1.95 \\
\end{tabular}
\end{center}
\end{table}

\retwo{For each method we produce $N=5\cdot 10^4$ samples from the posterior $\Theta|Y=y$. We use the parameters $(\theta_1, \theta_2)$ as the QoIs, and thus their sample averages, $(\hat\theta_1, \hat\theta_2)$, give the posterior mean estimate of the parameters. In this experiment, we split $N=5\cdot 10^4$ points into $5$ batches of $10^4$ samples each, compute $\hat\theta_1,\hat\theta_2$ over each batch, and finally compute variances over the $5$ batches. Table~\ref{tab:sir-err} summarizes the IACT, IACT per density evaluation (or the ratio of the sample size to the ESS), estimated square roots of the variances of $(\hat\theta_1, \hat\theta_2)$, as well as the overhead factors  $\tau_{\rm online}(\theta_1)$ and $\tau_{\rm online}(\theta_2)$. We observe that all MCMC methods are more efficient when preconditioned by the conditional DIRT. Since the pullback distribution is approximately normal, the pCN is stable even with negatively correlated proposal samples (with $\Delta t>2$), which further reduces the IACT and the overhead factors of the estimators shown. Overall, pCN-DIRT with  $\Delta t=10$ produces the most computationally efficient result.}

\begin{figure}[!h]
\noindent
\begin{tikzpicture}
 \begin{axis}[%
width=0.49\linewidth,
height=0.45\linewidth,
xmin=0.06,xmax=0.13,ymin=0.85,ymax=1.35,
xtick={0.06,0.08,0.1,0.12},
xticklabels={0.06,0.08,0.1,0.12},
xlabel={$\theta_1$},ylabel={$\theta_2$},
]
\pgfplotsset{
    contour/every contour plot/.style={labels=false},
}
 \addplot[contour prepared,contour prepared format=matlab] table{contour0ex.dat};
 \addplot[fill=black,draw=red,mark size=2.0pt,mark=*,line width=1pt,mark options={color=black}] table{%
   0.0956931   1.0275114
   0.0888990   1.0531503
  };
 \addplot[fill=black,draw=red,mark size=2.0pt,mark=*,line width=1pt,mark options={color=black}] table{%
   0.0888990   1.0531503
   0.1005234   1.0293665
 };
 \addplot[fill=black,draw=red,mark size=2pt,mark=*,line width=1pt,mark options={color=black}] table{%
   0.1005234   1.0293665
   0.1159371   0.9141797
 };
 \addplot[fill=black,draw=red,mark size=2.0pt,mark=*,line width=1pt,mark options={color=black}] table{%
   0.1159371   0.9141797
   0.0697678   1.2822385
 };
 \end{axis}
\end{tikzpicture}
\hfill
\begin{tikzpicture}
 \begin{axis}[%
width=0.49\linewidth,
height=0.45\linewidth,
xmin=0.06,xmax=0.13,ymin=0.85,ymax=1.35,
xtick={0.06,0.08,0.1,0.12},
xticklabels={0.06,0.08,0.1,0.12},
xlabel={$\theta_1$},ylabel={$\theta_2$},
]
\pgfplotsset{
    contour/every contour plot/.style={labels=false},
}
 \addplot[contour prepared,contour prepared format=matlab] table{contour0ex.dat};
 \addplot[fill=black,draw=red,mark size=1pt,mark=*,line width=.5pt,mark options={color=black}] table{%
   x           y
   0.1000000   1.0000000
   0.1016633   0.9998254
  };
 \addplot[fill=black,draw=red,mark size=1pt,mark=*,line width=0.5pt,mark options={color=black}] table{%
   0.1016633   0.9998254
   0.1016633   0.9998254
 };
 \addplot[fill=black,draw=red,mark size=1pt,mark=*,line width=0.5pt,mark options={color=black}] table{%
   0.1016633   0.9998254
   0.1016633   0.9998254
 };
 \addplot[fill=black,draw=red,mark size=1pt,mark=*,line width=0.5pt,mark options={color=black}] table{%
   0.1016633   0.9998254
   0.1031757   1.0022011
 };
 \end{axis}
\end{tikzpicture}
\caption{Exact posterior density (contours) and the paths of 5 posterior samples (dotted lines) from pCN-DIRT (left) and unpreconditioned NUTS (right).
}
\label{fig:sir:chains}
\end{figure}
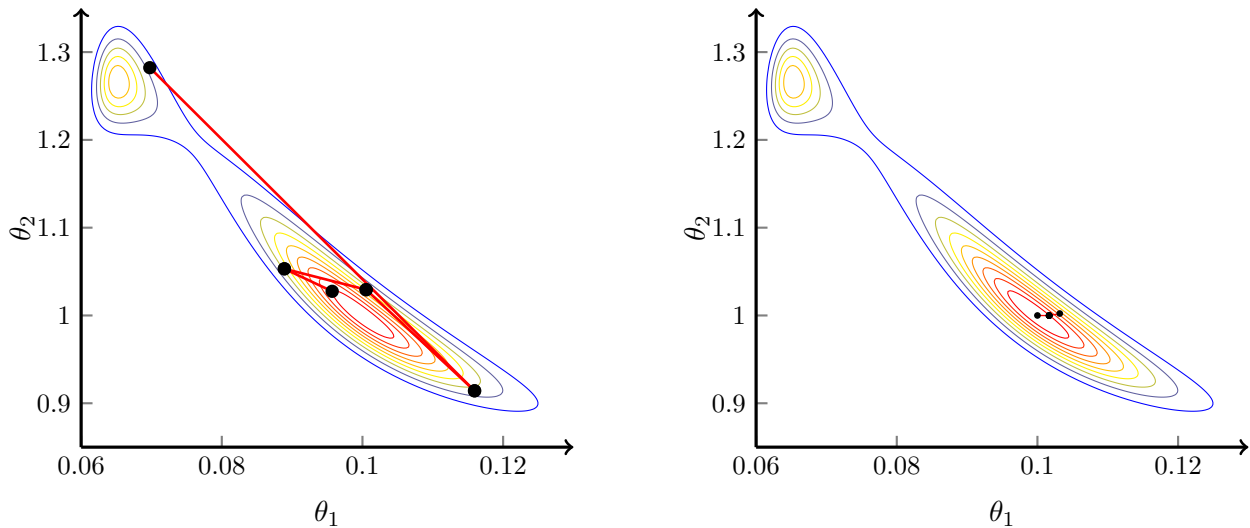

\retwo{We further visualize the benifit of preconditioning MCMC using the conditional DIRT. Figure~\ref{fig:sir:chains} shows the first 5 MCMC samples generated by pCN-DIRT and unpreconditioned NUTS. In this example, the preconditioned methods are able to explore the second mode in a few iterations, whereas unpreconditioned  samples are clustered near the initial state.}

\cancel{The last row of Table~\ref{tab:sir-err} shows the actual number of density evaluations required for producing $N=5\cdot 10^4$ posterior samples.
pCN needs $1$ evaluation per sample by construction,
but NUTS performs several iterations, resulting in a two to threefold larger cost.
Comparing $N_{\mbox{online}}$ with $N_{\mbox{offline}} = 182070$ evaluations in the construction of DIRT,
we conclude that the offline cost is already paid off by only $2$ -- $4$ posterior evaluations.}

\subsection{Linear elasticity analysis of a wrench} 

We consider a two-dimensional linear elasticity problem \cite{lam2020multifidelity,smetana2020randomized} that models the displacement field $u:\mathcal{D} \rightarrow \mathbb{R}^\lowsup{2}$ using the PDE 
\[
\mathrm{div}\left(K:\varepsilon(u)\right) = f \quad \text{on} \quad \mathcal{D} \subset \mathbb{R}^\lowsup{2}.
\]
which models the stress equilibrium in a physical body $\mathcal{D}$ subject to external forces $f$.
Here, $\varepsilon(u) = \frac{1}{2}(\nabla u + \nabla u^\lowsup{\top})$ is the strain field, and $K$ is the Hooke tensor such that
\[
K:\varepsilon(u) = \frac{E}{1+\nu} \varepsilon(u) + \frac{\nu E}{1-\nu^2} \mathrm{trace}(\varepsilon(u))\begin{pmatrix} 1&0 \\0&1\end{pmatrix}
,\]
where $\nu=0.3$ is Poisson's ratio and $E>0$ is spatially varying Young's modulus.
In this example, we aim to estimate Young's modulus field $E:\mathcal{D}\rightarrow\R_{>0}$. It is modeled by a random field that follows a log-normal prior $\log E\sim \mathcal{N}(0,C)$, where $C(x,x') = \exp(-\|x-x'\|_2^2)$ is a Gaussian covariance function on $\mathcal{D}\times \mathcal{D}$.
After finite element discretization, $\log E$ is replaced with a piecewise constant field whose elementwise values are gathered in a vector $\theta\in\R^\lowsup{n}$ where $n=925$ is the number of elements in the mesh. The parameter $\theta$ thus follows a centered Gaussian prior with covariance matrix $\Sigma_{ij}=C(x_i,x_j)$, where $x_i\in\mathcal{D}$ is the center of the $i$-th element.
We compute the numerical solution $u^\lowsup{h}=u^\lowsup{h}(\theta)$ by the Galerkin projection of $u=u(\theta)$ onto the space of continuous piecewise affine functions over a triangular mesh.
The domain $\mathcal{D}$, the mesh, the boundary conditions, and a sample von Mises stress of the solution are shown in Figure~\ref{fig:wrench} (middle).

We observe the vertical displacements $u^h_2$ of the $m=26$ points of interests located along the green line where the force $f$ is applied, see Figure~\ref{fig:wrench} (middle).
The perturbed observations are $y=u^h_2+e$ where $e$ is a zero-mean $H^\lowsup{1}$-normal noise with the signal-to-noise-ratio $10$.
Following Proposition \ref{prop:bound_Hellinger_rotation}, \cancel{we reparametrize and truncate both $\theta$ and $y$ by replacing them with $\theta_{\leq t}=B_{t}^\lowsup{\top}\theta\in\mathbb{R}^\lowsup{t}$ and $y_{\leq s}=A_{s}^\lowsup{\top}y\in\mathbb{R}^\lowsup{s}$, where $A_{s}$ and $B_{t}$ are the matrices containing the leading eigenvectors of the $s$ and $t$ largest eigenvalues of $H_Y$ and $H_\Theta$, respectively.}
\retwo{we first compute the eigendecomposition of the sensitivity matrices $H_Y$ and $H_\Theta$, and use their leading eigenvectors to construct bases $A_{s} \in \R^{m\times s}$ and $B_{t} \in \R^{n \times t}$ for truncate and reparameterize the data and the parameter, respectively. This way, the reduced dimensional parameter and data take the form $\bar\theta_{1:t}=B_{t}^\lowsup{\top}\theta\in\mathbb{R}^\lowsup{t}$ and $\bar y_{1:s}=A_{s}^\lowsup{\top}y\in\mathbb{R}^\lowsup{s}$, respectively.
The first three reduced parameter basis vectors are shown in Figure~\ref{fig:wrench-basis}.
Then, we can construct DIRT that approximates the reduced dimensional joint density $\widetilde\pi_{\bar Y_{s:1},\bar\Theta_{1:t}}\left(\bar y_{s:1},\bar\theta_{1:t}\right)$ defined in \eqref{eq:api_with_independence_structure}.}
Moreover, to reduce the offline training time, DIRT approximates the joint density where the exact observation map $u_2^h(\theta)$ is replaced by a surrogate TT approximation \cite{ds-alscross-2019},
which can be precomputed using only $485 \pm 28$ solves of the elasticity PDE. \reone{Other surrogate modelling techniques---for example, those based on polynomial chaos \cite{babuvska2007stochastic,cohen2010convergence,SuMo:XiuKar_2002,SuMo:MarNa_2009,schwab2012sparse,yan2019adaptive}, reduced order models \cite{ROM:BWG_2008_2,IP:ChenSch_2015,cohen2022nonlinear,ROM:CMW_2014,ROM:CMW_2016,ROM:GFWG_2008,ROM:LWG_2010}, and neural networks \cite{li2020fourier,lu2021learning,tripathy2018deep,zhou2020adaptive,zhu2019physics}---can also be used here to accelerate the training.
 }

\begin{figure}[h]
\centering
\noindent\begin{tikzpicture}
\begin{axis}[%
width=0.35\linewidth,
height=0.3\linewidth,
xlabel={\retwo{\hspace{-24pt}truncated parameter dimension $t$\hspace{-24pt}}},
xmin=4,xmax=13,
legend style={at={(1.02,0.99)},anchor=north west}
]
\addplot+[mark=*,only marks,mark options={mark size=3},error bars/.cd,y dir=both,y explicit] coordinates{
(5 , 0.1913) +- (0, 0.0518)
(6 , 0.1958) +- (0, 0.0772)
(7 , 0.1917) +- (0, 0.0623)
(8 , 0.2341) +- (0, 0.0793)
(10, 0.2263) +- (0, 0.0343)
(12, 0.2504) +- (0, 0.0738)
                                } node[pos=0.65,anchor=south west,minimum height=40pt] {$\DH(\widetilde\pi,\!p)$}; 
\addplot+[mark=triangle*,only marks,mark options={mark size=4},error bars/.cd,y dir=both,y explicit] coordinates{
(5 , 0.2573) +- (0, 0.0661)
(6 , 0.1822) +- (0, 0.0834)
(7 , 0.1282) +- (0, 0.0477)
(8 , 0.1179) +- (0, 0.0391)
(10, 0.1166) +- (0, 0.0197)
(12, 0.0869) +- (0, 0.0193)
                                } node[pos=0.65,anchor=south west] {$\DH(\pi,\!\widetilde\pi)$}; 
\end{axis}
\end{tikzpicture}
\begin{tikzpicture}
 \node at (0,0.25) {\includegraphics[width=0.35\linewidth,height=0.21\linewidth]{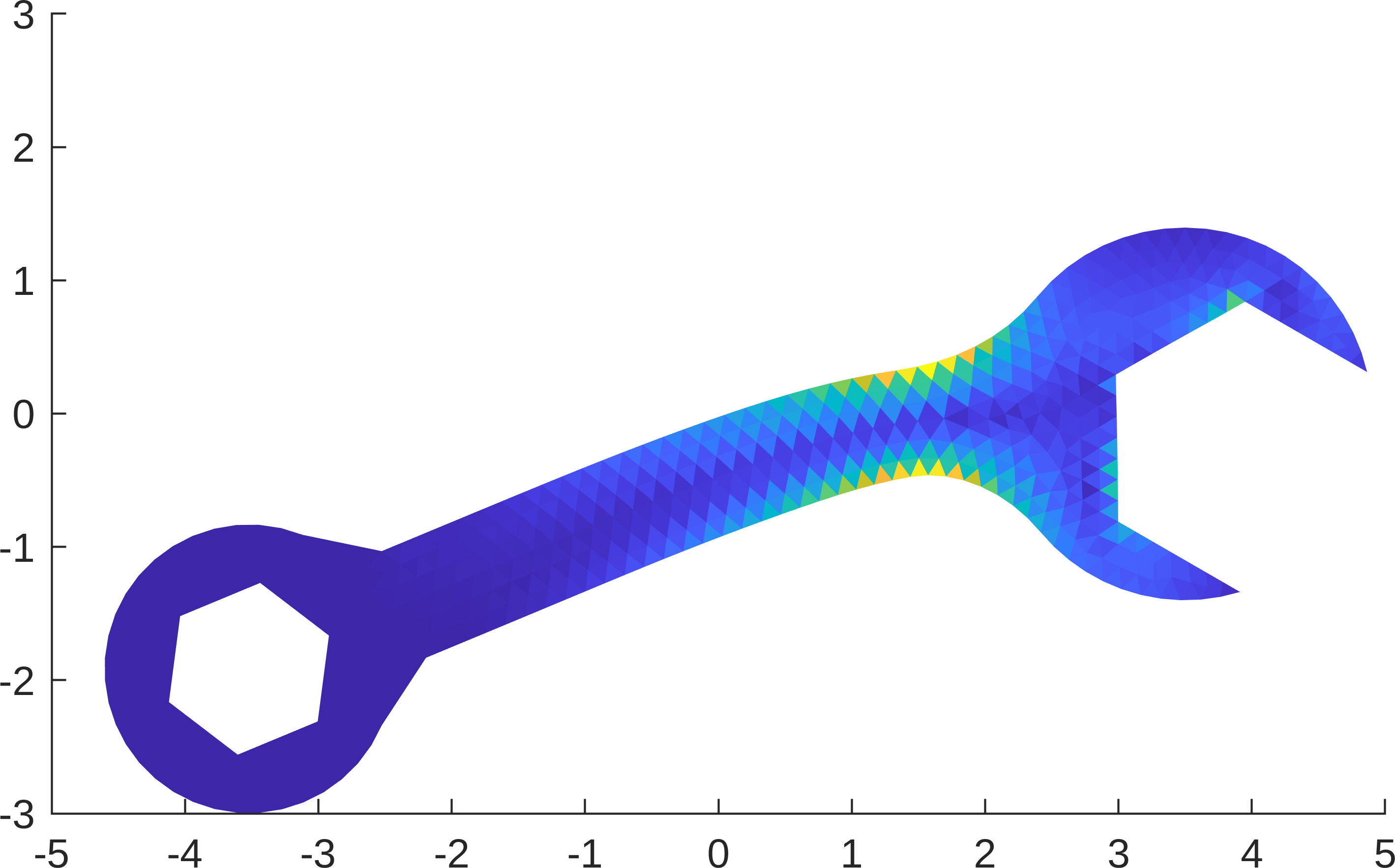}};
 \node at (0,0.25) {\includegraphics[width=0.35\linewidth,height=0.21\linewidth]{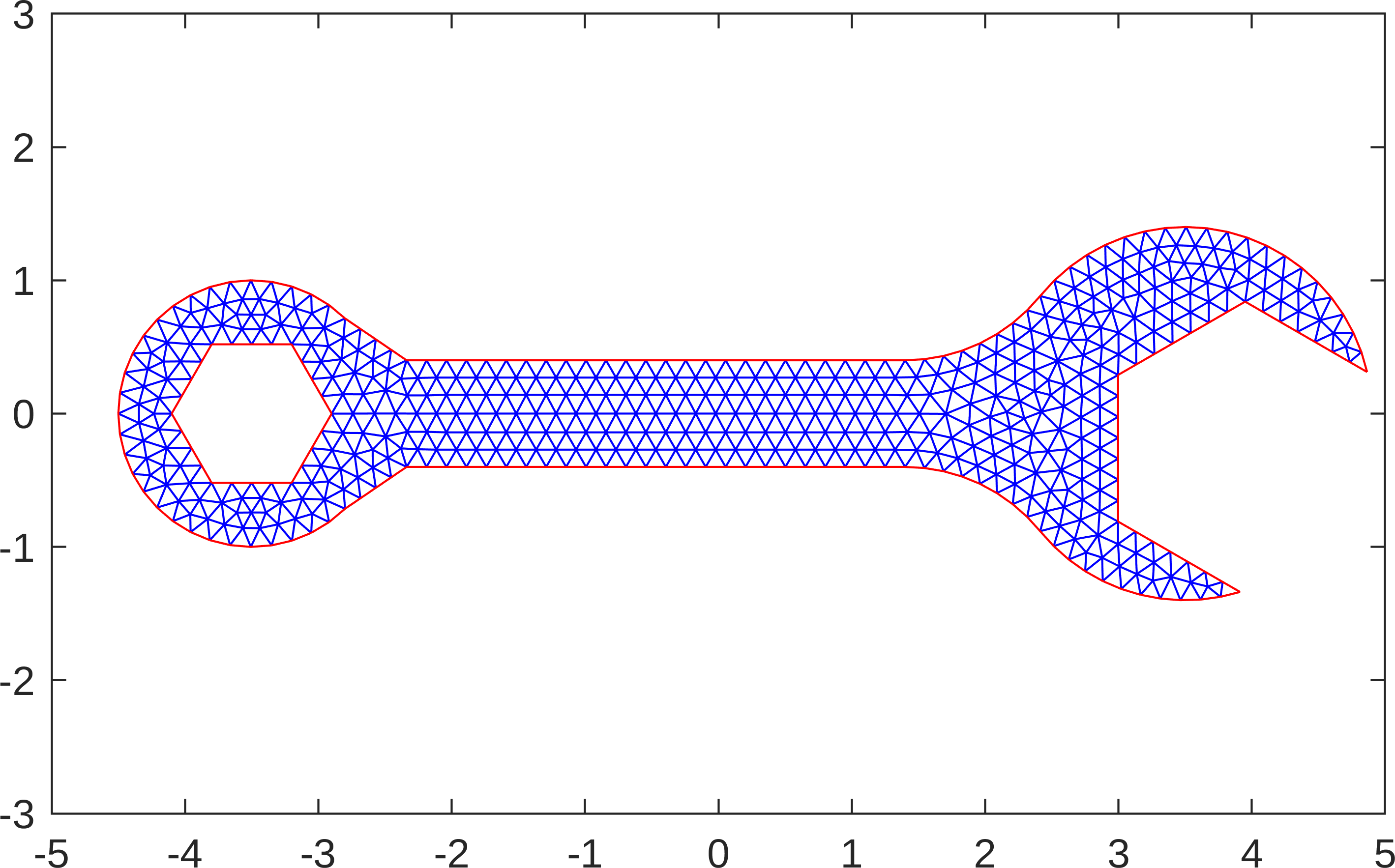}};  
 \node[rotate=-32] at (2.9*0.8,-0.2*0.7) {$u=0$};
 \node[rotate=-32] at (3.2*0.8,0.5*0.7) {$u=0$};
 \draw[->,red,line width=2pt] (-1.18,1.4) -- (-1.18,0.63);
 \draw[->,red,line width=2pt] (0.85,1.4) -- (0.85,0.63);
 \node[red,anchor=west] at (-1.4*0.7,1.3*0.7) {$f {=} \begin{psmallmatrix}0\\{-}1\end{psmallmatrix}$};
 \foreach \x in {-2.33824, -2.18871, -2.03918, -1.88965, -1.74012, -1.59059, -1.44106, -1.29153, -1.142, -0.992471, -0.842941, -0.693412, -0.543882, -0.394353, -0.244824, -0.0952941, 0.0542353, 0.203765, 0.353294, 0.502824, 0.652353, 0.801882, 0.951412, 1.10094, 1.25047, 1.4} {
 \node[ellipse,draw=green!50!black,fill=green!50!black,inner sep=0.7pt,anchor=center] at (0.545*\x+0.09,0.55) {};
 }
 \node at (0,-2) {};
\end{tikzpicture}
\hspace{-1em}
\begin{tikzpicture}
\begin{axis}[%
width=0.35\linewidth,
height=0.3\linewidth,
xlabel={\retwo{truncated data dimension $s$}},
xmin=4,xmax=13,
]
\addplot+[mark=*,only marks,mark options={mark size=3},error bars/.cd,y dir=both,y explicit] coordinates{
(4 , 0.2676) +- (0, 0.0656)
(5 , 0.2444) +- (0, 0.0557)
(6 , 0.2490) +- (0, 0.0321)
(7 , 0.2748) +- (0, 0.0452)
(8 , 0.2828) +- (0, 0.0671)
(9 , 0.2718) +- (0, 0.0452)
(10, 0.2476) +- (0, 0.0398)
(11, 0.2412) +- (0, 0.0321)
(12, 0.2387) +- (0, 0.0298)
(13, 0.2363) +- (0, 0.0303)
                                };
\addplot+[mark=triangle*,only marks,mark options={mark size=4},error bars/.cd,y dir=both,y explicit] coordinates{
(4 , 0.1694) +- (0, 0.1364)
(5 , 0.1413) +- (0, 0.1290)
(6 , 0.1504) +- (0, 0.0842)
(7 , 0.1905) +- (0, 0.1425)
(8 , 0.1981) +- (0, 0.1213)
(9 , 0.1572) +- (0, 0.1102)
(10, 0.0838) +- (0, 0.0585)
(11, 0.0590) +- (0, 0.0085)
(12, 0.0580) +- (0, 0.0083)
(13, 0.0608) +- (0, 0.0072)
                                }; 
\end{axis}
\end{tikzpicture}
\caption{Wrench setup (middle), Hellinger distances (mean$\pm$std over 10 data realizations) between DIRT approximation $p$ and reduced posterior $\widetilde \pi$ \retwo{(blue markers)} and between the reduced posterior and the original posterior $\pi$ \retwo{(red markers)} \retwo{with varying truncated parameter dimension $t$ and a fixed truncated data dimension $s=13$ (left), and varying truncated data dimension $s$ and a fixed truncated parameter dimension $t=13$ (right).}}
\label{fig:wrench}
\end{figure}

The Hellinger distances between the conditional DIRT and the reduced-dimensional posterior \eqref{eq:api_with_independence_structure}, and the Hellinger distances between reduced and original posteriors are shown in Figure~\ref{fig:wrench} (left and right).
We fix the TT ranks to be $13$ and specify tempering parameters $\beta_0=10^\lowsup{-3}$, $\beta_{\ell+1} = \sqrt{10} \beta_{\ell}$. 
\cancel{We observe that the DIRT accuracy degrades slowly with increasing dimensions, since the TT decomposition needs to capture more correlations.}
\retwo{We observe that the DIRT accuracy degrades slowly with increasing truncated parameter dimensions while remaining almost constant with increasing truncated data dimensions. One potential reason is that the parameters have complicated interactions in the parameter-to-observable map while the data is conditional independent in the observation model.}
However, using higher truncate dimensions in both the parameter and data spaces deliver more accurate parametrizations of the random fields. A reasonable strategy is to choose the truncate dimension such that the sum of \cancel{two errors} \retwo{the dimension truncation error and the DIRT approximation error} is minimal.
In this example, this is achieved for data truncation dimension $s = 13$ and parameter truncation dimension $t = 7$. Further numerical tests on the DIRT error with varying TT ranks and tempering steps are shown in \ref{sec:wrench2}.  Realizations of posterior samples generated by the conditional DIRT for different observed data sets are also presented in \ref{sec:wrench2}.

\begin{figure}[h]
\centering
\noindent
\includegraphics[width=0.32\linewidth]{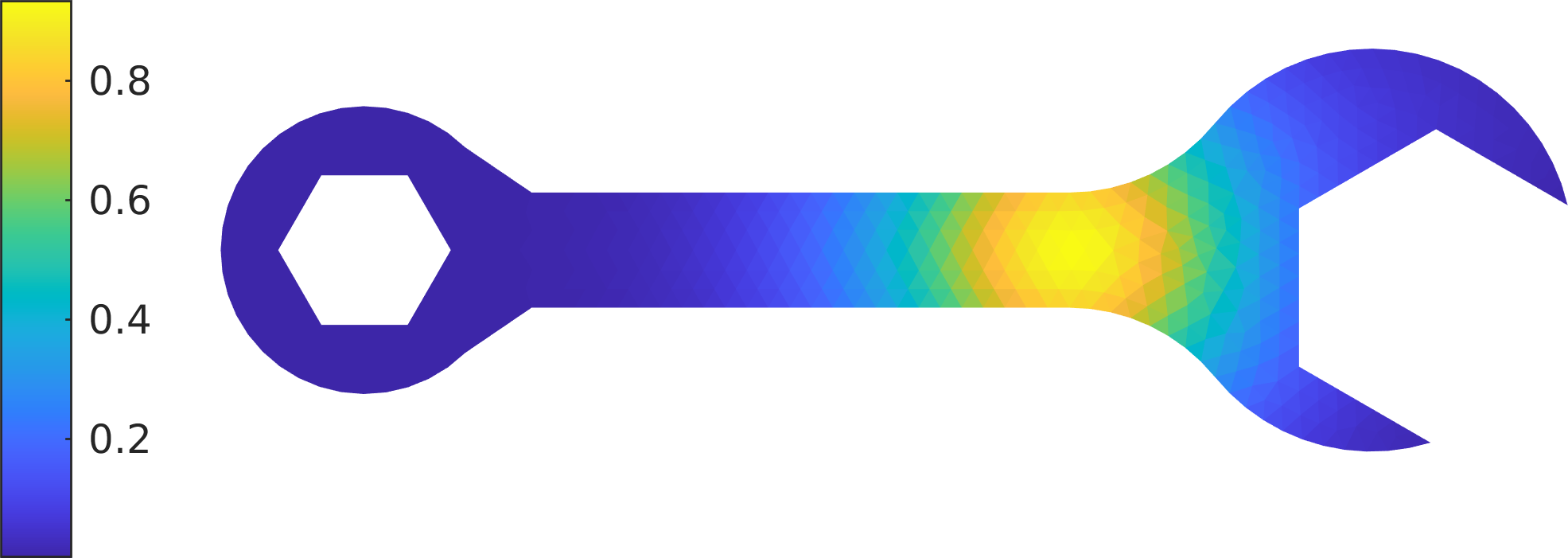}
\includegraphics[width=0.32\linewidth]{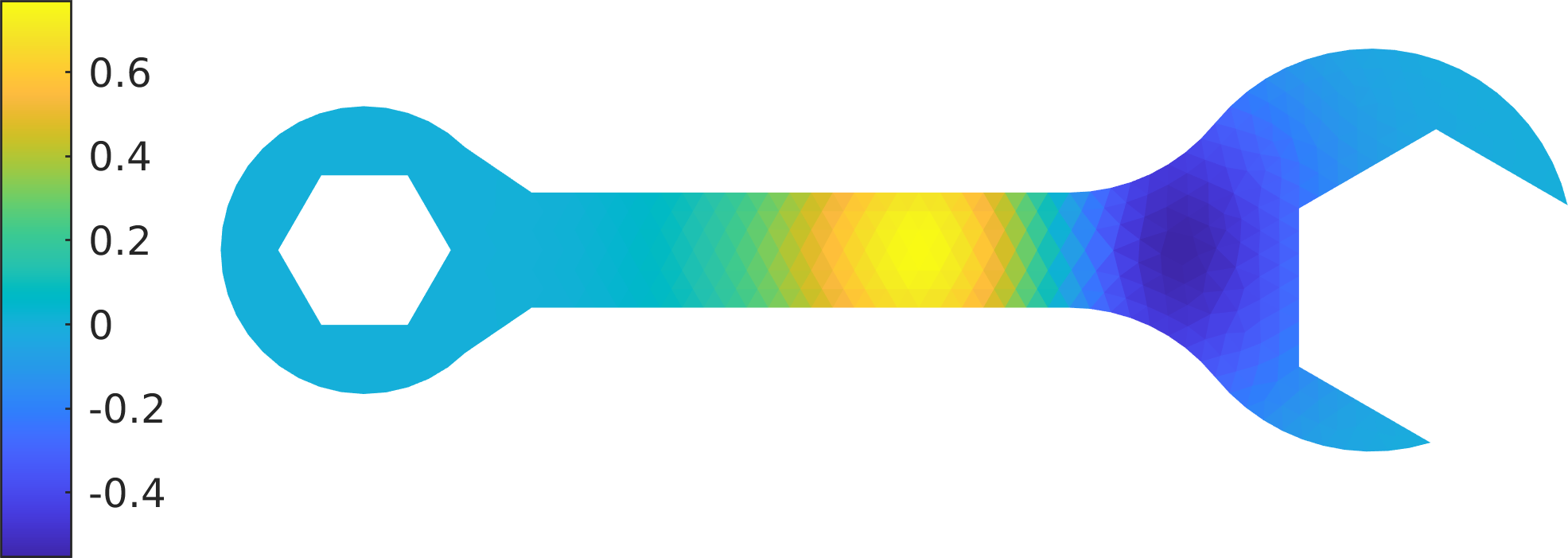}
\includegraphics[width=0.32\linewidth]{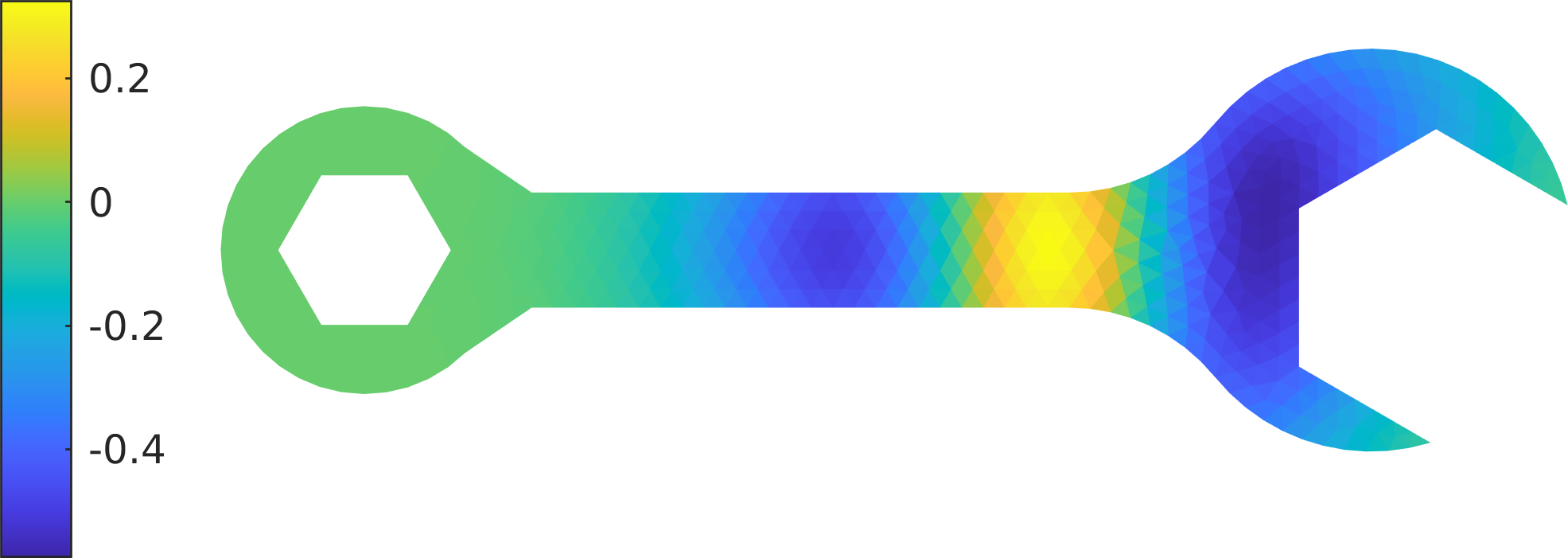}
\caption{Wrench basis vectors in parameters, $(B_t)_1$ (left), $(B_t)_2$ (middle) and $(B_t)_3$ (right).}
\label{fig:wrench-basis}
\end{figure}

\begin{table}[htb]
\caption{\retwo{Linear elasticity model. The setup of sampling methods, the number of density evaluations for generating $N = 8192$ samples, the IACT, the IACT per density evaluation, the estimated square roots of the total variances of $\widehat{\theta}$, and the overhead factors $\tau_{\rm online}(\theta)$. $^\dagger$The ratio of the sample size to the ESS is used here for IS-DIRT.}}
\label{tab:wrench-err}
\begin{center}
\renewcommand{\arraystretch}{1.2}
\small
\begin{tabular}{c|ccc|c|cc}
                     & pCN & pCN-DIRT & pCN-DIRT & IS-DIRT  & NUTS & NUTS-DIRT \\
step size            & $\Delta t=\mathrm{e}^{-1}$ & $\Delta t=10$ & $\Delta t=2$ & -  & auto & auto \\
$N_{\mbox{online}}$                            & 8192      &  8192               &  8192              & 8192 & 127623   &  94024 \\\hline
IACT         & 20.8   & 0.727              &   2.81           & - & 1.85 &  1.52 \\
IACT  per eval.  & 20.8   & 0.727              &   2.81           & 3.53 $^\dagger$ & 28.8 & 17.4 \\\hline
$\surd\mbox{var}(\widehat{\theta})$  & 0.1064    & 0.0232              &  0.0248            & 0.0319 & 0.0147 &  0.0202 \\
$\tau_{\rm online}(\theta)$  & 92.7    & 4.41              &  5.04            & 8.34 & 52.1 &  38.4 \\
\end{tabular}
\end{center}
\end{table}

Next, we compare all exact inference methods for one realization of the data, including NUTS, pCN, NUTS-DIRT, pCN-DIRT, and IS-DIRT. \cancel{In Table~\ref{tab:wrench-err} we show IACT for MCMC (and $N/ESS$ for importance sampling), errors in the posterior expectation of the log-coefficient, estimated using $4$ batches of $2048$ samples,
and the total number of elasticity PDE solves needed to produce those $8192$ samples. In this case, pCN-DIRT with $\Delta t = 10$ (which uses a negatively correlated proposal samples) takes advantage of reduced correlations in $\theta$ samples, and hence in a linear function of them $\log E$.
NUTS can achieve a smaller error in this example but at a much higher number of the posterior density evaluations, requiring the expensive solution of the PDE.} 
\retwo{For each method we produce $N=8192$ samples from the posterior $\Theta|Y=y$. The QoIs are the  $n$-dimensional parameters. We use the estimated total variance, $\mathrm{var}(\widehat{\theta}) = \sum_{i = 1}^n \mathrm{var}(\widehat{\theta}_i)$, to measure the variance-reduction performance over all parameters. In this experiment, variances of estimators are estimated using $4$ batches of $2048$ samples. Table~\ref{tab:wrench-err} summarizes the IACT, IACT per density evaluation (or the ratio of the sample size to the ESS), the estimated square root of the total variance of $\widehat{\theta}$, as well as the overhead factor $\tau_{\rm online}(\theta)$ (see \eqref{eq:overhead}). Similar to the previous example, we observe that all MCMC methods are more efficient when preconditioned by the conditional DIRT, where pCN-DIRT with  $\Delta t=10$ produces the most computationally efficient result.}


\subsection{Elliptic PDE with Besov prior.}\label{sec:Besov}

In problems such as medical imaging and hydrology, practitioners use the PDE
\[
-\nabla (\kappa\nabla u)=f \quad  \text{on} \quad \mathcal{D}=[0,1]^2,
\] 
to model the observable state $u:\mathcal{D}\rightarrow\R$, e.g., pressure field in groundwater or voltage potential field in electrical tomography, given the diffusion property represented by $\kappa:\mathcal{D}\rightarrow\R_{>0}$. The goal is to estimate the unknown random field $\kappa$ from partial observations of the state $u$.
To demonstrate the application of conditional DIRT on problems with non-Gaussian priors, we model $\log(\kappa)$ by a Besov-$\mathcal{B}^r_{pp}$ random field prior~\cite{dashti2012besov,saksman2009discretization}. That is, given a wavelet basis $\{\psi^r_{j,k}\}$, where $j$ controls the scaling, $k$ controls the shift, and $r$ controls the decay of the weighting coefficients, we have $\log \kappa = c_0 + \sum_{\scriptscriptstyle j=0}^\lowsup{\scriptscriptstyle J} \sum_{\scriptscriptstyle k=0}^\lowsup{\scriptscriptstyle 2^\lowsup{\scriptscriptstyle j}-1} b_{j,k} \psi^\lowsup{r}_{j,k}$.
This way, $\log \kappa$ is determined by the vector $\theta=(c_0,b_{0,0},b_{0,1},\ldots) \in \mathbb{R}^\lowsup{n}$, \cancel{$n=2^\lowsup{J}+1$} \reone{$n=2^\lowsup{J+1}$}, which is equipped with a product-form prior
$
 \rho_{\Theta}(\theta) \propto \prod_{i=1}^\lowsup{n} \exp(-\gamma |\theta_i|^p).
$
We choose \reone{$J = 9$}, $r = 2$ and $p = 1$ in this example, so effectively the parameter $\theta$ \reone{is $n = 1024$ dimensional} and follows a Laplace prior. \cancel{We order $\theta_i$ according to Proposition~\ref{prop:bound_Hellinger}.}
The data are noisy observation of $u=u(\theta)$ evaluated at $m=16$ equispaced locations $\{s_i\}_{i=1}^\lowsup{m}\in\mathcal{D}$ perturbed by a centred Gaussian noise with the signal-to-noise-ratio $5$.

Realizations of posterior samples generated by the conditional DIRT for different observed data sets are presented in \ref{sec:besov2}. Here we focus on demonstrating the accuracy of conditional DIRT. 
\cancel{In Figure~7 (top), we let the TT-Cross approximation method truncate the TT ranks from the maximal values of $11$ down to the minimal values that provide the relative maximum error below $0.1$,
and plot $\text{rank}_\text{TT}(\mathcal{T}_{\ell-1}^\sharp \pi_{Y,\Theta}^{\ell})$ for each DIRT layer, as well as the average of TT ranks over $\ell$.}
The tempering parameters are $\beta_0=10^\lowsup{-4}$ and $\beta_{\ell+1} = \sqrt{10} \cdot \beta_{\ell}$.
\cancel{In the top left plot of Figure~7 we vary the order of variables, and in the top right plot we
vary the number of $\theta_i$ variables from $12$ to $22$.
We see that the order following Proposition~\ref{prop:bound_Hellinger} and \eqref{eq:g_TT} gives smallest TT ranks.
Moreover, in this ordering, the intermediate TT ranks are independent of the total number of variables.}
\retwo{We demonstrate the impact of variable reordering on TT ranks by considering three orderings: ({\romannumeral 1}) We first order the parameter $\theta$ and data $y$ according to Proposition~\ref{prop:bound_Hellinger}, and then truncate the parameter dimension to $t=22$. Here we do not truncate the data dimension. ({\romannumeral 2}) We reverse the order of the data given in ({\romannumeral 1}) but keep the order of the truncated parameter. ({\romannumeral 3}) We reverse the order of the truncated parameter given in ({\romannumeral 1}) but keep the same data order. For each ordering, we let the TT-Cross approximation method adapt the TT ranks (with the maximum rank set to $11$) to provide a relative maximum error below $0.1$. The left plot of Figure~\ref{fig:besov1} shows the average TT rank of each coordinate over DIRT layers. It clearly shows that TT ranks are significantly reduced towards the tails of the reordered variables compared to the reverse ordering cases. Then, we use the same ordering as in ({\romannumeral 1}) and use two truncated parameter dimensions $t = \{12,22\}$ to demonstrate the performance of variable ordering with different dimension truncations. The right plot of Figure~\ref{fig:besov1} shows the TT rank of each coordinate for each DIRT layer (thin lines), as well as the  average TT rank of each coordinate over DIRT layers (thick lines). Here we observe a similar rank reduction trend in TT towards the tails of the reordered variables in both cases. Overall, the reordering strategy presented in Section \ref{sec:Variable_ordering} indeed can provide useful insights into the importance of variables in this example. }

\begin{figure}[h!]
\centering
\noindent
\begin{tikzpicture}
\begin{axis}[%
width=0.52\linewidth,
height=0.40\linewidth,
xlabel={coordinate},
ylabel={TT rank},
xmin=0,xmax=38,
legend style={at={(0.99,0.01)},anchor=south east},
]

\addplot+[no marks,line width=1.2pt] table[header=false,x index=0,y index=10]{besov-ex5n-r11-d38.dat} node [pos=0.27,anchor=west,inner sep=0pt] {\footnotesize\;\;$y_{m} \mydots y_1, \theta_1 \mydots \theta_{t}$};
\addplot+[no marks,line width=1.5pt,dashed] table[header=false,x index=0,y index=10]{besov-ex5n-r11-d38-yperm.dat} node[pos=0.25,anchor=south] {\footnotesize$y_{1} \mydots y_m,\theta_1 \mydots \theta_{t}$};
\addplot+[no marks,line width=2.0pt,dotted] table[header=false,x index=0,y index=10]{besov-ex5n-r11-d38-thetaperm.dat} node[pos=0.77,anchor=south east,inner sep=1pt] {\footnotesize$y_{m} \mydots y_1, \theta_t \mydots \theta_{1}$};

\addplot+[black,no marks,solid] coordinates{(16,0) (16,11)} node [very near start] {$y\quad\theta$};
\end{axis}
\end{tikzpicture}
\hfill\noindent
\begin{tikzpicture}
\begin{axis}[%
width=0.52\linewidth,
height=0.40\linewidth,
xlabel={coordinate},
ylabel={TT rank},
xmin=0,xmax=38,
legend style={at={(0.01,0.99)},anchor=north west},
]
\addplot+[no marks,red,solid,opacity=0.5] table[header=false,x index=0,y index=1]{besov-ex5n-r11-d38.dat};
\addplot+[no marks,red,solid,opacity=0.5] table[header=false,x index=0,y index=2]{besov-ex5n-r11-d38.dat}; 
\addplot+[no marks,red,solid,opacity=0.5] table[header=false,x index=0,y index=3]{besov-ex5n-r11-d38.dat}; 
\addplot+[no marks,red,solid,opacity=0.5] table[header=false,x index=0,y index=4]{besov-ex5n-r11-d38.dat}; 
\addplot+[no marks,red,solid,opacity=0.5] table[header=false,x index=0,y index=5]{besov-ex5n-r11-d38.dat}; 
\addplot+[no marks,red,solid,opacity=0.5] table[header=false,x index=0,y index=6]{besov-ex5n-r11-d38.dat}; 
\addplot+[no marks,red,solid,opacity=0.5] table[header=false,x index=0,y index=7]{besov-ex5n-r11-d38.dat}; 
\addplot+[no marks,red,solid,opacity=0.5] table[header=false,x index=0,y index=8]{besov-ex5n-r11-d38.dat}; 
\addplot+[no marks,red,solid,opacity=0.5] table[header=false,x index=0,y index=9]{besov-ex5n-r11-d38.dat};

\addplot+[no marks,red,solid,line width=2pt] table[header=false,x index=0,y index=10]{besov-ex5n-r11-d38.dat} node [pos=0.74,anchor=south west,inner sep=0pt,opacity=1.0] {$t=22$ };

\addplot+[no marks,blue,solid,opacity=0.5] table[header=false,x index=0,y index=1]{besov-ex5n-r11-d28.dat}; 
\addplot+[no marks,blue,solid,opacity=0.5] table[header=false,x index=0,y index=2]{besov-ex5n-r11-d28.dat}; 
\addplot+[no marks,blue,solid,opacity=0.5] table[header=false,x index=0,y index=3]{besov-ex5n-r11-d28.dat}; 
\addplot+[no marks,blue,solid,opacity=0.5] table[header=false,x index=0,y index=4]{besov-ex5n-r11-d28.dat}; 
\addplot+[no marks,blue,solid,opacity=0.5] table[header=false,x index=0,y index=5]{besov-ex5n-r11-d28.dat}; 
\addplot+[no marks,blue,solid,opacity=0.5] table[header=false,x index=0,y index=6]{besov-ex5n-r11-d28.dat}; 
\addplot+[no marks,blue,solid,opacity=0.5] table[header=false,x index=0,y index=7]{besov-ex5n-r11-d28.dat}; 
\addplot+[no marks,blue,solid,opacity=0.5] table[header=false,x index=0,y index=8]{besov-ex5n-r11-d28.dat}; 
\addplot+[no marks,blue,solid,opacity=0.5] table[header=false,x index=0,y index=9]{besov-ex5n-r11-d28.dat}; 

\addplot+[no marks,blue,solid,line width=2pt] table[header=false,x index=0,y index=10]{besov-ex5n-r11-d28.dat} node[pos=0.95,anchor=east] {$t=12$};

\addplot+[black,no marks,solid] coordinates{(16,0) (16,11)} node [very near start] {$y\quad\theta$};
\end{axis}
\end{tikzpicture}
\caption{\retwo{Besov prior example. Left: average TT ranks over all DIRT layers versus variable coordinate for different variable ordering.  A truncated parameter dimension $t=22$ is used in the left plot. Right: individual (thin lines) and average (thick lines) TT ranks over DIRT layers with truncated parameter dimensions $t=12$ and $t=22$. Only the variable ordering suggested by Proposition \ref{prop:bound_Hellinger} is used for the right plot.}}
\label{fig:besov1}
\end{figure}
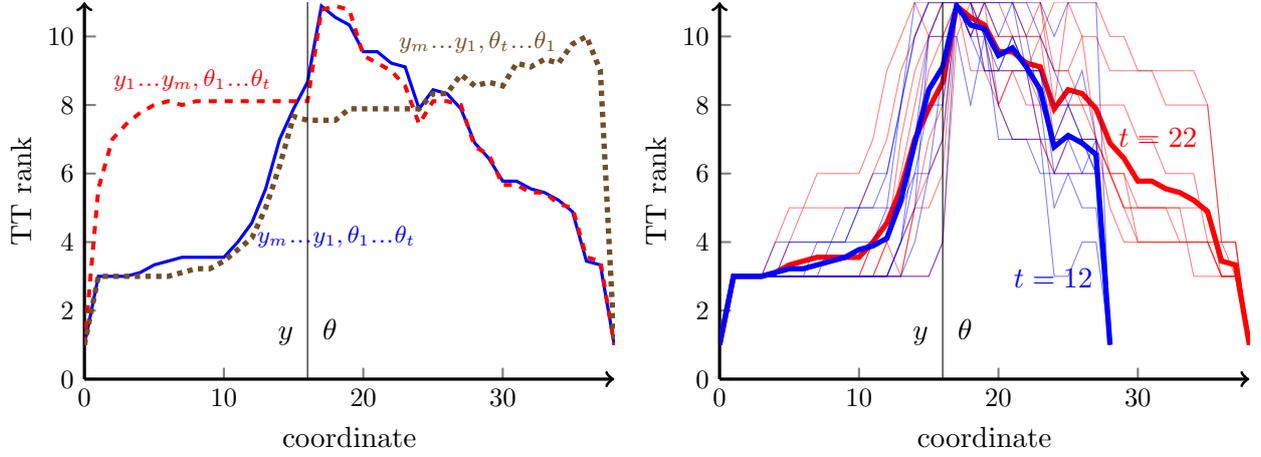

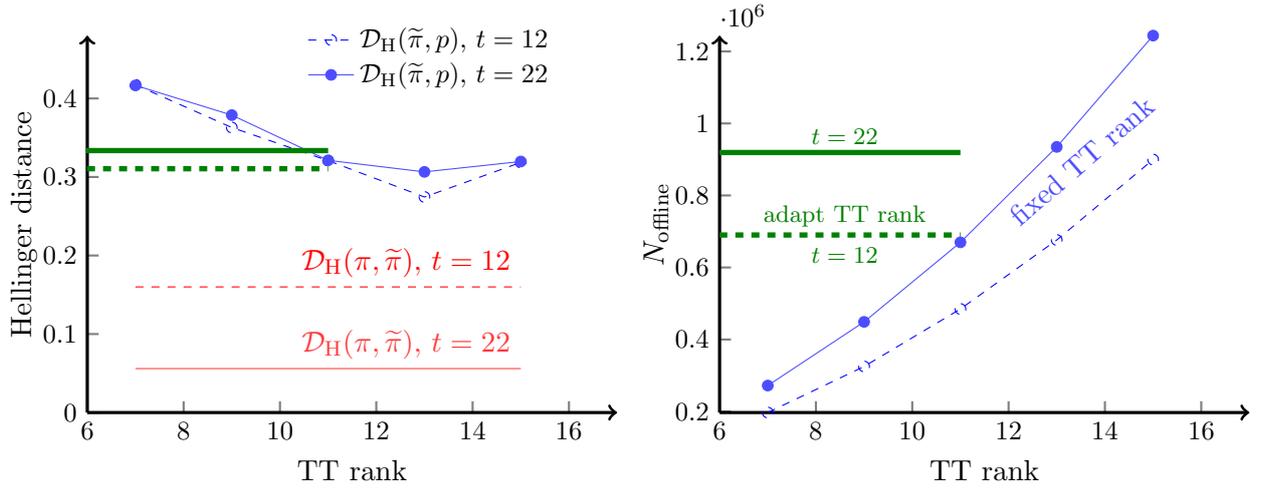
\begin{figure}[h!]
\centering
\noindent
\begin{tikzpicture}
\begin{axis}[%
width=0.52\linewidth,
height=0.40\linewidth,
xlabel={TT rank},
ylabel={Hellinger distance},
legend style={at={(0.4,1.05)},anchor=north west},
ymin=0,ymax=0.48,
xmin=6,xmax=17,
]
\addplot+[blue,dashed,mark=o] coordinates{
(7 ,      0.4174)
(9 ,      0.3630)
(11,      0.3208)
(13,      0.2745)
(15,      0.3185)
};  \addlegendentry{$\DH(\widetilde\pi,p),$ $t=12$};
\addplot+[blue!70!white,solid,mark=*,mark options={fill=blue!70!white}] coordinates{
(7 ,     0.4166)
(9 ,     0.3788)
(11,     0.3212)
(13,     0.3065)
(15,     0.3197)
}; \addlegendentry{$\DH(\widetilde\pi,p),$ $t=22$}; 
\addplot+[no marks,red,dashed,mark=triangle] coordinates{
(7 ,      0.16) %
(9 ,      0.16) %
(11,      0.16) %
(13,      0.16) %
(15,      0.16) %
} node[pos=1,anchor=south east] {$\DH(\pi,\widetilde\pi),$ $t=12$};  
\addplot+[no marks,red!70!pink,solid,mark=triangle*] coordinates{
(7 ,     0.056 )  %
(9 ,     0.056 )  %
(11,     0.056 )  %
(13,     0.056 )  %
(15,     0.056 )  %
} node[pos=1,anchor=south east] {$\DH(\pi,\widetilde\pi),$ $t=22$};  
\addplot+[green!50!black,dashed,no marks,domain=3:11,line width=2pt] {0*x+0.3105}; 
\addplot+[green!50!black,solid,no marks,domain=3:11,line width=2pt]  {0*x+0.3336};
\end{axis}
\end{tikzpicture}
\hfill\noindent
\begin{tikzpicture}
\begin{axis}[%
width=0.52\linewidth,
height=0.40\linewidth,
xlabel={TT rank},
ylabel={$N_{\rm offline}$},
xmin=6,xmax=17,
]
\addplot+[blue,dashed,mark=o] coordinates{
(7 ,        21896*9 )
(9 ,        36108*9 )
(11,        53856*9 )
(13,        75140*9 )
(15,        99960*9 )
};
\addplot+[blue!70!white,solid,mark=*,mark options={fill=blue!70!white}] coordinates{
(7 ,        30226*9 )
(9 ,        49878*9 )
(11,        74426*9 )
(13,       103870*9 )
(15,       138210*9 )
} node[midway,anchor=north west,rotate=41] {fixed TT rank};
\addplot+[green!50!black,dashed,no marks,domain=3:11,line width=2pt] {0*x+690149} node[pos=0.7,inner sep=8pt] {\footnotesize $\overset{\vphantom{\displaystyle\sum}\mbox{adapt TT rank}}{\vphantom{\displaystyle\sum}t=12}$};

\addplot+[green!50!black,solid,no marks,domain=3:11,line width=2pt]  {0*x+919190} node[pos=0.7,anchor=south,inner sep=2pt] {\footnotesize $t=22$};
\end{axis}
\end{tikzpicture}
\caption{\retwo{Besov prior example. Left: Hellinger errors between the DIRT approximation $p$ and the reduced joint density $\widetilde \pi$ with truncated parameter dimensions $t=12$ (dashed lines) and $t=22$ (solid lines), and between the reduced joint density and the original joint density $\pi$. Right: Number of density evaluations in the construction of DIRT. Green and blue lines indicate results of the adaptive TT rank strategy and the fixed TT rank strategy, respectively.}}
\label{fig:besov2}
\end{figure}

In Figure~\ref{fig:besov2} we compare the accuracy and complexity of the DIRT approximation using adaptive TT ranks and fixed TT ranks, where the previous DIRT layer is used as the initial guess for the TT-Cross method on the next layer. \retwo{Here we consider two truncated parameter dimensions of $t=\{12, 22\}$ after reordering (i) and use the full data dimension in both cases.} The adaptive-rank results are shown as green lines (dashed for $t=12$, solid for $t=22$) spanning the TT ranks from 1 to 11.
Fixed-rank results are shown as blue points at the corresponding rank values.
Moreover, red lines show \cancel{the Hellinger distances between the exact densities in reduced and full spaces.} \retwo{the Hellinger distances between the original density and the parameter reduced density, cf. \eqref{eq:api_with_independence_structure}.}
With a more aggressive parameter dimension truncation ($t=12$), the reduced dimensional joint density admits a slightly more accurate DIRT approximation compared to that of $t=22$. This comes at a price of a larger Hellinger distance towards the original density. 
The two strategies for the TT rank selection demonstrate comparable results by mutually compensating two aspects: the adaptive strategy allows one to reduce the ranks between the trailing variables, and hence the number of function evaluations per one TT-Cross iteration, but several cross iterations are needed by the adaptive-rank strategy, which inflates the total number of evaluations.
This suggests that for lower-dimensional problems a simple fixed-rank strategy may be more efficient for constructing DIRT. 
For higher-dimensional problems one may prefer a rank-adaptive strategy to reduce the cost contributed by variables at the tails of the ordering.

\section*{Acknowledgements}
TC acknowledges support from the Australian Research Council under the grant DP210103092. 
SD is thankful for the support from the Engineering and Physical Sciences Research Council New Investigator Award EP/T031255/1.
OZ acknowledges support from the ANR JCJC project MODENA (ANR-21-CE46-0006-01).

\appendix

\section{Proofs and derivations in Section \ref{sec:intro}}

\subsection{Useful lemmas}

The following lemmas will be used to establish the probabilistic error bound of the conditional map in \eqref{eq:Hellinger}.

\begin{lemma}\label{lemma:BoundExpectation}
Let $\pi_1$ and $\pi_2$ be two probability density functions such that $\DH(\pi_1 , \pi_2)\leq e<{1/ \sqrt2}$. For any function $h$ with finite variances $\Var_{\pi_1}(h)<\infty$ and $\Var_{\pi_2}(h)<\infty$, the expectation error satisfies the following inequality
 $$
  \left| \E_{\pi_1}(h) - \E_{\pi_2}(h)  \right| \leq \frac{\sqrt2 e}{1- \sqrt2 e} \left( \sqrt{\Var_{\pi_1}(h)}  + \sqrt{\Var_{\pi_2}(h)}\right) .
 $$
\end{lemma}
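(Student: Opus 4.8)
The plan is to collapse the whole estimate into a self-referential scalar inequality for $\Delta := \E_{\pi_1}(h) - \E_{\pi_2}(h)$ and then solve it using the hypothesis $e < \nicefrac{1}{\sqrt2}$. First I would exploit the translation invariance of $\Delta$: writing $\mu_1 := \E_{\pi_1}(h)$ and $\mu_2 := \E_{\pi_2}(h)$, the identity $\int (h-\mu_1)\pi_1 = 0$ gives $\Delta = \int (h-\mu_1)(\pi_1-\pi_2)$. This is the step that discards the part of $h$ that is uncontrolled against $\pi_1$; note $h - \mu_1 \in L^2(\pi_1) \cap L^2(\pi_2)$ because both variances (hence both means) are finite.

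Next I would factor $\pi_1 - \pi_2 = (\sqrt{\pi_1}-\sqrt{\pi_2})(\sqrt{\pi_1}+\sqrt{\pi_2})$ and split the last factor, obtaining
\[
 \Delta = \int (h-\mu_1)(\sqrt{\pi_1}-\sqrt{\pi_2})\sqrt{\pi_1} + \int (h-\mu_1)(\sqrt{\pi_1}-\sqrt{\pi_2})\sqrt{\pi_2}.
\]
Applying the Cauchy--Schwarz inequality to each integral peels off the common factor $\|\sqrt{\pi_1}-\sqrt{\pi_2}\|_2 = \sqrt2\,\DH(\pi_1,\pi_2) \le \sqrt2\, e$ (here the normalization $\DH^2 = \tfrac12\int(\sqrt{\pi_1}-\sqrt{\pi_2})^2$ produces the $\sqrt2$). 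The first integral is then bounded by $\sqrt2\,e\,\sqrt{\E_{\pi_1}((h-\mu_1)^2)} = \sqrt2\,e\,\sqrt{\Var_{\pi_1}(h)}$, and the second by $\sqrt2\,e\,\sqrt{\E_{\pi_2}((h-\mu_1)^2)}$. The self-reference enters here: expanding $h - \mu_1 = (h-\mu_2) + (\mu_2-\mu_1)$ gives $\E_{\pi_2}((h-\mu_1)^2) = \Var_{\pi_2}(h) + \Delta^2$.

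Finally, using subadditivity of the square root, $\sqrt{\Var_{\pi_2}(h) + \Delta^2} \le \sqrt{\Var_{\pi_2}(h)} + |\Delta|$, I would reach
\[
 |\Delta| \le \sqrt2\, e\left(\sqrt{\Var_{\pi_1}(h)} + \sqrt{\Var_{\pi_2}(h)}\right) + \sqrt2\, e\, |\Delta|,
\]
and since $e < \nicefrac{1}{\sqrt2}$ ensures $1 - \sqrt2\,e > 0$, moving the $|\Delta|$-term across and dividing yields precisely the claimed inequality. The only genuinely delicate point — and the one I expect to be the main (if mild) obstacle — is realizing that the "cross" term $\int (h-\mu_1)(\sqrt{\pi_1}-\sqrt{\pi_2})\sqrt{\pi_2}$ should \emph{not} be estimated by trying to eliminate its dependence on $\Delta$, but rather bounded so as to reintroduce $|\Delta|$ on the right-hand side, turning the whole thing into a solvable linear inequality; the remaining verifications (integrability, the Hellinger normalization constant) are routine.
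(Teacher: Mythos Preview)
Your proof is correct and essentially identical to the paper's. The only cosmetic difference is that the paper applies Cauchy--Schwarz once to the product $(h-\mu_1)(\sqrt{\pi_1}-\sqrt{\pi_2})\cdot(\sqrt{\pi_1}+\sqrt{\pi_2})$ and then uses the $L^2$ triangle inequality on $\|(h-\mu_1)\sqrt{\pi_1}+(h-\mu_1)\sqrt{\pi_2}\|_2$, whereas you first split $\sqrt{\pi_1}+\sqrt{\pi_2}$ and apply Cauchy--Schwarz to each summand; both routes land on the same bound and the same self-referential linear inequality in $|\Delta|$.
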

\begin{proof}
Defining a function $\tilde h(x) := h(x)-\E_{\pi_1 }(h)$, the mean squared error satisfies 
\begin{align*}
  & \hspace{-.5em}| \E_{\pi_1}(h) - \E_{\pi_2}(h)  | =  | \E_{\pi_1}(\tilde h) - \E_{\pi_2}(\tilde h)  | \\
  &= \left| \int \tilde h(x) \left(\sqrt{\pi_1(x)} + \sqrt{\pi_2(x)} \right)\left(\sqrt{\pi_1(x)} - \sqrt{\pi_2(x)} \right)  \d x\right| \\
  &\leq \left( \int \left(\sqrt{\pi_1(x)} - \sqrt{\pi_2(x)} \right)^2  \d x\right)^\frac12 \left( \int \tilde h(x)^2 \left(\sqrt{\pi_1(x)} + \sqrt{\pi_2(x)} \right)^2  \d x \right)^\frac12 \\
  &= \sqrt{2}\,\DH(\pi_1 , \pi_2) \left( \int \left(\tilde h(x)\sqrt{\pi_1(x)} + \tilde h(x)\sqrt{\pi_2(x)} \right)^2  \d x\right)^\frac12    \\
  &\leq \sqrt{2}\,\DH(\pi_1 , \pi_2) \left( \left( \int \left(  h(x)-\E_{\pi_1}(h) \right)^2 \pi_1(x) \d x\right)^\frac12 + \left( \int  \left(  h(x)-\E_{\pi_1}(h) \right)^2 \pi_2(x)  \d x\right)^\frac12  \right) \\
 & = \sqrt{2}\,\DH(\pi_1 , \pi_2) \left( \sqrt{\Var_{\pi_1}(h)} + \left(\Var_{\pi_2}(h) + |\E_{\pi_2}(h)-\E_{\pi_1}(h)|^2 \right)^\frac12  \right) \\
  &\leq \sqrt{2}\,\DH(\pi_1 , \pi_2) \left( \sqrt{\Var_{\pi_1}(h)} + \sqrt{\Var_{\pi_2}(h)} + |\E_{\pi_1}(h)-\E_{\pi_2}(h)| \right)
\end{align*}
Substituting the condition $\DH(\pi_1 , \pi_2)\leq e$, we obtain
\begin{align*}
  (1-\sqrt{2}e)| \E_{\pi_1}(h) - \E_{\pi_2}(h)  | \leq \sqrt2 e \left( \sqrt{\Var_{\pi_1}(h)}  + \sqrt{\Var_{\pi_2}(h)}\right),
\end{align*}
and, because $\sqrt{2}e<1$, the result follows.
\end{proof}

\begin{lemma}\label{lemma5232}
 For two joint probability densities $\pi_{Y,\Theta}$ and $\widetilde\pi_{Y,\Theta}$ and a given realization $y$, the expected  Hellinger distance between $\pi_{\Theta|Y}$ and $\widetilde\pi_{\Theta|Y}$ satisfies 
  $$
  \E_{Y\sim\pi_{Y}}\left[ \DH( \pi_{\Theta|Y}, \widetilde\pi_{\Theta|Y} ) \right]
  \leq 2 \DH( \pi_{Y,\Theta},\widetilde\pi_{Y,\Theta}  ).
 $$
\end{lemma}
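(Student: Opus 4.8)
The plan is to interpolate between the two joint densities through an auxiliary joint density that shares the $Y$-marginal of $\pi_{Y,\Theta}$ but the conditional of $\widetilde\pi_{Y,\Theta}$, namely $q_{Y,\Theta}(y,\theta) := \pi_Y(y)\,\widetilde\pi_{\Theta|Y}(\theta|y)$, where $\pi_Y$ denotes the $Y$-marginal of $\pi_{Y,\Theta}$. One checks immediately that $q_{Y,\Theta}$ is a valid density. The point of this choice is that the Hellinger distance between $\pi_{Y,\Theta}$ and $q_{Y,\Theta}$ decouples exactly into the quantity we want to bound: since both densities have the same marginal $\pi_Y$, factoring joint $=$ marginal $\times$ conditional under the square root gives
\[
 \DH(\pi_{Y,\Theta}, q_{Y,\Theta})^2 = \tfrac12\int \pi_Y(y)\int\big(\sqrt{\pi_{\Theta|Y}(\theta|y)} - \sqrt{\widetilde\pi_{\Theta|Y}(\theta|y)}\big)^2 \d\theta\,\d y = \E_{Y\sim\pi_Y}\big[\DH(\pi_{\Theta|Y}, \widetilde\pi_{\Theta|Y})^2\big].
\]
Jensen's inequality (concavity of $t\mapsto\sqrt t$) then yields $\E_{Y\sim\pi_Y}[\DH(\pi_{\Theta|Y},\widetilde\pi_{\Theta|Y})] \le \DH(\pi_{Y,\Theta}, q_{Y,\Theta})$, so it remains to bound $\DH(\pi_{Y,\Theta}, q_{Y,\Theta})$ by $2\,\DH(\pi_{Y,\Theta},\widetilde\pi_{Y,\Theta})$.

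For that I would use the triangle inequality for the Hellinger metric, $\DH(\pi_{Y,\Theta}, q_{Y,\Theta}) \le \DH(\pi_{Y,\Theta}, \widetilde\pi_{Y,\Theta}) + \DH(\widetilde\pi_{Y,\Theta}, q_{Y,\Theta})$, and control the last term. Since $\widetilde\pi_{Y,\Theta}$ and $q_{Y,\Theta}$ share the conditional $\widetilde\pi_{\Theta|Y}$ and differ only in their $Y$-marginals ($\widetilde\pi_Y$ versus $\pi_Y$), the same computation collapses the $\theta$-integral (using $\int\widetilde\pi_{\Theta|Y}\,\d\theta=1$) and gives $\DH(\widetilde\pi_{Y,\Theta}, q_{Y,\Theta}) = \DH(\pi_Y, \widetilde\pi_Y)$. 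Finally, marginalization does not increase the Hellinger distance, i.e. $\DH(\pi_Y,\widetilde\pi_Y) \le \DH(\pi_{Y,\Theta},\widetilde\pi_{Y,\Theta})$: writing $\DH^2 = 1 - \int\sqrt{\,\cdot\,}$ via the Bhattacharyya affinity and applying Cauchy--Schwarz to the conditional affinity $\int\sqrt{\pi_{\Theta|Y}(\theta|y)\,\widetilde\pi_{\Theta|Y}(\theta|y)}\,\d\theta \le 1$ shows the joint affinity is at most the marginal affinity. Chaining these three facts gives $\DH(\pi_{Y,\Theta}, q_{Y,\Theta}) \le 2\,\DH(\pi_{Y,\Theta},\widetilde\pi_{Y,\Theta})$, and combining with the Jensen step completes the proof.

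The only genuine idea here is the choice of the auxiliary density $q_{Y,\Theta}$; everything afterwards is a sequence of one-line computations plus the triangle inequality and Cauchy--Schwarz, so I do not expect any serious obstacle. The mild technical care needed is that the conditional densities are only defined $\pi_Y$- (resp.\ $\widetilde\pi_Y$-) almost everywhere; since the outer integrals are taken precisely against these marginals this is harmless, and I would simply assume the relevant absolute continuity so that all conditionals appearing are well defined where they are integrated.
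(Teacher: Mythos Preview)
Your proof is correct and is essentially the same argument as the paper's, repackaged. The auxiliary density you call $q_{Y,\Theta}=\pi_Y\,\widetilde\pi_{\Theta|Y}$ is exactly the function $\widetilde\pi_{Y,\Theta}\cdot\pi_Y/\widetilde\pi_Y$ that appears (unnamed) in the paper's first display; your identity $\E_{\pi_Y}[\DH(\pi_{\Theta|Y},\widetilde\pi_{\Theta|Y})^2]=\DH(\pi_{Y,\Theta},q_{Y,\Theta})^2$ is their first chain of equalities, your triangle inequality replaces their elementary bound $\tfrac12(a-c)^2\le(a-b)^2+(b-c)^2$, and your Cauchy--Schwarz step for $\DH(\pi_Y,\widetilde\pi_Y)\le\DH(\pi_{Y,\Theta},\widetilde\pi_{Y,\Theta})$ is equivalent to the Jensen argument the paper uses for the same inequality. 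The only cosmetic difference is that you take the square root (via Jensen) before applying the triangle inequality, while the paper bounds the squared quantity first; both routes land on the same constant $2$.
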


\begin{proof}%
Applying Jensen's inequality, we have 
\[
\E_{Y\sim\pi_{Y}}\left[ \DH( \pi_{\Theta|Y}, \widetilde\pi_{\Theta|Y} ) \right] \leq \sqrt{ \E_{Y\sim\pi_{Y}}\left[ \DH( \pi_{\Theta|Y}, \widetilde\pi_{\Theta|Y} )^2 \right]}.
\]
The expected squared Hellinger distance between $\pi_{\Theta|Y}$ and $\widetilde\pi_{\Theta|Y}$ satisfies 
\begin{align*}
   & \hspace{-0.5em} \E_{Y\sim\pi_{Y}}[ \DH( \pi_{\Theta|Y},  \widetilde\pi_{\Theta|Y} )^2 ]\\
   &= \int \left( \frac12\int \left( \sqrt{\pi_{\Theta|Y}(\theta|y)}-\sqrt{\widetilde\pi_{\Theta|Y}(\theta|y)} \right)^2 \d \theta \right) \pi_{Y}(y) \d y\\
   &= \frac12  \int  \left( \sqrt{\pi_{Y,\Theta}(y,\theta)}-\sqrt{\widetilde\pi_{Y,\Theta}(y,\theta) \frac{\pi_{Y}(y)}{\widetilde\pi_{Y}(y)}} \right)^2 \!\! \d \theta  \d y\\
   &\leq \int  \left( \sqrt{\pi_{Y,\Theta}(y,\theta)}-\sqrt{\widetilde\pi_{Y,\Theta}(y,\theta)} \right)^2 \!\! \d \theta \d y 
   + \int  \left( \sqrt{\widetilde\pi_{Y,\Theta}(y,\theta)}-\sqrt{\widetilde\pi_{Y,\Theta}(y,\theta) \frac{\pi_{Y}(y)}{\widetilde\pi_{Y}(y)}} \right)^2 \!\! \d \theta \d y \\
   &= 2\DH( \pi_{Y,\Theta}, \widetilde\pi_{Y,\Theta} )^2 + \int  \left( \sqrt{\widetilde\pi_{Y}(y)}-\sqrt{\pi_{Y}(y)} \right)^2 \d y \\
   &= 2\DH( \pi_{Y,\Theta}, \widetilde\pi_{Y,\Theta} )^2 + 2\DH( \pi_{Y}, \widetilde\pi_{Y} )^2.
\end{align*}
The squared Hellinger distance between the marginal probability densities satisfies
\begin{align*}
   \DH( \pi_{Y}, \widetilde\pi_{Y} )^2
   &= \frac12 \int  \left( \sqrt{\frac{\widetilde\pi_{Y}(y)}{\pi_{Y}(y)}}-1 \right)^2  \pi_{Y}(y) \d y \\
   &= \frac12 \int  \left( \sqrt{\int \frac{\widetilde\pi_{Y,\Theta}(y,\theta)}{\pi_{Y,\Theta}(y,\theta)} \pi_{\Theta|Y}(\theta|y) \d \theta}- \sqrt{\int \pi_{\Theta|Y}(\theta|y) \d \theta} \right)^2  \pi_{Y}(y) \d y \\
   &\leq \frac12 \int \left(\int  \left( \sqrt{ \frac{\widetilde\pi_{Y,\Theta}(y,\theta)}{\pi_{Y,\Theta}(y,\theta)} }-1 \right)^2  \pi_{\Theta|Y}(\theta|y) \d \theta \right) \pi_{Y}(y) \d y \\
   &= \frac12 \int  \left( \sqrt{ \frac{\widetilde\pi_{Y,\Theta}(y,\theta)}{\pi_{Y,\Theta}(y,\theta)} }-1 \right)^2  \pi_{Y,\Theta}(y,\theta)  \d \theta \d y \\
   &= \DH( \pi_{Y,\Theta}, \widetilde\pi_{Y,\Theta} )^2,
\end{align*}
where the inequality follows from Jensen's inequality. This way, we have 
\[
\E_{Y\sim\pi_{Y}}[ \DH( \pi_{\Theta|Y}, \widetilde\pi_{\Theta|Y} )^2] \leq 4\DH( \pi_{Y,\Theta},\widetilde\pi_{Y,\Theta}  )^2,
\]
and thus the result follows. 
\end{proof}

\subsection{A probabilistic error bound of the conditional transport}\label{proof:MarkovBound}

Suppose the Hellinger distance between the joint probability densities $\pi_{Y,\Theta}$ and $\widetilde\pi_{Y,\Theta}$ satisfies $\DH( \pi_{Y,\Theta}, \widetilde\pi_{Y,\Theta} ) \leq \varepsilon$ for some $\varepsilon < \sqrt2/4$. 
For any $\tau \in [0,1]$, applying Markov's inequality and Lemma \ref{lemma5232} yields
\begin{align*}
 \mathbb{P}_{Y\sim\pi_{Y}}\left[ \DH( \pi_{\Theta|Y}, \widetilde\pi_{\Theta|Y} ) \leq \tau \right]
 &= 1 - \mathbb{P}_{Y\sim\pi_{Y}}\left[ \DH( \pi_{\Theta|Y}, \widetilde\pi_{\Theta|Y} ) > \tau \right] \\
 &> 1 - \frac{\E_{Y\sim\pi_{Y}}[ \DH( \pi_{\Theta|Y}, \widetilde\pi_{\Theta|Y} ) ] }{\tau}\\
 &\geq 1 - \frac{2\,\DH( \pi_{Y,\Theta},\widetilde\pi_{Y,\Theta}  )}{\tau} 
 \geq 1-\frac{2\,\varepsilon}{\tau}.
\end{align*}
Let $\delta = 2\varepsilon/\tau$, we deduce that $ \DH( \pi_{\Theta|Y}, \widetilde\pi_{\Theta|Y} ) \leq 2\varepsilon / \delta$ holds with probability greater than $1-\delta$. 
Applying Lemma \ref{lemma:BoundExpectation}, with probability greater than $1-\delta$, we have the event 
$$
  \frac{| \E_{\pi_{\Theta|Y}}(h) - \E_{\widetilde\pi_{\Theta|Y}}(h)|}{\sqrt{ \Var_{\pi_{\Theta|Y}}(h)} + \sqrt{\Var_{\widetilde\pi_{\Theta|Y}}(h)}}
  \leq \frac{2 \sqrt2 \,\varepsilon/\delta}{ 1-2 \sqrt2 \,\varepsilon/\delta}
  =\frac{4 \varepsilon}{ \sqrt2 \delta -4 \varepsilon}.
$$
This concludes the result. \hfill $\square$


\section{Further details of Section \ref{sec:background}}

\subsection{TT-Cross approximation of functions}\label{sec:cross}
To simplify the notation, we aggregate all random variables into one random vector
\begin{equation}
X := (X_{1},\ldots,X_m,X_{m+1},\ldots,X_{m+n}) \equiv (Y_m,\ldots,Y_1,\Theta_1,\ldots,\Theta_n),
\end{equation}
and denote its corresponding realization $x = (x_1,\ldots,x_{m+n})$.
The problem of constructing a TT approximation of an arbitrary function $q(x)$ (keeping in mind the equivalence $q(x) = \sqrt{\pi_{Y,\Theta}(y,\theta)}$) can be written as
\begin{equation}
 q(x) \approx g(x)  := \mG_{1}(x_{1})   \hdots  \mG_{m+n}(x_{m+n}),
\end{equation}
where $\mG_{k}(x_k) \in \mathbb{R}^{r_{k-1} \times r_k}$, with $r_0=r_{m+n}=1$.
For practical computations, we introduce univariate bases $\{\phi_k^{(i)}(x_{k})\}_{i=1}^{I_k}$, $k=1,\ldots,m+n$, in which we expand the corresponding TT cores,
\begin{align}
 \mG_{k}^{(\alpha_{k-1},\alpha_k)}(x_{k}) &= \sum_{i=1}^{I_k} \phi_k^{(i)}(x_{k}) \tA_k[\alpha_{k-1},i,\alpha_k].
\end{align}
Here, $I_k \in \mathbb{N}$ is a dimension of the basis in the $k$th variable.
Note that one only needs to store the coefficient tensors $\{\tA_k\}$.
Counting their cardinalities,
we obtain the TT storage complexity $\sum_{k=1}^{m+n} r_{k-1} I_k r_k$.
Introducing upper bounds $r:=\max_k r_k$, $I:=\max_k I_k$,
we arrive at a short estimate $\mathcal{O}((n+m)Ir^2)$, linear in the number of variables.

To compute the coefficients from interpolation, we consider sets of {\bf collocation points} $\mathtt{X}_{k}:=\{x_{k}^{(i)}\}_{i=1}^{I_k}$ such that the Vandermonde matrices
$
\Phi_k(\mathtt{X}_{k}):=\left[\phi_k^{(j)}(x_{k}^{(i)})\right]_{i,j=1}^{I_k}
$
are invertible.

The TT-Cross is an alternating direction algorithm, which iterates over the TT cores, $k=1,\ldots, m+n$, and in each step it computes the coefficients $\tA_k$ by solving a system of interpolation equations
$$
g(x^{(j)}) = \tilde g(x^{(j)}) \qquad \forall x^{(j)} \in \mathtt{X}_{<k>} \subset \mathbb{R}^{m+n},
$$
where $\mathtt{X}_{<k>}$ is a set of samples of cardinality $r_{k-1}I_k r_k$ and structure described below,
ensuring the unique resolution of those equations.
The alternating iteration is beneficial, since the TT decomposition is {\bf linear} with respect to each individual tensor $\tA_k$.
Indeed, we can write $\tilde g(x^{(j)})$ in the form
\canceleq{\begin{equation*}
\msout{
  \tilde g(x^{(j)}) \equiv \sum_{\alpha_{k-1},i,\alpha_k} \left[\mG_{\le k-1}^{(\alpha_{k-1})}(x^{(j)}_{\le k-1}) \phi_k^{(i)}(x_k^{(j)}) \mG_{>k}^{(\alpha_k)}(x^{(j)}_{>k})\right] \cdot \tA_k[\alpha_{k-1},i,\alpha_k] = g(x^{(j)}),
}
\end{equation*}}
\begin{equation}\label{eq:frame-linear}
\reone{
 \tilde g(x^{(j)}) \equiv \sum_{\alpha_{k-1},i,\alpha_k} \left[\mG_{1: k-1}^{(\alpha_{k-1})}(x^{(j)}_{1:k-1}) \phi_k^{(i)}(x_k^{(j)}) \mG_{k+1:m+n}^{(\alpha_k)}(x^{(j)}_{k+1:m+n})\right] \cdot \tA_k[\alpha_{k-1},i,\alpha_k] = g(x^{(j)}),
}
\end{equation}
where we expand the notation to let $x_{1:0} = x_{m+n+1:m+n} = \emptyset$.
Note that \eqref{eq:frame-linear} resembles linear equations.
We can {\bf reshape} the tensor $\tA_k$ into a vector $a_k\in\mathbb{R}^{r_{k-1}I_k r_k}$ with the same elements,
\begin{equation}\label{eq:core-vec}
a_k[\alpha_{k-1} + (i-1) r_{k-1} + (\alpha_k-1) r_{k-1} I_k] = \tA_k[\alpha_{k-1},i,\alpha_k],
\end{equation}
and introduce a matrix $G_{\neq k} \in \mathbb{R}^{r_{k-1}I_k r_k \times r_{k-1}I_k r_k}$ with elements
\canceleq{\begin{equation*}
\msout{
  G_{\neq k}[j, \alpha_{k-1} + (i-1) r_{k-1} + (\alpha_k-1) r_{k-1} I_k] = \mG_{\le k-1}^{(\alpha_{k-1})}(x^{(j)}_{\le k-1}) \phi_k^{(i)}(x_k^{(j)}) \mG_{>k}^{(\alpha_k)}(x^{(j)}_{>k}).
}
\end{equation*}}
\begin{equation}\label{eq:frame}
\reone{
  G_{\neq k}[j, \alpha_{k-1} + (i-1) r_{k-1} + (\alpha_k-1) r_{k-1} I_k] = \mG_{1:k-1}^{(\alpha_{k-1})}(x^{(j)}_{1:k-1}) \phi_k^{(i)}(x_k^{(j)}) \mG_{k+1:m+n}^{(\alpha_k)}(x^{(j)}_{k+1:m+n}).
}
\end{equation}
Now the square linear system
$
G_{\neq k} a_k = g(\mathtt{X}_{<k>})
$
becomes more apparent.
We can also notice that the right hand side requires $\mathcal{O}(Ir^2)$ samples of the desired density function.

The construction and conditioning of \eqref{eq:frame} can be simplified dramatically if we endow the set $\mathtt{X}_{<k>}$ with a special structure, and recall that equations \eqref{eq:frame-linear} are solved in the course of consecutive iterations over $k$.
First, let us restrict $\mathtt{X}_{<k>}$ to a {\bf Cartesian} form
\begin{equation}\label{eq:Cart-set}
\mathtt{X}_{<k>} = \tilde{\mathtt{X}}_{1:k-1} \times \mathtt{X}_{k} \times \tilde{\mathtt{X}}_{k+1:m+n},
\end{equation}
where in turn $\tilde{\mathtt{X}}_{1:k-1},\tilde{\mathtt{X}}_{k+1:m+n}$ contain samples of fewer coordinates,
\begin{equation}\label{eq:Xiface}
  \tilde{\mathtt{X}}_{1:k-1}:=\left\{(x_{1}^{(\alpha_{k-1})},\ldots,x_{k-1}^{(\alpha_{k-1})})\right\}_{\alpha_{k-1}=1}^{r_{k-1}}, \qquad
  \tilde{\mathtt{X}}_{k+1:m+n}:=\left\{(x_{k+1}^{(\alpha_{k})},\ldots,x_{m+n}^{(\alpha_{k})})\right\}_{\alpha_{k}=1}^{r_{k}}.
\end{equation}
We can say that a union of the sets $\mathtt{X}_{<k>}$ has a shape of a {\bf cross}, since each set $\mathtt{X}_{<k>}$ samples the entire collocation set in the $k$th variable, and only selected points in the other coordinates.
Hence the name of the algorithm.
Examples for two and three variables are shown in Figure \ref{fig:cross}.

\begin{figure}
\centering
\begin{tikzpicture}[x=0.1\linewidth,y=-0.1\linewidth]
\draw[line width=1pt] (0,0)--(2,0)--(2,2)--(0,2)--(0,0);

\draw[fill=blue] (0.5,0)--(0.8,0)--(0.8,2)--(0.5,2)--(0.5,0);
\node[anchor=north west] at (0.8, 0) {$\mathtt{X}_{<1>}$};
\draw[] (0.6,0)--(0.6,2);
\draw[] (0.7,0)--(0.7,2);

\draw[fill=red,opacity=0.5] (0,1)--(2,1)--(2,1.3)--(0,1.3)--(0,1);
\node[anchor=north east] at (2, 1.3) {$\mathtt{X}_{<2>}$};
\draw[] (0,1.1)--(2,1.1);
\draw[] (0,1.2)--(2,1.2);
\end{tikzpicture}
\hspace{4em}
\begin{tikzpicture}[x=0.1\linewidth,y=-0.1\linewidth,z=0.05\linewidth]


\draw[draw=black,fill=green!50!black,opacity=0.5] (0.5,1.1,0.2)--(0.5,1.2,0.2)--(0.5,1.2,1.5)--(0.5,1.1,1.5)--(0.5,1.1,0.2);
\draw[draw=black,fill=green!50!black,opacity=0.5] (0.5,1.2,0.2)--(0.6,1.2,0.2)--(0.6,1.2,1.5)--(0.5,1.2,1.5)--(0.5,1.2,0.2);
\draw[draw=black,fill=green!50!black,opacity=0.5] (0.5,1.1,0.2)--(0.6,1.1,0.2)--(0.6,1.1,1.5)--(0.5,1.1,1.5)--(0.5,1.1,0.2);
\draw[draw=black,fill=green!50!black,opacity=0.5] (0.6,1.1,0.2)--(0.6,1.2,0.2)--(0.6,1.2,1.5)--(0.6,1.1,1.5)--(0.6,1.1,0.2);
\draw[draw=black,fill=green!50!black,opacity=0.5] (0.5,1,0.2)--(0.5,1.1,0.2)--(0.5,1.1,1.5)--(0.5,1,1.5)--(0.5,1,0.2);
\draw[draw=black,fill=green!50!black,opacity=0.5] (0.5,1.1,0.2)--(0.6,1.1,0.2)--(0.6,1.1,1.5)--(0.5,1.1,1.5)--(0.5,1.1,0.2);
\draw[draw=black,fill=green!50!black,opacity=0.5] (0.5,1,0.2)--(0.6,1,0.2)--(0.6,1,1.5)--(0.5,1,1.5)--(0.5,1,0.2);
\draw[draw=black,fill=green!50!black,opacity=0.5] (0.6,1,0.2)--(0.6,1.1,0.2)--(0.6,1.1,1.5)--(0.6,1,1.5)--(0.6,1,0.2);
\draw[draw=black,fill=green!50!black,opacity=0.5] (0.6,1.1,0.2)--(0.6,1.2,0.2)--(0.6,1.2,1.5)--(0.6,1.1,1.5)--(0.6,1.1,0.2);
\draw[draw=black,fill=green!50!black,opacity=0.5] (0.6,1.2,0.2)--(0.7,1.2,0.2)--(0.7,1.2,1.5)--(0.6,1.2,1.5)--(0.6,1.2,0.2);
\draw[draw=black,fill=green!50!black,opacity=0.5] (0.6,1.1,0.2)--(0.7,1.1,0.2)--(0.7,1.1,1.5)--(0.6,1.1,1.5)--(0.6,1.1,0.2);
\draw[draw=black,fill=green!50!black,opacity=0.5] (0.7,1.1,0.2)--(0.7,1.2,0.2)--(0.7,1.2,1.5)--(0.7,1.1,1.5)--(0.7,1.1,0.2);
\draw[draw=black,fill=green!50!black,opacity=0.5] (0.6,1,0.2)--(0.6,1.1,0.2)--(0.6,1.1,1.5)--(0.6,1,1.5)--(0.6,1,0.2);
\draw[draw=black,fill=green!50!black,opacity=0.5] (0.6,1.1,0.2)--(0.7,1.1,0.2)--(0.7,1.1,1.5)--(0.6,1.1,1.5)--(0.6,1.1,0.2);
\draw[draw=black,fill=green!50!black,opacity=0.5] (0.6,1,0.2)--(0.7,1,0.2)--(0.7,1,1.5)--(0.6,1,1.5)--(0.6,1,0.2);
\draw[draw=black,fill=green!50!black,opacity=0.5] (0.7,1,0.2)--(0.7,1.1,0.2)--(0.7,1.1,1.5)--(0.7,1,1.5)--(0.7,1,0.2);

\node[anchor=north west] at (0.7, 1, 1.5) {$\mathtt{X}_{<3>}$};

\draw[draw=black,fill=blue,opacity=0.5] (0.5,1.2,0.1)--(0.5,1.2,0.2)--(0.5,2,0.2)--(0.5,2,0.1)--(0.5,1.2,0.1);
\draw[draw=black,fill=blue,opacity=0.5] (0.5,1.2,0.2)--(0.6,1.2,0.2)--(0.6,2,0.2)--(0.5,2,0.2)--(0.5,1.2,0.2);
\draw[draw=black,fill=blue,opacity=0.5] (0.5,1.2,0.1)--(0.6,1.2,0.1)--(0.6,2,0.1)--(0.5,2,0.1)--(0.5,1.2,0.1);
\draw[draw=black,fill=blue,opacity=0.5] (0.6,0,0.1)--(0.6,1.2,0.2)--(0.6,2,0.2)--(0.6,2,0.1)--(0.6,1.2,0.1);
\draw[draw=black,fill=blue,opacity=0.5] (0.5,1.2,0)--(0.5,1.2,0.1)--(0.5,2,0.1)--(0.5,2,0)--(0.5,1.2,0);
\draw[draw=black,fill=blue,opacity=0.5] (0.5,1.2,0.1)--(0.6,1.2,0.1)--(0.6,2,0.1)--(0.5,2,0.1)--(0.5,1.2,0.1);
\draw[draw=black,fill=blue,opacity=0.5] (0.5,1.2,0)--(0.6,1.2,0)--(0.6,2,0)--(0.5,2,0)--(0.5,1.2,0);
\draw[draw=black,fill=blue,opacity=0.5] (0.6,1.2,0)--(0.6,1.2,0.1)--(0.6,2,0.1)--(0.6,2,0)--(0.6,1.2,0);
\draw[draw=black,fill=blue,opacity=0.5] (0.6,1.2,0)--(0.6,1.2,0.1)--(0.6,2,0.1)--(0.6,2,0)--(0.6,1.2,0);
\draw[draw=black,fill=blue,opacity=0.5] (0.6,1.2,0.1)--(0.7,1.2,0.1)--(0.7,2,0.1)--(0.6,2,0.1)--(0.6,1.2,0.1);
\draw[draw=black,fill=blue,opacity=0.5] (0.6,1.2,0)--(0.7,1.2,0)--(0.7,2,0)--(0.6,2,0)--(0.6,1.2,0);
\draw[draw=black,fill=blue,opacity=0.5] (0.7,1.2,0)--(0.7,1.2,0.1)--(0.7,2,0.1)--(0.7,2,0)--(0.7,1.2,0);
\draw[draw=black,fill=blue,opacity=0.5] (0.6,1.2,0.1)--(0.6,1.2,0.2)--(0.6,2,0.2)--(0.6,2,0.1)--(0.6,1.2,0.1);
\draw[draw=black,fill=blue,opacity=0.5] (0.6,1.2,0.2)--(0.7,1.2,0.2)--(0.7,2,0.2)--(0.6,2,0.2)--(0.6,1.2,0.2);
\draw[draw=black,fill=blue,opacity=0.5] (0.6,1.2,0.1)--(0.7,1.2,0.1)--(0.7,2,0.1)--(0.6,2,0.1)--(0.6,1.2,0.1);
\draw[draw=black,fill=blue,opacity=0.5] (0.7,1.2,0.1)--(0.7,1.2,0.2)--(0.7,2,0.2)--(0.7,2,0.1)--(0.7,1.2,0.1);

\draw[draw=black,fill=red,opacity=0.5] (0,1.1,0.2)--(2,1.1,0.2)--(2,1.2,0.2)--(0,1.2,0.2)--(0,1.1,0.2);
\draw[draw=black,fill=red,opacity=0.5] (0,1.2,0.1)--(2,1.2,0.1)--(2,1.2,0.2)--(0,1.2,0.2)--(0,1.2,0.1);
\draw[draw=black,fill=red,opacity=0.5] (0,1.1,0.1)--(2,1.1,0.1)--(2,1.1,0.2)--(0,1.1,0.2)--(0,1.1,0.1);
\draw[draw=black,fill=red,opacity=0.5] (0,1.1,0.1)--(2,1.1,0.1)--(2,1.2,0.1)--(0,1.2,0.1)--(0,1.1,0.1);
\draw[draw=black,fill=red,opacity=0.5] (0,1.1,0.1)--(2,1.1,0.1)--(2,1.2,0.1)--(0,1.2,0.1)--(0,1.1,0.1);
\draw[draw=black,fill=red,opacity=0.5] (0,1.2,0)--(2,1.2,0)--(2,1.2,0.1)--(0,1.2,0.1)--(0,1.2,0);
\draw[draw=black,fill=red,opacity=0.5] (0,1.1,0)--(2,1.1,0)--(2,1.1,0.1)--(0,1.1,0.1)--(0,1.1,0);
\draw[draw=black,fill=red,opacity=0.5] (0,1.1,0)--(2,1.1,0)--(2,1.2,0)--(0,1.2,0)--(0,1.1,0);
\draw[draw=black,fill=red,opacity=0.5] (0,1,0.2)--(2,1,0.2)--(2,1.1,0.2)--(0,1.1,0.2)--(0,1,0.2);
\draw[draw=black,fill=red,opacity=0.5] (0,1.1,0.1)--(2,1.1,0.1)--(2,1.1,0.2)--(0,1.1,0.2)--(0,1.1,0.1);
\draw[draw=black,fill=red,opacity=0.5] (0,1,0.1)--(2,1,0.1)--(2,1,0.2)--(0,1,0.2)--(0,1,0.1);
\draw[draw=black,fill=red,opacity=0.5] (0,1,0.1)--(2,1,0.1)--(2,1.1,0.1)--(0,1.1,0.1)--(0,1,0.1);
\draw[draw=black,fill=red,opacity=0.5] (0,1,0.1)--(2,1,0.1)--(2,1.1,0.1)--(0,1.1,0.1)--(0,1,0.1);
\draw[draw=black,fill=red,opacity=0.5] (0,1.1,0)--(2,1.1,0)--(2,1.1,0.1)--(0,1.1,0.1)--(0,1.1,0);
\draw[draw=black,fill=red,opacity=0.5] (0,1,0)--(2,1,0)--(2,1,0.1)--(0,1,0.1)--(0,1,0);
\draw[draw=black,fill=red,opacity=0.5] (0,1,0)--(2,1,0)--(2,1.1,0)--(0,1.1,0)--(0,1,0);

\node[anchor=north east] at (2, 1.2, 0) {$\mathtt{X}_{<2>}$};

\draw[draw=black,fill=blue,opacity=0.5] (0.5,0,0.1)--(0.5,0,0.2)--(0.5,1,0.2)--(0.5,1,0.1)--(0.5,0,0.1);
\draw[draw=black,fill=blue,opacity=0.5] (0.5,0,0.2)--(0.6,0,0.2)--(0.6,1,0.2)--(0.5,1,0.2)--(0.5,0,0.2);
\draw[draw=black,fill=blue,opacity=0.5] (0.5,0,0.1)--(0.6,0,0.1)--(0.6,1,0.1)--(0.5,1,0.1)--(0.5,0,0.1);
\draw[draw=black,fill=blue,opacity=0.5] (0.6,0,0.1)--(0.6,0,0.2)--(0.6,1,0.2)--(0.6,1,0.1)--(0.6,0,0.1);
\draw[draw=black,fill=blue,opacity=0.5] (0.5,0,0)--(0.5,0,0.1)--(0.5,1,0.1)--(0.5,1,0)--(0.5,0,0);
\draw[draw=black,fill=blue,opacity=0.5] (0.5,0,0.1)--(0.6,0,0.1)--(0.6,1,0.1)--(0.5,1,0.1)--(0.5,0,0.1);
\draw[draw=black,fill=blue,opacity=0.5] (0.5,0,0)--(0.6,0,0)--(0.6,1,0)--(0.5,1,0)--(0.5,0,0);
\draw[draw=black,fill=blue,opacity=0.5] (0.6,0,0)--(0.6,0,0.1)--(0.6,1,0.1)--(0.6,1,0)--(0.6,0,0);
\draw[draw=black,fill=blue,opacity=0.5] (0.6,0,0)--(0.6,0,0.1)--(0.6,1,0.1)--(0.6,1,0)--(0.6,0,0);
\draw[draw=black,fill=blue,opacity=0.5] (0.6,0,0.1)--(0.7,0,0.1)--(0.7,1,0.1)--(0.6,1,0.1)--(0.6,0,0.1);
\draw[draw=black,fill=blue,opacity=0.5] (0.6,0,0)--(0.7,0,0)--(0.7,1,0)--(0.6,1,0)--(0.6,0,0);
\draw[draw=black,fill=blue,opacity=0.5] (0.7,0,0)--(0.7,0,0.1)--(0.7,1,0.1)--(0.7,1,0)--(0.7,0,0);
\draw[draw=black,fill=blue,opacity=0.5] (0.6,0,0.1)--(0.6,0,0.2)--(0.6,1,0.2)--(0.6,1,0.1)--(0.6,0,0.1);
\draw[draw=black,fill=blue,opacity=0.5] (0.6,0,0.2)--(0.7,0,0.2)--(0.7,1,0.2)--(0.6,1,0.2)--(0.6,0,0.2);
\draw[draw=black,fill=blue,opacity=0.5] (0.6,0,0.1)--(0.7,0,0.1)--(0.7,1,0.1)--(0.6,1,0.1)--(0.6,0,0.1);
\draw[draw=black,fill=blue,opacity=0.5] (0.7,0,0.1)--(0.7,0,0.2)--(0.7,1,0.2)--(0.7,1,0.1)--(0.7,0,0.1);

\node[anchor=north east] at (0.6, 0, 0) {$\mathtt{X}_{<1>}$};

\draw[draw=black,fill=green!50!black,opacity=0.5] (0.5,1.1,-0.7)--(0.5,1.2,-0.7)--(0.5,1.2,0.0)--(0.5,1.1,0.0)--(0.5,1.1,-0.7);  
\draw[draw=black,fill=green!50!black,opacity=0.5] (0.5,1.2,-0.7)--(0.6,1.2,-0.7)--(0.6,1.2,0.0)--(0.5,1.2,0.0)--(0.5,1.2,-0.7);
\draw[draw=black,fill=green!50!black,opacity=0.5] (0.5,1.1,-0.7)--(0.6,1.1,-0.7)--(0.6,1.1,0.0)--(0.5,1.1,0.0)--(0.5,1.1,-0.7);
\draw[draw=black,fill=green!50!black,opacity=0.5] (0.6,1.1,-0.7)--(0.6,1.2,-0.7)--(0.6,1.2,0.0)--(0.6,1.1,0.0)--(0.6,1.1,-0.7);
\draw[draw=black,fill=green!50!black,opacity=0.5] (0.5,1,-0.7)--(0.5,1.1,-0.7)--(0.5,1.1,0.0)--(0.5,1,0.0)--(0.5,1,-0.7);
\draw[draw=black,fill=green!50!black,opacity=0.5] (0.5,1.1,-0.7)--(0.6,1.1,-0.7)--(0.6,1.1,0.0)--(0.5,1.1,0.0)--(0.5,1.1,-0.7);
\draw[draw=black,fill=green!50!black,opacity=0.5] (0.5,1,-0.7)--(0.6,1,-0.7)--(0.6,1,0.0)--(0.5,1,0.0)--(0.5,1,-0.7);
\draw[draw=black,fill=green!50!black,opacity=0.5] (0.6,1,-0.7)--(0.6,1.1,-0.7)--(0.6,1.1,0.0)--(0.6,1,0.0)--(0.6,1,-0.7);
\draw[draw=black,fill=green!50!black,opacity=0.5] (0.6,1.1,-0.7)--(0.6,1.2,-0.7)--(0.6,1.2,0.0)--(0.6,1.1,0.0)--(0.6,1.1,-0.7);
\draw[draw=black,fill=green!50!black,opacity=0.5] (0.6,1.2,-0.7)--(0.7,1.2,-0.7)--(0.7,1.2,0.0)--(0.6,1.2,0.0)--(0.6,1.2,-0.7);
\draw[draw=black,fill=green!50!black,opacity=0.5] (0.6,1.1,-0.7)--(0.7,1.1,-0.7)--(0.7,1.1,0.0)--(0.6,1.1,0.0)--(0.6,1.1,-0.7);
\draw[draw=black,fill=green!50!black,opacity=0.5] (0.7,1.1,-0.7)--(0.7,1.2,-0.7)--(0.7,1.2,0.0)--(0.7,1.1,0.0)--(0.7,1.1,-0.7);
\draw[draw=black,fill=green!50!black,opacity=0.5] (0.6,1,-0.7)--(0.6,1.1,-0.7)--(0.6,1.1,0.0)--(0.6,1,0.0)--(0.6,1,-0.7);
\draw[draw=black,fill=green!50!black,opacity=0.5] (0.6,1.1,-0.7)--(0.7,1.1,-0.7)--(0.7,1.1,0.0)--(0.6,1.1,0.0)--(0.6,1.1,-0.7);
\draw[draw=black,fill=green!50!black,opacity=0.5] (0.6,1,-0.7)--(0.7,1,-0.7)--(0.7,1,0.0)--(0.6,1,0.0)--(0.6,1,-0.7);
\draw[draw=black,fill=green!50!black,opacity=0.5] (0.7,1,-0.7)--(0.7,1.1,-0.7)--(0.7,1.1,0.0)--(0.7,1,0.0)--(0.7,1,-0.7);

\end{tikzpicture}
\caption{Example of sample sets of TT-Cross in two (left) and three (right) dimensions. }\label{fig:cross}
\end{figure}
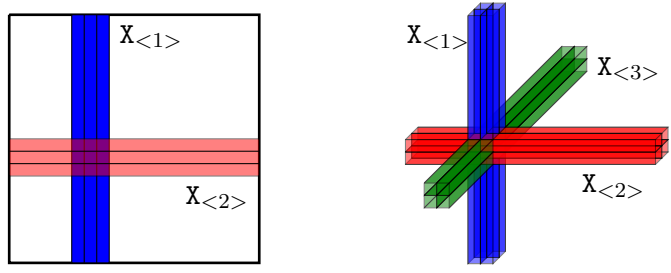

The Cartesian set \eqref{eq:Cart-set} allows us to factorize \eqref{eq:frame} in the Kronecker form,
$$
G_{\neq k} = \mG_{1:k-1}(\tilde{\mathtt{X}}_{1:k-1}) \otimes \Phi_k \otimes  \mG_{k+1:m+n}(\tilde{\mathtt{X}}_{k+1:m+n}).
$$
Moreover, we construct Lagrangian bases ensuring that $\mG_{1:k-1}(\tilde{\mathtt{X}}_{1:k-1})$ and $\mG_{k+1:m+n}(\tilde{\mathtt{X}}_{k+1:m+n})$ are identity matrices.

\begin{lemma}[\cite{oseledets2010tt}]
Assume that the sets \eqref{eq:Xiface} are {\bf nested},
$$
\tilde{\mathtt{X}}_{1:k} \subset \tilde{\mathtt{X}}_{1:k-1} \times \mathtt{X}_k, \quad \mbox{and} \quad
\tilde{\mathtt{X}}_{k:m+n} \subset \mathtt{X}_k \times \tilde{\mathtt{X}}_{k+1:m+n}.
$$
Then one can choose them such that $\mG_{1:k}(\mathtt{X}_{1:k}) = \mathtt{Id}_{r_{k}}$ and $\mG_{k:m+n}(\mathtt{X}_{k:m+n}) = \mathtt{Id}_{r_{k-1}}$.
\end{lemma}
\begin{proof}
To set up the base of induction, consider $\tilde{\mathtt{X}}_{1} \subset \mathtt{X}_{1}$.
Any given TT core $\mG_{1}(x_{1})$ can be evaluated on the set $\mathtt{X}_{1}$, giving a matrix $\mG_{1}(\mathtt{X}_{1}) \in \mathbb{R}^{I_{1} \times r_{1}}$.
Moreover, for a non-redundant TT decomposition it holds $r_{1} \le I_{1}$.
We can compute a thin QR decomposition $\mQ\mR = \mG_{1}(\mathtt{X}_{1})$ with a tall matrix $\mQ \in \mathbb{R}^{I_{1} \times r_{1}}$,
and search for an index set $\mathtt{J}_{1}\subset \{1,\ldots,I_{1}\}$ of $r_{1}$ linearly independent rows $\mQ[\mathtt{J}_{1}, :]$.
In fact, the Maximum Volume (\texttt{maxvol}) algorithm \cite{goreinov1997pseudo} allows one to select $\mQ[\mathtt{J}_{1}, :]$ close to the most well-conditioned submatrix.
Now we can replace the first two TT cores with
\begin{equation}
 \tA_{1} := \Phi_{1}(\mathtt{X}_{1})^{-1} \,\mQ\, \mQ[\mathtt{J}_{1}, :]^{-1}, \quad \mG_{2}(x_{2}) := \mQ[\mathtt{J}_{1}, :] \, \mR \, \mG_{2}(x_{2}).
\end{equation}
Note that both inversions in $\tA_{1}$ are well defined by assumptions (and are in fact well conditioned),
the product $\mG_{1}(x_{1}) \mG_{2}(x_{2})$ (and hence the entire tensor train) is unchanged,
and if we take $\tilde{\mathtt{X}}_{1} = \mathtt{X}_{1}[\mathtt{J}_{1}]$ it holds that $\mG_{1}(\tilde{\mathtt{X}}_{1}) = \mathtt{Id}_{r_{1}}$.

Assuming that $\mG_{1:k-1}(\tilde{\mathtt{X}}_{1:k-1})=\mathtt{Id}_{r_{k-1}}$, we proceed as follows.
Evaluating $\mG_{k}(x_k)$ on the corresponding collocation set,
we obtain a three-dimensional tensor with elements
\begin{equation}\label{eq:CrossTensor}
 \tH_k[\alpha_{k-1},i,\alpha_k] := \mG_{k}^{(\alpha_{k-1},\alpha_k)}(x_k^{(i)}) = \sum_{j} \phi_k^{(j)}(x_k^{(i)}) \tA_k[\alpha_{k-1},j,\alpha_k],
\end{equation}
which can be reshaped into a matrix $\mH_k^{(\mathrm{L})} \in \mathbb{R}^{r_{k-1} I_k \times r_k}$ with elements
\begin{equation}
\mH_k^{(\mathrm{L})}[\alpha_{k-1} + (i-1)r_{k-1}, \alpha_k] := \tH_k[\alpha_{k-1},i,\alpha_k].
\end{equation}
Now we can compute again a thin QR decomposition $\mQ\mR = \mH_k^{(\mathrm{L})}$, apply the \texttt{maxvol} algorithm to $\mQ$ to find the index set $\mathtt{J}_{k} \subset \{1,\ldots,r_{k-1}I_k\}$,
and update
\begin{equation}\label{eq:core-updated}
 \tA_k[\alpha_{k-1},i,\alpha_k] := \mA_k^{(\mathrm{L})}[\alpha_{k-1}+(i-1)r_{k-1}, \alpha_k], \quad  \mA_k^{(\mathrm{L})} := (\Phi_k(\mathtt{X}_k)^{-1} \otimes \mathtt{Id}_{r_{k-1}}) \mQ \mQ[\mathtt{J}_{k},:]^{-1}.
\end{equation}
Similarly, we continue the recursion
\begin{equation}\label{eq:left-set}
\tilde{\mathtt{X}}_{1:k} = (\tilde{\mathtt{X}}_{1:k-1} \times \mathtt{X}_k)[\mathtt{J}_{k}].
\end{equation}
This gives
\begin{align*}
\mG_{1:k}(\tilde{\mathtt{X}}_{1:k}) & = \left(\sum_{\alpha_{k-1}}
\mG_k^{(\alpha_{k-1},:)}(\mathtt{X}_k)
\otimes
\mG_{1:k-1}^{(\alpha_{k-1})}(\tilde{\mathtt{X}}_{1:k-1})
\right)[\mathtt{J}_{k}] \\
& = \left((\Phi_k(\mathtt{X}_k) \otimes \mathtt{Id}_{r_{k-1}}) \mA_k^{(\mathrm{L})}\right)[\mathtt{J}_{k}] = \mathtt{Id}_{r_k}.
\end{align*}
The backward recursion from $\mG_{k+1:m+n}$ to $\mG_{k:m+n}$ can be established similarly.
In a $k$th step, we compute
\begin{equation}\label{eq:core-right}
 \mH_k^{(\mathrm{R})}[\alpha_{k-1}, i + (\alpha_k-1) I_k] := \tH_k[\alpha_{k-1},i,\alpha_k] := g(\tilde{\mathtt{X}}_{1:k-1}^{(\alpha_{k-1})}, x_k^{(i)}, \tilde{\mathtt{X}}_{k+1:m+n}^{(\alpha_k)}).
\end{equation}
Then, we apply the QR and \texttt{maxvol} algorithms to $(H_k^{(\mathrm{R})})^\top$ to obtain an index set $\mathtt{J}_{k}$, and select the new right interpolation set
\begin{equation}\label{eq:right-set}
 \tilde{\mathtt{X}}_{k:m+n} = (\mathtt{X}_k \times \tilde{\mathtt{X}}_{k+1:m+n})[\mathtt{J}_{k}].
\end{equation}
The rest of the result follows as the forward case. 
\end{proof}
Now we can resolve the interpolation equation \eqref{eq:frame-linear} by simply inverting the small Vandermonde matrix $\Phi_k(\mathtt{X}_k)$.
Unfolding the solution back through \eqref{eq:core-vec} gives an updated TT core,
which can be passed through the QR factorization and Maximum Volume steps \eqref{eq:CrossTensor} -- \eqref{eq:core-updated} to continue the TT-Cross iteration.
We call this method a {\bf fixed-rank} TT-Cross.

For complicated functions it is beneficial to {\bf adapt} TT ranks towards a desired error threshold.
To reduce overestimated ranks, we replace the QR factorizations by a truncated SVD with an error threshold $\delta>0$, for example
$$
\mQ_{\hat r_k} \Sigma_{\hat r_k} \mV_{\hat r_k}^\top \approx \mH_k^{(\mathrm{L})},
$$
where $\hat r_k \le r_k$ is a new truncated rank.
The rest of the procedure is identical by using $\mQ_{\hat r_k}\in\mathbb{R}^{r_{k-1}I_k \times \hat r_k}$.

To increase underestimated ranks, we can notice that the interpolation equation \eqref{eq:frame-linear} can now be written as a direct assignment
$$
\mG_k(x_k^{(i)}) = g(\tilde{\mathtt{X}}_{1:k-1},x_k^{(i)},\tilde{\mathtt{X}}_{k+1:m+n}).
$$
We can expand one of the sets (for example $\tilde{\mathtt{X}}_{k+1:m+n}$) with additional (e.g. random) points $\hat{\mathtt{X}}_{k+1:m+n}$ of a chosen cardinality $\tilde r_k$, and compute
$$
\mG_k(x_k^{(i)}) = g\big(\tilde{\mathtt{X}}_{1:k-1},x_k^{(i)},\tilde{\mathtt{X}}_{k+1:m+n}\cup \hat{\mathtt{X}}_{k+1:m+n}\big)  \in \mathbb{R}^{r_{k-1} \times (r_k + \hat r_k)},
$$
obtaining eventually an expanded coefficient tensor $\tA_k \in \mathbb{R}^{r_{k-1} \times  I_k \times (r_k + \tilde r_k)}$.
Now the Maximum Volume algorithm can select up to $r_k + \hat r_k$ indices in $\mathtt{J}_{k}$.
This interplay of reduction and expansion of TT ranks allows the method to converge them to nearly optimal ranks for the chosen truncation error threshold.
The entire iteration is outlined in Algorithm~\ref{alg:tt-cross}.
Each full iteration needs $\mathcal{O}((n+m)Ir^2)$ samples from $\pi_{Y,\Theta}$ and $\mathcal{O}((n+m)Ir^3)$ additional floating point operations for QR/SVD and \texttt{maxvol} computations.
\begin{algorithm}[htb]
\centering
\caption{TT-Cross (fixed-rank if $\hat r_k=0$, adaptive-rank if $\hat r_k>0$.)}
\label{alg:tt-cross}
\begin{algorithmic}[1]
 \State Choose initial sets $\tilde{\mathtt{X}}_{1:k}$, $k=1,\ldots,m+n$, stopping threshold $\delta>0$, and expansion ranks $\hat r_k$.
 \While{first iteration or $\|\tilde g - \tilde g_{\rm prev} \|>\delta \|\tilde g\|$}
   \For{$k=m+n,\ldots,3,2$} \Comment{backward iteration}
     \State Sample $\tH_k[\alpha_{k-1},i,\alpha_k] = g((\tilde{\mathtt{X}}_{1:k-1} \cup \hat{\mathtt{X}}_{1:k-1})^{(\alpha_{k-1})}, x_k^{(i)}, \mathtt{X}_{k+1:m+n}^{(\alpha_k)})$.
     \State Compute $\mathtt{J}_{k}$ from SVD and \texttt{maxvol} of $(\mH_k^{(\rm R)})^\top$, compute new set \eqref{eq:right-set}.
   \EndFor
   \For{$k=1,2,\ldots,m+n-1$} \Comment{forward iteration}
     \State Sample $\tH_k[\alpha_{k-1},i,\alpha_k] = g(\tilde{\mathtt{X}}_{1:k-1}^{(\alpha_{k-1})}, x_k^{(i)}, (\tilde{\mathtt{X}}_{k+1:m+n} \cup \hat{\mathtt{X}}_{k+1:m+n})^{(\alpha_k)})$.
     \State Compute $\mathtt{J}_{<k+1}$ from SVD and \texttt{maxvol} of $\mH_k^{(\rm L)}$, compute new set \eqref{eq:left-set}.
     \State Reconstruct TT cores as shown in~\eqref{eq:core-updated}.
   \EndFor
   \State Sample $\tH_{m+n}[\alpha_{m+n-1}, i] = g(\tilde{\mathtt{X}}^{(\alpha_{m+n-1})}_{1:m+n-1},x_{m+n}^{(i)})$ and compute the last TT core.
 \EndWhile
\end{algorithmic}
\end{algorithm}

\subsection{SIRT error}\label{sec:sirt_error}

Suppose we have an approximation $g$ to $\sqrt{\pi_{Y,\Theta}}$ such that the $L^\lowsup{2}$ error  satisfies $\|\sqrt{\pi_{Y,\Theta}}-g\|_{2} \leq \varepsilon / \sqrt2$. Let $z = \|g\|_2^2$ denote the normalizing constant of the approximate density $p_{Y,\Theta}(y,\theta) = \frac1z g(y,\theta)^2$. Then, the normalizing constant satisfies
\begin{align*}
|z - 1| & = \left| \int \left(g^2 - \pi_{Y,\Theta} \right) \d y \d \theta \right| \\
& = \left| \int \left(g - \sqrt{\pi_{Y,\Theta}}\right)\left(g + \sqrt{\pi_{Y,\Theta}} \right) \d y \d \theta \right|\\
& \leq \left(\int \left(g - \sqrt{\pi_{Y,\Theta}}\right)^2 \d y \d \theta \right)^\frac12 \left(\int \left(g + \sqrt{\pi_{Y,\Theta}} \right)^2 \d y \d \theta \right)^\frac12 \\
& \leq \|\sqrt{\pi_{Y,\Theta}}-g\|_2 \left( \|g\|_2 + \|\sqrt{\pi_{Y,\Theta}}\|_2 \right) \\
& = \|\sqrt{\pi_{Y,\Theta}}-g\|_2 \left( \sqrt{z} + 1 \right).
\end{align*}
Since $|z - 1| = |\sqrt{z} - 1|(\sqrt{z} + 1)$, the above inequality leads to $|\sqrt{z} - 1| \leq \|\sqrt{\pi_{Y,\Theta}}-g\|_2$.
Next, we write 
\begin{align*}
\DH( \pi_{Y,\Theta}, p_{Y,\Theta} ) & = \frac1{\sqrt2} \left( \int \left( \sqrt{\pi_{Y,\Theta}} - \sqrt{p_{Y,\Theta}}\right)^2 \d y\d \theta \right)^\frac12 \nonumber \\
& = \frac1{\sqrt2} \left( \int \left( \sqrt{\pi_{Y,\Theta}} - g + g - \frac{1}{\sqrt{z}}g\right)^2 \d y\d \theta \right)^\frac12 \nonumber \\
& \leq \frac1{\sqrt2} \left( \int \left( \sqrt{\pi_{Y,\Theta}} - g\right)^2 \d y\d \theta \right)^\frac12 + \frac1{\sqrt2} \left( \int  g^2 \left( 1-\frac{1}{\sqrt{z}} \right)^2 \d y\d \theta \right)^\frac12 \nonumber \\
& = \frac1{\sqrt2} \|\sqrt{\pi_{Y,\Theta}}-g\|_2 + \frac{| \sqrt{z} - 1 |}{\sqrt2} \nonumber \\
& \leq \sqrt2 \, \|\sqrt{\pi_{Y,\Theta}}-g\|_2 \leq \varepsilon,
\end{align*}
where we used $|\sqrt{z} - 1| \leq \|\sqrt{\pi_{Y,\Theta}}-g\|_2$ in the last inequality.
This concludes the result. \hfill $\square$

\subsection{Marginalizing squared TT factorization}\label{sec:tt_marginal}
Recall that we are computing $\bar\mM_{k:n} = \int \mG_{k:n}(x_{k:n})\mG_{k:n}(x_{k:n}) \d x_{k:n}$.
For the $k$th set of basis functions, we define the mass matrix $\mM_k \in \R^{I_k \times I_k}$ by
\begin{equation}\label{eq:mass-k}
\mM_k[i,j] = \int \phi_{k}^{(i)}(x_k)\phi_{k}^{(j)}(x_k) \,\d x_k, \quad \text{for}\quad  i = 1,\ldots, I_k,\,j = 1,\ldots, I_k.
\end{equation}
Starting with the last coordinate $k = n$, we initialize a tensor $\tB_n = \tA_n$, and a tensor $\tC_n \in \R^{r_{n-1}\times I_n}$,
\begin{align}
\tC_n[\alpha_{n-1}, \tau] = \sum_{i = 1}^{I_n} \tB_n[\alpha_{k-1}, i] \, \chol_n[i, \tau],
\end{align}
where $\chol_n^{} \chol_n^\top = \mM_n^{}$ is the Cholesky factorization of the univariate mass matrix.
Note that $\bar\mM_{n}^{} = \tC_n^{} \tC_n^\top$.

The following procedure can be used to obtain the coefficient tensor $\tB_{k-1}\in \R^{r_{k-2}\times I_{k-1} \times r_{k-1}}$ for defining the next marginal mass matrix $\bar\mM_{k:n}$:

\begin{enumerate}[leftmargin=14pt]
\item Use the Cholesky factorisation of the mass matrix, $\chol_k^{} \chol_k^\top = \mM_k^{} \in \R^{I_k \times I_k}$, to construct a tensor $\tC_k \in \R^{r_{k-1}\times I_k \times r_k}$:
\begin{align}
\tC_k[\alpha_{k-1}, \tau, \ell_{k}] = \sum_{i = 1}^{I_k} \tB_k[\alpha_{k-1}, i, \ell_{k}] \, \chol_k[i, \tau].
\end{align}
\item  Unfold $\tC_k$ along the first coordinate \cite{kolda2009tensor} to obtain a matrix $\mC_k^{(\rm R)} \in \R^{r_{k-1} \times (I_k  r_k)} $ and compute the thin QR factorisation
\begin{align}\label{eq:SIRT-QR}
\mQ_k \mR_k = \big( \mC_k^{(\rm R)} \big)^\top,
\end{align}
where $\mQ_k \in \R^{(I_k  r_k) \times r_{k-1}} $ is semi--orthogonal and $\mR_k \in \R^{r_{k-1} \times r_{k-1}}$ is upper--triangular.
\item The current marginal mass matrix is recovered as $\bar\mM_{k:n} = \mR_k^\top \mR_k^{}$.
\item Compute the new coefficient tensor
\begin{align}\label{eq:B_recur}
\tB_{k-1}[\alpha_{k-2},i, \ell_{k-1}] =  \sum_{\alpha_{k-1} = 1}^{r_{k-1}} \tA_{k-1}[\alpha_{k-2},i, \alpha_{k-1}]\, \mR_k[\ell_{k-1},\alpha_{k-1}].
\end{align}
\end{enumerate}

%
\section{Proofs of Lemmas \ref{prop:defG_optimal} and \ref{prop:bound_Hellinger_Poincare} in Section \ref{sec:Variable_ordering}}

\subsection{Proofs of Lemma \ref{prop:defG_optimal}}\label{proof:defG_optimal}

We denote the norm and the scalar product of $L^2:=\{u:\R^d\rightarrow\R:\int u^2\d x<\infty\}$ by $\|\cdot\|$ and $\langle\cdot,\cdot\rangle$, respectively.
Let $\widetilde f(x) = \widetilde g(x_{\gamma}) \sqrt{\rho(x)}$ and $f(x) = g(x_{\gamma}) \sqrt{\rho(x)}$. We have
\begin{align}
  \langle \widetilde f,\sqrt{\pi}\rangle
  &= \int \widetilde g(x_{\gamma}) \sqrt{\pi(x)/\rho(x)}\rho(x)\d x \nonumber\\
  &= \int \widetilde g(x_{\gamma}) \underbrace{\left(\int \left( \frac{\pi(x_{\gamma},x_{\gamma^c})}{\rho(x_{\gamma},x_{\gamma^c})}\right)^\frac12\rho_{\gamma^c|\gamma}(x_{\gamma^c}|x_{\gamma}) \d x_{\gamma^c} \right)}_{=g(x_{\gamma})} \rho_{\gamma}(x_{\gamma}) \d x_{\gamma} \nonumber\\
  &= \int \widetilde g(x_{\gamma})g(x_{\gamma})\rho(x) \d x
  = \langle \widetilde f,f\rangle .\label{eq:tmp26899}
\end{align}
Similarly we have $\langle f,\sqrt{\pi}\rangle= \langle  f,f\rangle$.
This yields
\begin{align*}
  \DH(\pi,\widetilde\pi)^2
  =& \frac12\left\| \left(\sqrt{\pi} -\frac{f}{\|f\|} \right) + \left(\frac{f}{\|f\|}- \frac{\widetilde f}{\|\widetilde f\|}\right) \right\|^2 \\
  =& \DH(\pi,\widetilde\pi^*)^2  + \left\langle \sqrt{\pi} -\frac{f}{\|f\|} , \frac{f}{\|f\|}- \frac{\widetilde f}{\|\widetilde f\|} \right\rangle + \DH(\widetilde\pi^*,\widetilde\pi)^2 \\
  \overset{\eqref{eq:tmp26899}}{=}& \DH(\pi,\widetilde\pi^*)^2  + \left\langle f -\frac{f}{\|f\|} , \frac{f}{\|f\|}- \frac{\widetilde f}{\|\widetilde f\|} \right\rangle + \DH(\widetilde\pi^*,\widetilde\pi)^2 \\
  =& \DH(\pi,\widetilde\pi^*)^2  + (\|f\|-1)\left(1- \frac{\langle f,\widetilde f\rangle}{\|f\|\|\widetilde f\|}\right) + \DH(\widetilde\pi^*,\widetilde\pi)^2 \\
  =& \DH(\pi,\widetilde\pi^*)^2  + (\|f\|-1)\,\DH(\widetilde\pi^*,\widetilde\pi)^2 + \DH(\widetilde\pi^*,\widetilde\pi)^2 \\
  =& \DH(\pi,\widetilde\pi^*)^2  + \|f\|\DH(\widetilde\pi^*,\widetilde\pi)^2,
\end{align*}
and thus the results follow.\hfill $\square$

\subsection{Proofs of Lemma \ref{prop:bound_Hellinger_Poincare}}\label{proof:bound_Hellinger_Poincare}

 We denote the norm and the scalar product of $L^2:=\{u:\R^d\rightarrow\R:\int u^2\d x<\infty\}$ by $\|\cdot\|$ and $\langle\cdot,\cdot\rangle$, respectively. We also define $\rho_{\gamma}(x_{\gamma}) = \prod_{i \in \gamma}\rho_{i}(x_{i})$ and $\rho_{\gamma^c}(x_{\gamma^c}) = \prod_{i \in \gamma^c}\rho_{i}(x_{i})$.  Letting $ f(x) = g(x_{\gamma}) \sqrt{\rho(x)}$, we have the identity
 \begin{align}
  \langle f,\sqrt{\pi}\rangle
  &= \int g(x_{\gamma}) \sqrt{\pi(x)/\rho(x)}\rho(x)\d x \nonumber\\
  &= \int g(x_{\gamma}) \underbrace{\left(\int \left(\frac{\pi(x_{\gamma},x_{\gamma^c})}{\rho(x_{\gamma},x_{\gamma^c})}\right)^\frac12\rho_{\gamma^c}(x_{\gamma^c})\d x_{\gamma^c} \right)}_{=g(x_{\gamma})} \rho_{\gamma}(x_{\gamma}) \d x_{\gamma} \nonumber\\
  &= \int g(x_{\gamma})^2\rho(x) \d x
  = \|f\|^2 .\label{eq:tmp2687}
 \end{align}
 Furthermore, $\widetilde\pi^*(x) \propto g(x_{\gamma})^2  \rho(x)=f(x)^2$ writes $\widetilde\pi^*(x) = \frac{f(x)^2}{\|f\|^2}$ and satisfies
\begin{equation}\label{eq:tmp2688}
  \DH(\pi,\widetilde\pi^*)^2
  = \frac12\left\| \sqrt{\pi}-\frac{f}{\|f\|} \right\|^2
  = 1- \left\langle \sqrt{\pi},\frac{f}{\|f\|} \right\rangle
  \overset{\eqref{eq:tmp2687}}{=} 1-\|f\|.
\end{equation}
Because $\DH(\pi,\widetilde\pi^*)^2\geq0$, we have $0\leq \|f\| \leq 1$ so that
\[
1-\|f\| \leq 1 - \|f\|^2 \overset{\eqref{eq:tmp2687}}{=} \|\sqrt{\pi}-f\|^2.
\]
Thus, we have $\DH(\pi,\widetilde\pi^*)^2 \leq \|\sqrt{\pi}-f\|^2$.
Letting $h(x) = \sqrt{\pi(x)/\rho(x)}$, we obtain
 \begin{align}
  \DH(\pi,\pi^*)^2 & \leq \; \int \left( \sqrt{\pi(x)}-f(x) \right)^2\d x = \int \left( h(x)-g(x_{\gamma}) \right)^2\rho(x)\d x \nonumber\\
  & \!\!\! \overset{\eqref{eq:defG_proof}}{=} \int \bigg(\int \bigg( h(x_{\gamma},x_{\gamma^c})-\int h(x_{\gamma},x_{\gamma^c}') \rho_{\gamma^c}(x_{\gamma^c}') \d x_{\gamma^c}' \bigg)^2 \rho_{\gamma^c}(x_{\gamma^c}) \d x_{\gamma^c}\bigg)\rho_{\gamma}(x_{\gamma})\d x_{\gamma} \nonumber\\
  & \leq \kappa \int \bigg( \int \left\| \nabla_{{\gamma^c}} \, h(x_{\gamma},x_{\gamma^c}) \right\|_2^2 \rho_{\gamma^c}(x_{\gamma^c}) \d x_{\gamma^c}\bigg)\rho_{\gamma}(x_{\gamma})\d x_{\gamma}  \label{ineq:tmp6893204} \\
  & = \kappa \int \left\| \nabla_{{\gamma^c}} \, h(x_{\gamma},x_{\gamma^c}) \right\|_2^2  \rho(x)\d x = \kappa \sum_{i\in\gamma^c} \int \left( \partial_i h(x) \right)^2  \rho(x)\d x  .\nonumber
 \end{align}
 The previous inequality \eqref{ineq:tmp6893204} is obtained by applying the Poincar\'{e} inequality \eqref{eq:PoincareRho} to the function $h_{\gamma^c}:x_{\gamma^c}\mapsto h(x_{\gamma},x_{\gamma^c})$.
 To conclude the proof, it remains to show that $\int \left( \partial_i h(x) \right)^2  \rho(x)\d x  = \frac14 H_{ii}$, which can be given as follows
 \begin{align*}
  \int \left( \partial_i h(x) \right)^2  \rho(x)\d x
  &= \int \left( \partial_i \sqrt{\pi(x)/\rho(x)} \right)^2  \rho(x)\d x  \\
  &= \frac14 \int \left( \frac{\partial_i (\pi(x)/\rho(x))}{\sqrt{\pi(x)/\rho(x)}} \right)^2  \rho(x)\d x  \\
  &= \frac14 \int \left( \sqrt{\pi(x)/\rho(x)} \partial_i \log\Big(\pi(x)/\rho(x)\Big)   \right)^2  \rho(x)\d x \\
  &= \frac14 \int \left( \partial_i \log\Big(\pi(x)/\rho(x)\Big)  \right)^2  \pi(x)\d x
  = \frac14 H_{ii}.
 \end{align*}
This concludes the proof.\hfill $\square$

\section{Proofs and additional details of Section \ref{sec:DIRT}}

\subsection{Proof of Proposition \ref{prop:DIRT_CV}}\label{proof:DIRT_CV}
Because the Hellinger distance satisfies the triangle inequality, we can write 
\begin{align*}
 \DH\big( p_{Y,\Theta}^{\ell+1}, \pi_{Y,\Theta}^{\ell+1} \big) 
 &=\DH\big( (\mathcal{T}_{\ell}\circ \mathcal{Q}_{\ell+1} )_\sharp\rho_{Y,\Theta}, \pi_{Y,\Theta}^{\ell+1} \big) \\
 &=\DH\big( (\mathcal{Q}_{\ell+1} )_\sharp\rho_{Y,\Theta}, \mathcal{T}_{\ell}^\sharp \pi_{Y,\Theta}^{\ell+1} \big) \\
 &\overset{\eqref{eq:Greedy_DIRT}}{\leq} \omega ~\DH\big( p_{Y,\Theta}^{\ell}, \pi_{Y,\Theta}^{\ell+1} \big) \\
 &\leq \;\, \omega \Big( \DH\big(  p_{Y,\Theta}^{\ell} , \pi_{Y,\Theta}^{\ell} \big) +  \DH\big(  \pi_{Y,\Theta}^{\ell}, \pi_{Y,\Theta}^{\ell+1} \big) \Big)\\
 &\overset{\eqref{eq:BoundOnBridgingPdfs}}{\leq} \omega \Big( \DH\big(  p_{Y,\Theta}^{\ell} , \pi_{Y,\Theta}^{\ell} \big) +  \eta(L) \Big) ,
\end{align*}
for any $0\leq\ell<L$. A direct recurrence yields
\begin{align*}
 \DH\big( p_{Y,\Theta}^{L} , \pi_{Y,\Theta}^{L} \big)
 &\leq \omega^{L} \DH\big(  p_{Y,\Theta}^{0} , \pi_{Y,\Theta}^{0} \big) + \omega^{L}\eta(L)+\hdots+  \omega \eta(L) \\
 &\leq \omega^{L+1} \eta(L) + \omega^{L}\eta(L)+\hdots+  \omega \eta(L) \\
 &\leq \frac{\omega}{1-\omega} \eta(L),
\end{align*}
which concludes the proof. \hfill $\square$

\subsection{Bridging densities with uniformly spaced temperatures}\label{sec:dirt_fix}

Proposition \ref{prop:DIRT_CV} reveals that controlling the Hellinger distances between the bridging densities is one of the keys to ensure the accuracy of DIRT. 
Using the tempered density $\pi_{Y,\Theta}^\lowsup{\ell} \propto \varphi_{Y,\Theta}^\lowsup{\beta_\ell}\, \rho_{Y,\Theta}$ as the example, we demonstrate in Lemma \ref{lemma:const_temp} that under mild technical assumptions, the Hellinger distance between $\pi_{Y,\Theta}^\lowsup{\ell}$ and $\pi_{Y,\Theta}^\lowsup{\ell{+}1}$ is bounded by $\mathcal{O}(\beta_{\ell{+}1} {-} \beta_\ell$). 
This ensures that the uniformly spaced temperatures $\beta_\ell = \ell/L$ yields controlled Hellinger distances of the form of $\DH\big( \pi_{Y,\Theta}^{\ell} ,  \pi_{Y,\Theta}^{\ell+1} \big) = \mathcal{O}(1/L)$.

\begin{lemma}\label{lemma:const_temp}
We denote the supremum of $\log \varphi_{Y,\Theta}$, the mean and the second moment of $\log \rho_{Y,\Theta}$ with respect to $\rho_{Y,\Theta}$ by
\[
c_\varphi \coloneqq \sup_{y,\theta} \log \varphi_{Y,\Theta},  \quad m_\varphi \coloneqq \int \log \varphi_{Y,\Theta} \d \rho_{Y,\Theta}, \quad {\rm and} \quad V_\varphi \coloneqq \int \left(\log \varphi_{Y,\Theta}\right)^2 \d \rho_{Y,\Theta} ,
\]
respectively. Suppose $c_\varphi < \infty$ and $V_\varphi < \infty$. Considering the temperatures of adjacent levels satisfy $\beta_{\ell{+}1} = \beta_\ell + \Delta$ where $\Delta > 0$, the Hellinger distance between $\pi_{Y,\Theta}^\lowsup{\ell} \propto \varphi_{Y,\Theta}^\lowsup{\beta_\ell}\, \rho_{Y,\Theta}$ and $\pi_{Y,\Theta}^\lowsup{\ell{+}1} \propto \varphi_{Y,\Theta}^\lowsup{\beta_\ell {+} \Delta}\, \rho_{Y,\Theta}$ satisfies
\[
\DH(\pi_{Y,\Theta}^\lowsup{\ell}, \pi_{Y,\Theta}^\lowsup{\ell{+}1}) \leq \Delta C,
\]
for any sequence of finite temperatures, where $C=C(c_\varphi,m_\varphi,V_\varphi)$ is a constant independent of $\Delta$.
\end{lemma}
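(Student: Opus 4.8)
The plan is to regard $\beta\mapsto\sqrt{\pi_{Y,\Theta}^{\beta}}$, where $\pi_{Y,\Theta}^{\beta}\propto\varphi_{Y,\Theta}^{\beta}\,\rho_{Y,\Theta}$ and $Z_\beta\coloneqq\int\varphi_{Y,\Theta}^{\beta}\,\d\rho_{Y,\Theta}$, as a smooth curve in $L^2$ and to bound the Hellinger distance by the length of the arc joining $\beta_\ell$ to $\beta_{\ell+1}$. Since $\DH(\pi_{Y,\Theta}^{\ell},\pi_{Y,\Theta}^{\ell+1})=\tfrac1{\sqrt2}\big\|\sqrt{\pi_{Y,\Theta}^{\beta_\ell}}-\sqrt{\pi_{Y,\Theta}^{\beta_{\ell+1}}}\big\|_{L^2}$, the first step is to differentiate $\sqrt{\pi_{Y,\Theta}^{\beta}}=Z_\beta^{-1/2}\,\varphi_{Y,\Theta}^{\beta/2}\,\sqrt{\rho_{Y,\Theta}}$ in $\beta$. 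Using $\tfrac{\d}{\d\beta}\log Z_\beta=\E_{\pi_{Y,\Theta}^{\beta}}[\log\varphi_{Y,\Theta}]$ one obtains
\[
\frac{\d}{\d\beta}\sqrt{\pi_{Y,\Theta}^{\beta}}=\tfrac12\Big(\log\varphi_{Y,\Theta}-\E_{\pi_{Y,\Theta}^{\beta}}[\log\varphi_{Y,\Theta}]\Big)\sqrt{\pi_{Y,\Theta}^{\beta}},
\qquad
\Big\|\frac{\d}{\d\beta}\sqrt{\pi_{Y,\Theta}^{\beta}}\Big\|_{L^2}^2=\tfrac14\Var_{\pi_{Y,\Theta}^{\beta}}(\log\varphi_{Y,\Theta}),
\]
so that
\[
\DH(\pi_{Y,\Theta}^{\ell},\pi_{Y,\Theta}^{\ell+1})\le\frac1{\sqrt2}\int_{\beta_\ell}^{\beta_{\ell+1}}\Big\|\frac{\d}{\d\beta}\sqrt{\pi_{Y,\Theta}^{\beta}}\Big\|_{L^2}\d\beta=\frac1{2\sqrt2}\int_{\beta_\ell}^{\beta_{\ell+1}}\sqrt{\Var_{\pi_{Y,\Theta}^{\beta}}(\log\varphi_{Y,\Theta})}\,\d\beta .
\]
The hypotheses $c_\varphi<\infty$ and $V_\varphi<\infty$ are exactly what make this integrand finite and continuous, so the arc-length estimate is legitimate.

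It then remains to bound $\Var_{\pi_{Y,\Theta}^{\beta}}(\log\varphi_{Y,\Theta})$ uniformly in $\beta$, which is the only quantitative step. I would estimate $\Var_{\pi_{Y,\Theta}^{\beta}}(\log\varphi_{Y,\Theta})\le\E_{\pi_{Y,\Theta}^{\beta}}[(\log\varphi_{Y,\Theta})^2]=Z_\beta^{-1}\int(\log\varphi_{Y,\Theta})^2\varphi_{Y,\Theta}^{\beta}\,\d\rho_{Y,\Theta}$, bound the numerator by $e^{\beta c_\varphi}V_\varphi$ via $\varphi_{Y,\Theta}^{\beta}\le e^{\beta c_\varphi}$, and bound the denominator from below by Jensen's inequality, $Z_\beta=\E_{\rho_{Y,\Theta}}[e^{\beta\log\varphi_{Y,\Theta}}]\ge e^{\beta m_\varphi}$. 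This gives $\Var_{\pi_{Y,\Theta}^{\beta}}(\log\varphi_{Y,\Theta})\le e^{\beta(c_\varphi-m_\varphi)}V_\varphi$. Since $\E_{\rho_{Y,\Theta}}[\varphi_{Y,\Theta}]=1$ forces $c_\varphi\ge0$ and, again by Jensen, $m_\varphi\le0$, the exponent is nonnegative, and because the temperatures are bounded (in our construction $\beta_\ell\in[0,1]$) we obtain the uniform bound $\Var_{\pi_{Y,\Theta}^{\beta}}(\log\varphi_{Y,\Theta})\le K$ with $K\coloneqq e^{\,c_\varphi-m_\varphi}V_\varphi$, which depends only on $c_\varphi,m_\varphi,V_\varphi$ (note $V_\varphi<\infty$ already makes $m_\varphi$ finite). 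Plugging back yields $\DH(\pi_{Y,\Theta}^{\ell},\pi_{Y,\Theta}^{\ell+1})\le\frac{\Delta}{2\sqrt2}\sqrt{K}=\Delta C$ with $C=\frac1{2\sqrt2}\sqrt{e^{\,c_\varphi-m_\varphi}V_\varphi}$.

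The main obstacle is precisely this uniform variance bound: one must control $Z_\beta$ from below with no further assumption (Jensen does it) and check that the resulting constant genuinely does not depend on $\Delta$ or on the layer index $\ell$ — this is where boundedness of the temperatures enters. An essentially equivalent route avoids the curve-in-$L^2$ computation: write $\DH(\pi_{Y,\Theta}^{\ell},\pi_{Y,\Theta}^{\ell+1})^2=1-Z_{(\beta_\ell+\beta_{\ell+1})/2}\big/\sqrt{Z_{\beta_\ell}Z_{\beta_{\ell+1}}}$, observe that $\beta\mapsto\log Z_\beta$ is convex with $(\log Z_\beta)''=\Var_{\pi_{Y,\Theta}^{\beta}}(\log\varphi_{Y,\Theta})$, and combine $1-e^{-r}\le r$ with the midpoint estimate $\tfrac12\log Z_{\beta_\ell}+\tfrac12\log Z_{\beta_{\ell+1}}-\log Z_{(\beta_\ell+\beta_{\ell+1})/2}\le\tfrac{\Delta^2}{8}\sup_\beta\Var_{\pi_{Y,\Theta}^{\beta}}(\log\varphi_{Y,\Theta})$; the same uniform variance bound then closes the argument with the identical constant.
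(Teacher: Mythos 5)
Your proof is correct, but it follows a genuinely different route from the paper's. The paper proceeds by hand: it splits $\sqrt{\pi^{\ell}}-\sqrt{\pi^{\ell+1}}$ via a triangle inequality to isolate the effect of the normalizing constants, controls $|\sqrt{z_\ell}-\sqrt{z_{\ell+1}}|$ by Cauchy--Schwarz, lower-bounds $z_\ell$ by Jensen, and then uses the pointwise Lipschitz estimate $|1-\varphi^{\Delta/2}|\le e^{\max(0,c_\varphi\Delta/2)}\tfrac{\Delta}{2}|\log\varphi|$ to extract the factor $\Delta$. You instead recognize $\beta\mapsto\sqrt{\pi^{\beta}}$ as a smooth curve in $L^2$ whose speed is exactly $\tfrac12\sqrt{\Var_{\pi^{\beta}}(\log\varphi)}$, bound $\DH$ by arc length, and then reduce everything to a uniform control of that variance. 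The two routes rely on the same elementary ingredients for the quantitative estimate (Jensen's lower bound $Z_\beta\ge e^{\beta m_\varphi}$, and the supremum bound $\varphi^\beta\le e^{\beta c_\varphi}$), and both implicitly need the temperatures to lie in a bounded interval so that $C$ depends only on $(c_\varphi,m_\varphi,V_\varphi)$ --- the paper's final constant involves $\sup_\ell e^{-m_\varphi\beta_\ell}$ and $\sup_\ell e^{c_\varphi\beta_\ell}$ and hence needs this too. Your approach is cleaner and conceptually sharper: it identifies the local Fisher information $\Var_{\pi^\beta}(\log\varphi)$ as the intrinsic quantity governing the Hellinger geometry of the tempered family, and it yields a slightly tighter constant ($\tfrac{1}{2\sqrt2}\sqrt{e^{c_\varphi-m_\varphi}V_\varphi}$ versus the paper's $\tfrac{1}{\sqrt2}\sqrt{e^{c_\varphi-m_\varphi}V_\varphi}$, the loss coming from the paper's triangle inequality). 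Your alternative argument via the Bhattacharyya identity $\DH^2=1-Z_{(\beta_\ell+\beta_{\ell+1})/2}/\sqrt{Z_{\beta_\ell}Z_{\beta_{\ell+1}}}$ and convexity of $\log Z_\beta$ is also correct and gives the same constant; it is essentially the discrete form of the arc-length bound. One technical point you handle correctly but should not hide: the interchange of $\d/\d\beta$ with the integral defining $Z_\beta$ and its derivatives requires the dominated-convergence check you gesture at (dominating $|\log\varphi|^k\varphi^\beta$ by $e^{c_\varphi\beta}|\log\varphi|^k$ and using $V_\varphi<\infty$), and that $c_\varphi\ge0$, $m_\varphi\le0$ do indeed follow from $\E_\rho[\varphi]=1$.
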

\begin{proof}
We define the normalized probability densities
\begin{align*}
\pi_{Y,\Theta}^\lowsup{\ell} = \frac{1}{z_{\ell}} \varphi_{Y,\Theta}^\lowsup{\beta_\ell}\, \rho_{Y,\Theta} \quad {\rm and} \quad \pi_{Y,\Theta}^\lowsup{\ell{+}1} = \frac{1}{z_{\ell{+}1}} \varphi_{Y,\Theta}^\lowsup{\beta_\ell+\Delta}\, \rho_{Y,\Theta} ,
\end{align*}
where
\(
z_{\ell} = \int \varphi_{Y,\Theta}^\lowsup{\beta_\ell}\, \d \rho_{Y,\Theta}
\)
and
\(
z_{\ell{+}1} = \int \varphi_{Y,\Theta}^\lowsup{\beta_\ell{+}\Delta}\, \d \rho_{Y,\Theta}.
\)
Applying Jensen's inequality, the mean $m_\varphi$ satisfies $|m_\varphi| \leq \sqrt{V_\varphi} < \infty$, and we have the following lower bound on the normalizing constant
\begin{align}
z_\ell & = \exp\bigg( \log \bigg( \int \varphi_{Y,\Theta}^\lowsup{\beta_\ell} \d \rho_{Y,\Theta} \bigg)\bigg) \geq  \exp\bigg( \int \log \varphi_{Y,\Theta}^\lowsup{\beta_\ell} \d \rho_{Y,\Theta} \bigg)= \exp( m_\varphi \beta_\ell) > 0, \label{eq:lower_bound_z}
\end{align}
for any finite $\beta_\ell$. The difference between the normalizing constants has the following upper bound
\begin{align}
\left|z_{\ell} -z_{\ell{+}1} \right| 
& = \left| \int \left(  \sqrt{\varphi_{Y,\Theta}^\lowsup{\beta_\ell}} - \sqrt{\varphi_{Y,\Theta}^\lowsup{\beta_\ell{+}\Delta} } \right) \,\left(  \sqrt{\varphi_{Y,\Theta}^\lowsup{\beta_\ell}} + \sqrt{\varphi_{Y,\Theta}^\lowsup{\beta_\ell{+}\Delta} } \right)\, \d \rho_{Y,\Theta} \right|\nonumber \\
& \leq \bigg( \int \left(  \sqrt{\varphi_{Y,\Theta}^\lowsup{\beta_\ell}} - \sqrt{\varphi_{Y,\Theta}^\lowsup{\beta_\ell{+}\Delta} } \right)^2\, \d \rho_{Y,\Theta} \bigg)^\frac12 \bigg( \int \left(  \sqrt{\varphi_{Y,\Theta}^\lowsup{\beta_\ell}} + \sqrt{\varphi_{Y,\Theta}^\lowsup{\beta_\ell{+}\Delta} } \right)^2\, \d \rho_{Y,\Theta} \bigg)^\frac12 \nonumber \\
& \leq \bigg( \int \left(  \sqrt{\varphi_{Y,\Theta}^\lowsup{\beta_\ell}} - \sqrt{\varphi_{Y,\Theta}^\lowsup{\beta_\ell{+}\Delta} } \right)^2\, \d \rho_{Y,\Theta} \bigg)^\frac12 \bigg( \sqrt{\int \varphi_{Y,\Theta}^\lowsup{\beta_\ell}\, \d \rho_{Y,\Theta} } + \sqrt{\int \varphi_{Y,\Theta}^\lowsup{\beta_\ell{+}\Delta} \, \d \rho_{Y,\Theta} } \bigg) \nonumber \\
& = \bigg( \int \left(  \sqrt{\varphi_{Y,\Theta}^\lowsup{\beta_\ell}} - \sqrt{\varphi_{Y,\Theta}^\lowsup{\beta_\ell{+}\Delta} } \right)^2\, \d \rho_{Y,\Theta} \bigg)^\frac12 \big( \sqrt{z_{\ell}} + \sqrt{z_{\ell{+}1}} \big) \label{eq:diff_z}.
\end{align}
Applying the identity $|z_{\ell} -z_{\ell{+}1}| = |\sqrt{z_{\ell}} - \sqrt{z_{\ell{+}1}}| \, (\sqrt{z_{\ell}} + \sqrt{z_{\ell{+}1}})$, the inequality in \eqref{eq:diff_z} yields
\begin{align}
|\sqrt{z_{\ell}} - \sqrt{z_{\ell{+}1}}| \leq \bigg( \int \bigg(  \sqrt{\varphi_{Y,\Theta}^\lowsup{\beta_\ell}} - \sqrt{\varphi_{Y,\Theta}^\lowsup{\beta_\ell{+}\Delta} } \bigg)^2\, \d \rho_{Y,\Theta} \bigg)^\frac12. \label{eq:diff_sqrt_z}
\end{align}
This way, the Hellinger distance satisfies 
\begin{align}
& \hspace{-1.5em}\DH(\pi_{Y,\Theta}^\lowsup{\ell}, \pi_{Y,\Theta}^\lowsup{\ell{+}1}) \nonumber \\
= &  \frac{1}{\sqrt2} \Bigg( \int \Bigg(\sqrt{\frac{\varphi_{Y,\Theta}^\lowsup{\beta_\ell}}{z_{\ell}} }- \sqrt{ \frac{\varphi_{Y,\Theta}^\lowsup{\beta_\ell{+}\Delta}}{z_{\ell{+}1}} }\Bigg)^2 \d \rho_{Y,\Theta} \Bigg)^\frac12  \nonumber \\
= & \frac{1}{\sqrt{2 z_\ell} } \left( \int \bigg(\sqrt{\varphi_{Y,\Theta}^\lowsup{\beta_\ell} } - \sqrt{\varphi_{Y,\Theta}^\lowsup{\beta_\ell{+}\Delta} } \sqrt{\frac{z_{\ell}}{z_{\ell{+}1}} }\bigg)^2 \d \rho_{Y,\Theta} \right)^\frac12 \nonumber \\
= & \frac{1}{\sqrt{2 z_\ell} } \left( \int \bigg(\sqrt{\varphi_{Y,\Theta}^\lowsup{\beta_\ell} } - \sqrt{\varphi_{Y,\Theta}^\lowsup{\beta_\ell{+}\Delta} } + \sqrt{\varphi_{Y,\Theta}^\lowsup{\beta_\ell{+}\Delta} } -  \sqrt{\varphi_{Y,\Theta}^\lowsup{\beta_\ell{+}\Delta} } \sqrt{\frac{z_{\ell}}{z_{\ell{+}1}} }\bigg)^2 \d \rho_{Y,\Theta} \right)^\frac12 \nonumber \\
\leq & \frac{1}{\sqrt{2 z_\ell} } \left( \int \bigg(\sqrt{\varphi_{Y,\Theta}^\lowsup{\beta_\ell} } - \sqrt{\varphi_{Y,\Theta}^\lowsup{\beta_\ell{+}\Delta} } \bigg)^2 \d \rho_{Y,\Theta}\right)^\frac12 + \frac{1}{\sqrt{2 z_\ell} } \left( \int \varphi_{Y,\Theta}^\lowsup{\beta_\ell{+}\Delta}   \bigg(1 -\sqrt{\frac{z_{\ell}}{z_{\ell{+}1}} }\bigg)^2 \d \rho_{Y,\Theta} \right)^\frac12 \nonumber \\
= & \frac{1}{\sqrt{2 z_\ell} } \left( \int \bigg(\sqrt{\varphi_{Y,\Theta}^\lowsup{\beta_\ell} } - \sqrt{\varphi_{Y,\Theta}^\lowsup{\beta_\ell{+}\Delta} } \bigg)^2 \d \rho_{Y,\Theta}\right)^\frac12 + \frac{\left|\sqrt{z_{\ell{+}1}} -\sqrt{z_{\ell}} \right| }{\sqrt{2 z_\ell} } \nonumber \\
\overset{\eqref{eq:diff_sqrt_z}}{\leq} & \sqrt{ \frac{2}{z_\ell} } \left( \int \bigg(\sqrt{\varphi_{Y,\Theta}^\lowsup{\beta_\ell} } - \sqrt{\varphi_{Y,\Theta}^\lowsup{\beta_\ell{+}\Delta} } \bigg)^2 \d \rho_{Y,\Theta}\right)^\frac12 \label{eq:hell_gap_1}.
\end{align}
Applying H\"{o}lder's inequality, we have
\begin{align}
\DH(\pi_{Y,\Theta}^\lowsup{\ell}, \pi_{Y,\Theta}^\lowsup{\ell{+}1}) 
&\leq  \left( \frac{2 \, \sup_{y,\theta} \big(\varphi_{Y,\Theta}^\lowsup{\beta_\ell} \big) }{z_\ell} \int \bigg( 1 - \varphi_{Y,\Theta}^\lowsup{\Delta/2}  \bigg)^2 \d \rho_{Y,\Theta} \right)^\frac12 \\
&= \left( \frac{2 \, \exp(\beta_\ell c_\varphi) }{z_\ell} \int \bigg( 1 - \varphi_{Y,\Theta}^\lowsup{\Delta/2}  \bigg)^2 \d \rho_{Y,\Theta} \right)^\frac12. \label{eq:hell_gap_2}
\end{align}
By the Lipschitz continuity of the exponential function in the domain $(-\infty, a]$ for $a < \infty$, we have
\[
\left|1 - \varphi_{Y,\Theta}^\lowsup{\Delta/2} \right|  = \left|\exp(0) - \exp\left( \frac\Delta2 \log \varphi_{Y,\Theta}\right) \right|  \leq \exp \Big(\max\Big( 0, \frac{c_\varphi\Delta}2  \Big)\Big)  \left| \frac\Delta2 \log \varphi_{Y,\Theta}  - 0\right|,
\]
which leads to 
\[
\left(1 - \varphi_{Y,\Theta}^\lowsup{\Delta/2} \right)^2 \leq \frac{\Delta^2\exp \big(\max( 0, c_\varphi \Delta)\big)}4  \left(\log \varphi_{Y,\Theta} \right)^2 .
\]
Substituting the above inequality into \eqref{eq:hell_gap_2}, we have
\begin{align}
\DH(\pi_{Y,\Theta}^\lowsup{\ell}, \pi_{Y,\Theta}^\lowsup{\ell{+}1}) \leq & \bigg( \frac{ \Delta^2 \exp \big(  \max( c_\varphi \beta_\ell, c_\varphi \beta_{\ell{+}1}) \big)}{2 z_\ell} \int \left(\log \varphi_{Y,\Theta} \right)^2 \d \rho_{Y,\Theta} \bigg)^\frac12 \nonumber \\
\overset{\eqref{eq:lower_bound_z}}{\leq} & \bigg( \frac{\Delta^2 \exp \big(  \max( c_\varphi \beta_\ell, c_\varphi \beta_{\ell{+}1} ) \big)}{2 \exp( m_\varphi \beta_\ell)}  \bigg)^\frac12  \bigg(\int \left(\log \varphi_{Y,\Theta} \right)^2 \d \rho_{Y,\Theta} \bigg)^\frac12 \nonumber \\
= & \Delta  \bigg( \frac12 \exp\Big( - m_\varphi \beta_\ell + \max\big( c_\varphi \beta_\ell, c_\varphi \beta_{\ell{+}1}\big) \Big) V_\varphi \bigg)^\frac12 .\nonumber 
\end{align}
Thus, the Hellinger distance satisfies
\begin{align*}
\DH(\pi_{Y,\Theta}^\lowsup{\ell}, \pi_{Y,\Theta}^\lowsup{\ell{+}1}) 
&\leq \Delta \bigg( \frac{1}{2} \,V_\varphi\, \sup_{\ell}\exp\big(- m_\varphi \beta_\ell \big) \, \sup_{\ell}\exp\big( c_\varphi \beta_\ell\big) \bigg)^\frac12, 
\end{align*}
which concludes the proof.
\end{proof}

\section{Further numerical tests}

\subsection{Elasticity model}\label{sec:wrench2}

\begin{figure}[h!]
\centering
\noindent\begin{tikzpicture}
\begin{axis}[%
width=0.48\linewidth,
height=0.4\linewidth,
xlabel={TT rank},
title={$\DH(\widetilde\pi_{\Theta|Y=y},~p_{\Theta|Y=y})$},
xmin=4,xmax=14,
]  
\addplot+[mark=*,only marks,mark options={mark size=3},error bars/.cd,y dir=both,y explicit] coordinates{
(5,   0.3109) +- (0, 0.1166)
(6,   0.2552) +- (0, 0.0465)
(7 ,  0.2487) +- (0, 0.0473)
(8,   0.2276) +- (0, 0.0571)
(9 ,  0.2183) +- (0, 0.0455)
(10,  0.2184) +- (0, 0.0313)
(11,  0.2243) +- (0, 0.0277)
(12,  0.2099) +- (0, 0.0225)
(13,  0.2256) +- (0, 0.0542)
                                };
\end{axis}
\end{tikzpicture}
\hfill\noindent\begin{tikzpicture}
\begin{axis}[%
width=0.48\linewidth,
height=0.4\linewidth,
xmode=log,
xlabel={$\beta_{\ell+1}/\beta_\ell$},
title={$\DH(\widetilde\pi_{\Theta|Y=y},~p_{\Theta|Y=y})$},
xmin=1,xmax=11,
legend style={at={(0.9,0.99)},anchor=north east}
]  
\addplot+[only marks,mark options={mark size=3},error bars/.cd,y dir=both,y explicit] coordinates{
(10^(1/1), 0.3757) +- (0, 0.1058)
(10^(1/2), 0.2183) +- (0, 0.0455)
(10^(1/3), 0.2527) +- (0, 0.0597)
(10^(1/4), 0.2477) +- (0, 0.0242)
                                }; 
\addlegendentry{$y_{s} \mydots y_1, \theta_1 \mydots \theta_{t}$}; 
\addplot+[only marks,mark options={mark size=3},error bars/.cd,y dir=both,y explicit] coordinates{
(10^(1/1), 0.3928) +- (0, 0.1475)
(10^(1/2), 0.2613) +- (0, 0.0345)
(10^(1/3), 0.3064) +- (0, 0.0750)
(10^(1/4), 0.3018) +- (0, 0.0423)
                                }; 
\addlegendentry{$y_{1} \mydots y_s, \theta_t \mydots \theta_{1}$}; 
\addplot+[only marks,mark=triangle*,mark options={mark size=3},error bars/.cd,y dir=both,y explicit] coordinates{
(10^(1/1), 0.3022) +- (0, 0.0689)
(10^(1/2), 0.2689) +- (0, 0.0271)
(10^(1/3), 0.2596) +- (0, 0.0280)
(10^(1/4), 0.3082) +- (0, 0.0463)
                                }; 
\addlegendentry{$y_{1} \mydots y_s, \theta_1 \mydots \theta_{t}$}; 
\end{axis}
\end{tikzpicture}
\caption{Wrench example: Hellinger distances (mean$\pm$std over 10 data realizations) between the conditional DIRT approximation and the reduced-dimensional posterior with $s=13$ and $t=10$. Left: TT rank varies and $\beta_{\ell+1}/\beta_{\ell}=\sqrt{10}$ is fixed. Right: $\beta_{\ell+1}/\beta_{\ell}$ varies and TT ranks are fixed to $9$.}
\label{fig:wrenchnew-dirt}
\end{figure}
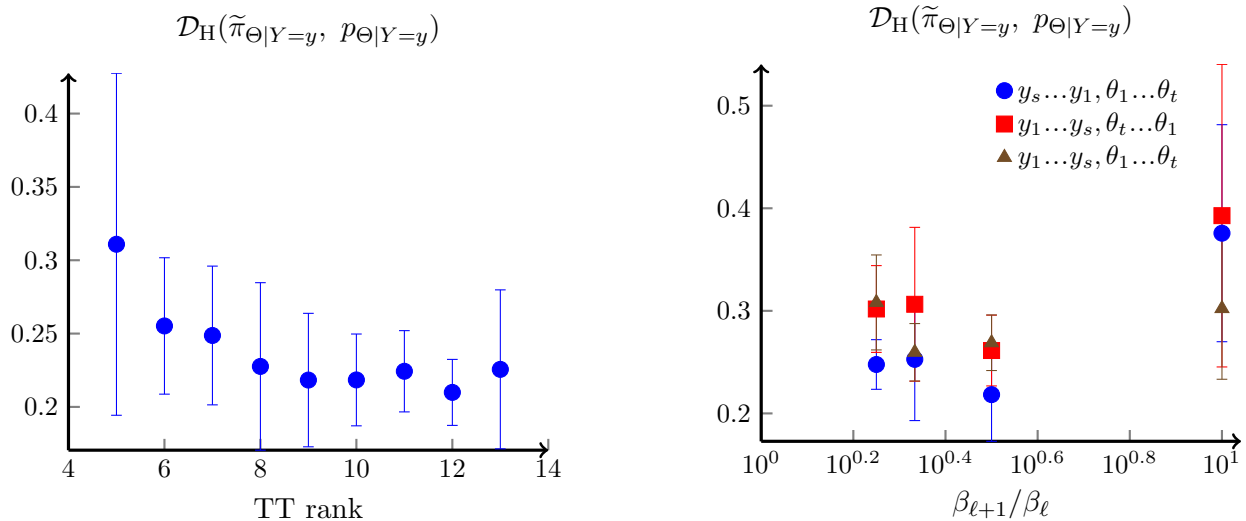

In Figure~\ref{fig:wrenchnew-dirt} we investigate the Hellinger error between the conditional DIRT approximation and the exact posterior on a given reduced subspace for different TT ranks and tempering parameters.
We observe that the approximation error of TT decreases asymptotically with increasing TT ranks. However, the error stops decaying after rank $9$. This may suggest that one needs to further increase the resolution of the discretization grid in constructing TT (cf. \ref{sec:tt_marginal}) if further accuracy improvement is desired. 
Fixing the TT ranks to $9$, we observe that among different sequences of temperatures, the sequence with the ratio $\beta_{\ell+1}/\beta_{\ell} = \sqrt{10}$ gives the best accuracy.

\begin{figure}[htb]
\centering
\begin{minipage}[b]{0.45\linewidth}
\includegraphics[scale=0.5]{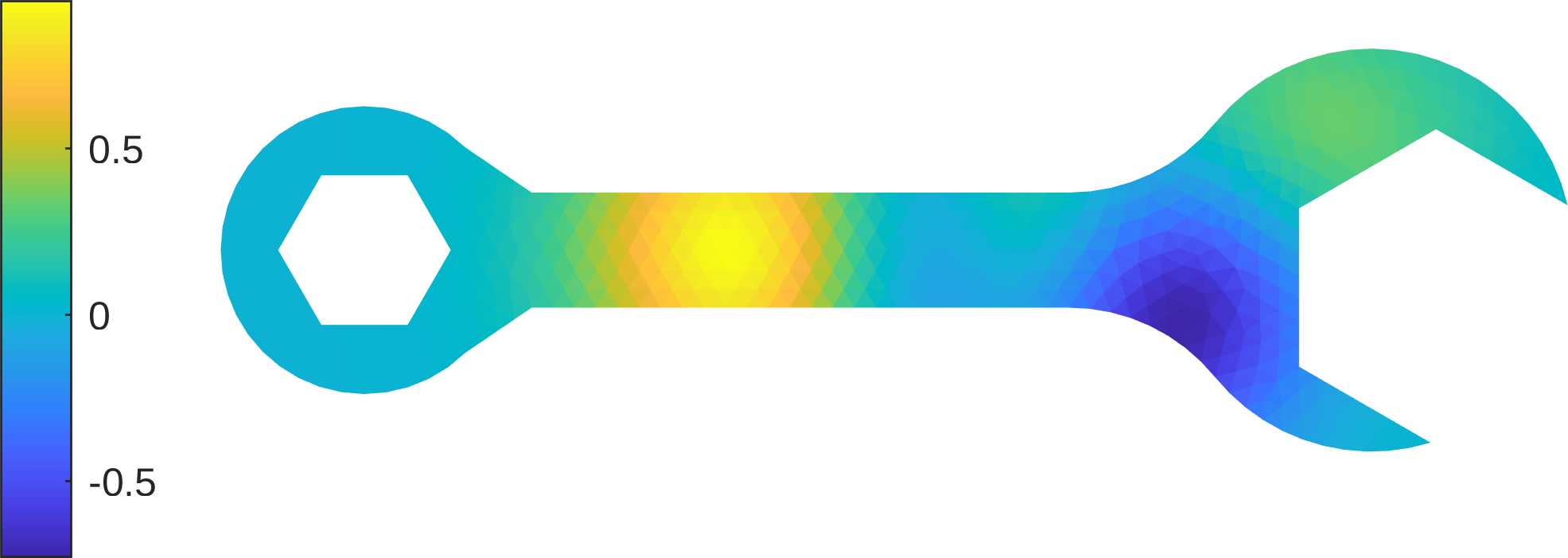}
\end{minipage}
\hfill
\begin{minipage}[b]{0.45\linewidth}
\includegraphics[scale=0.5]{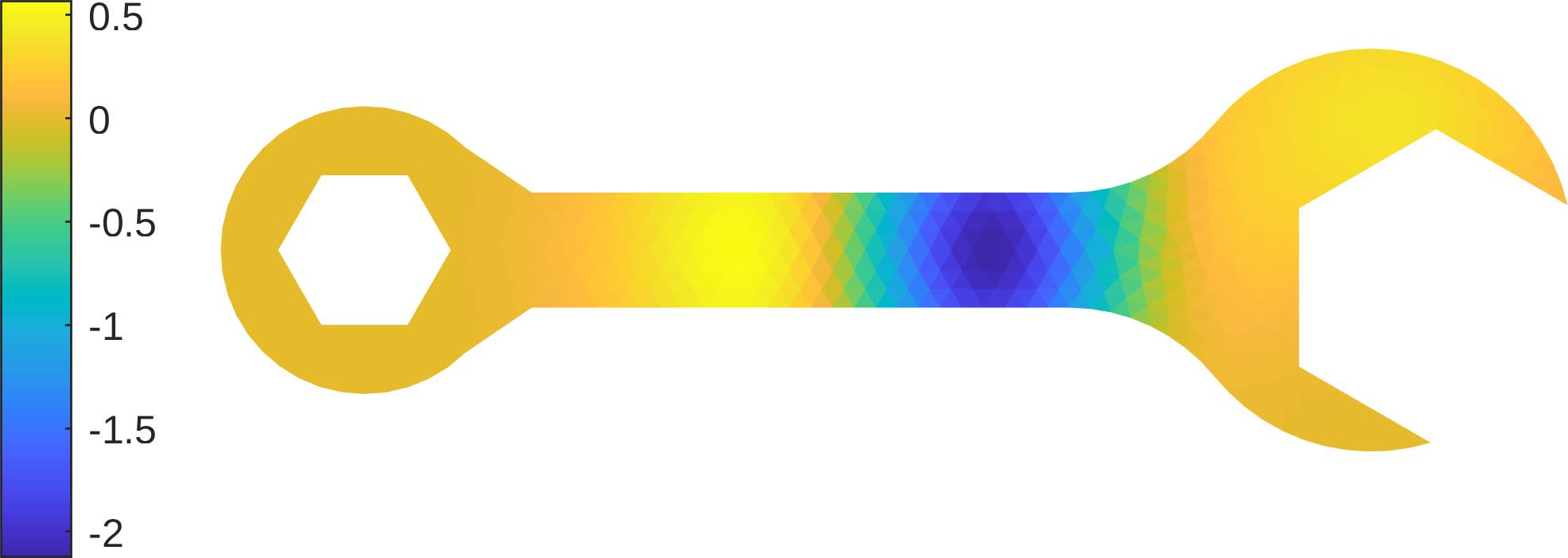}
\end{minipage}\\[1ex]
\begin{minipage}[b]{0.45\linewidth}
\includegraphics[scale=0.5]{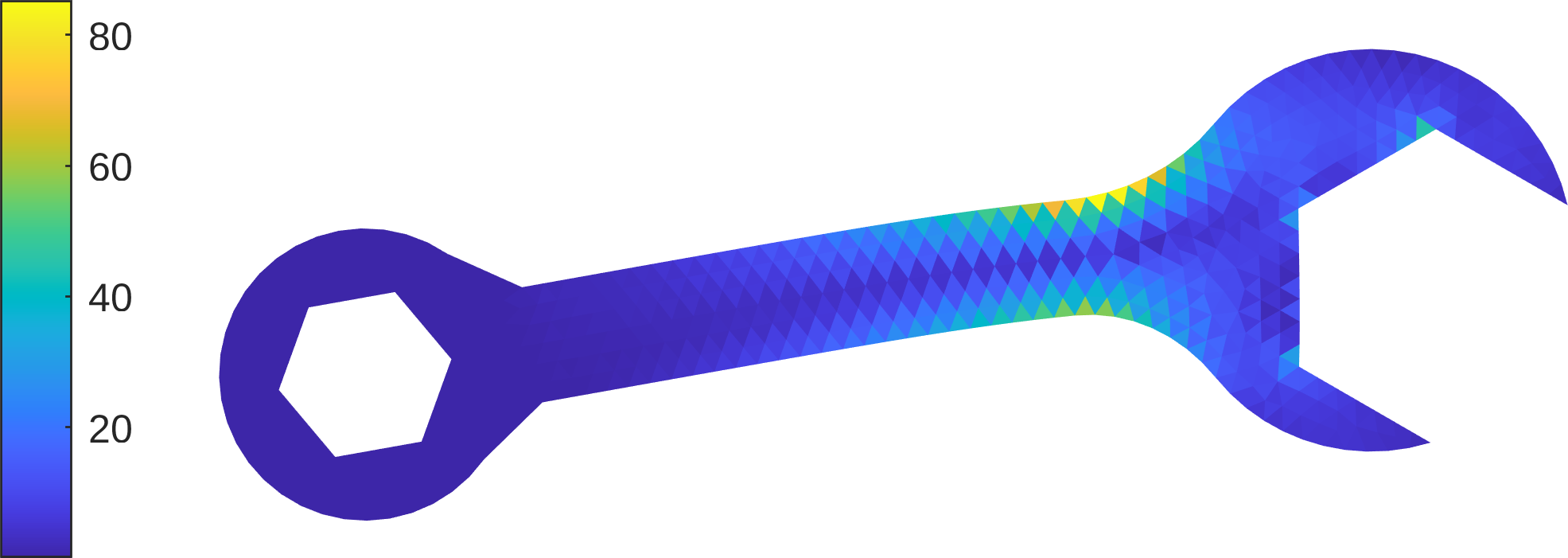}
\end{minipage}
\hfill
\begin{minipage}[b]{0.45\linewidth}
\includegraphics[scale=0.5]{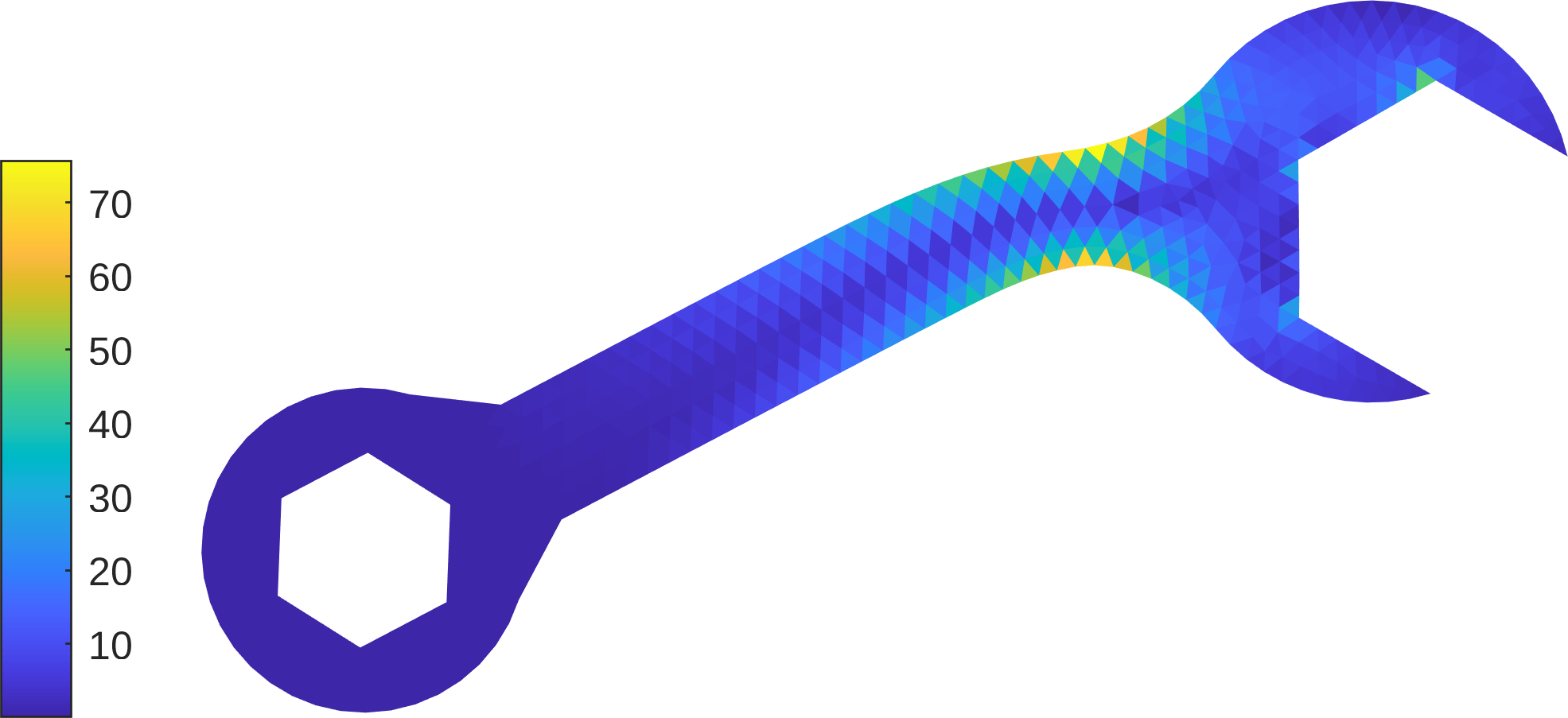}
\end{minipage}\\[5ex]
\begin{minipage}[b]{0.45\linewidth}
\includegraphics[scale=0.5]{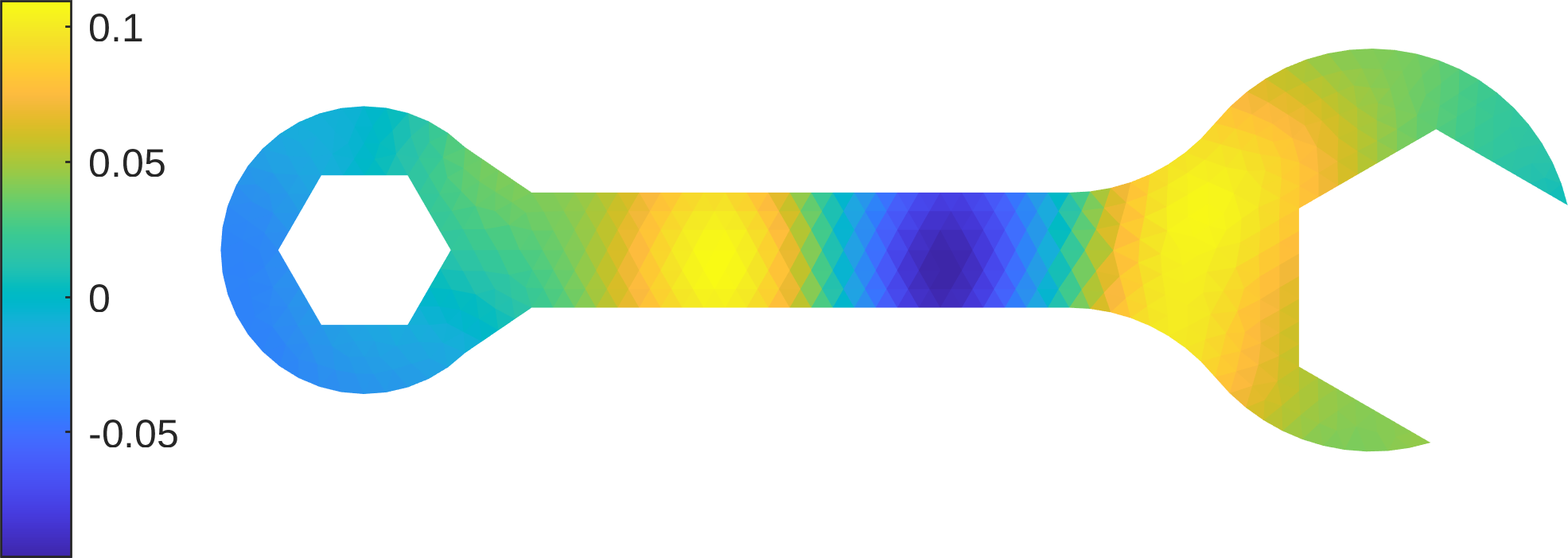}
\end{minipage}
\hfill
\begin{minipage}[b]{0.45\linewidth}
\includegraphics[scale=0.5]{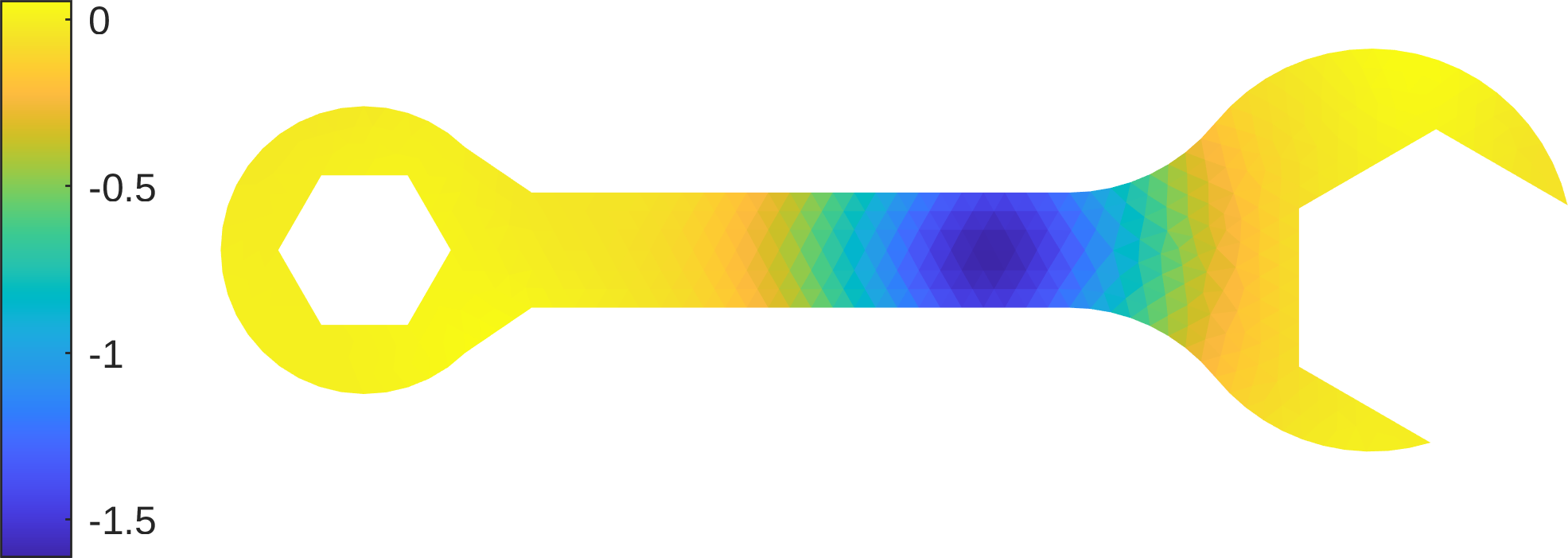}
\end{minipage}\\[1ex]
\begin{minipage}[b]{0.45\linewidth}
\includegraphics[scale=0.5]{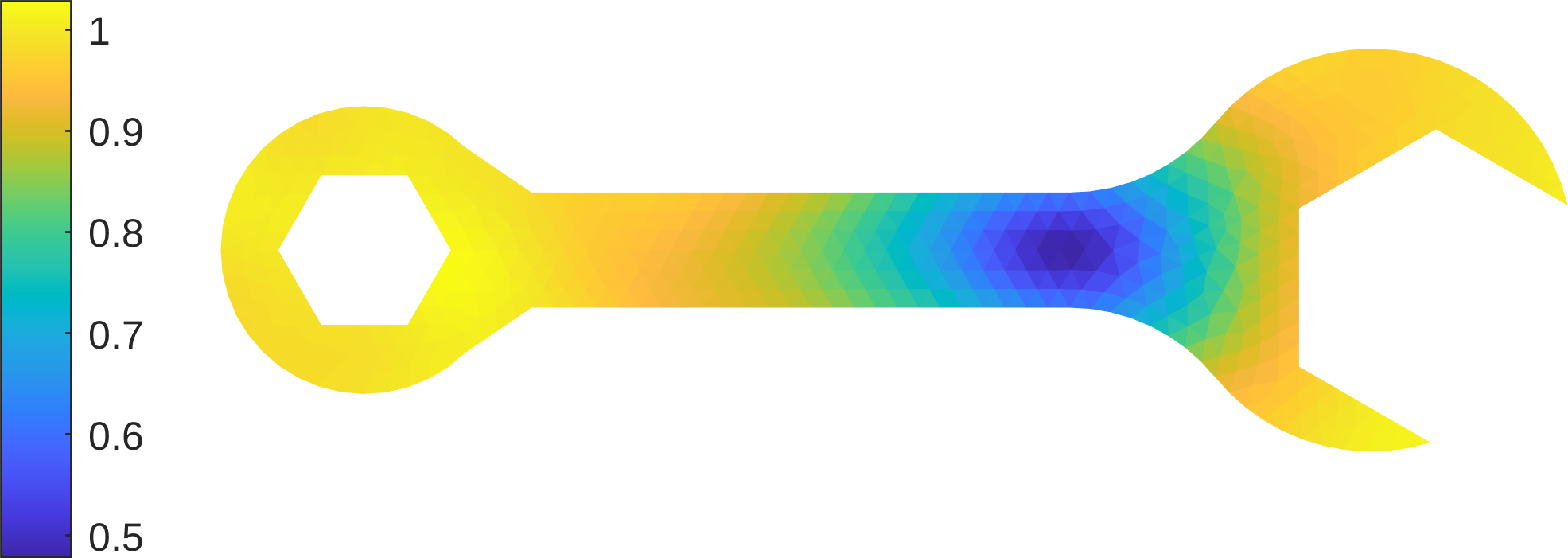}
\end{minipage}
\hfill
\begin{minipage}[b]{0.45\linewidth}
\includegraphics[scale=0.5]{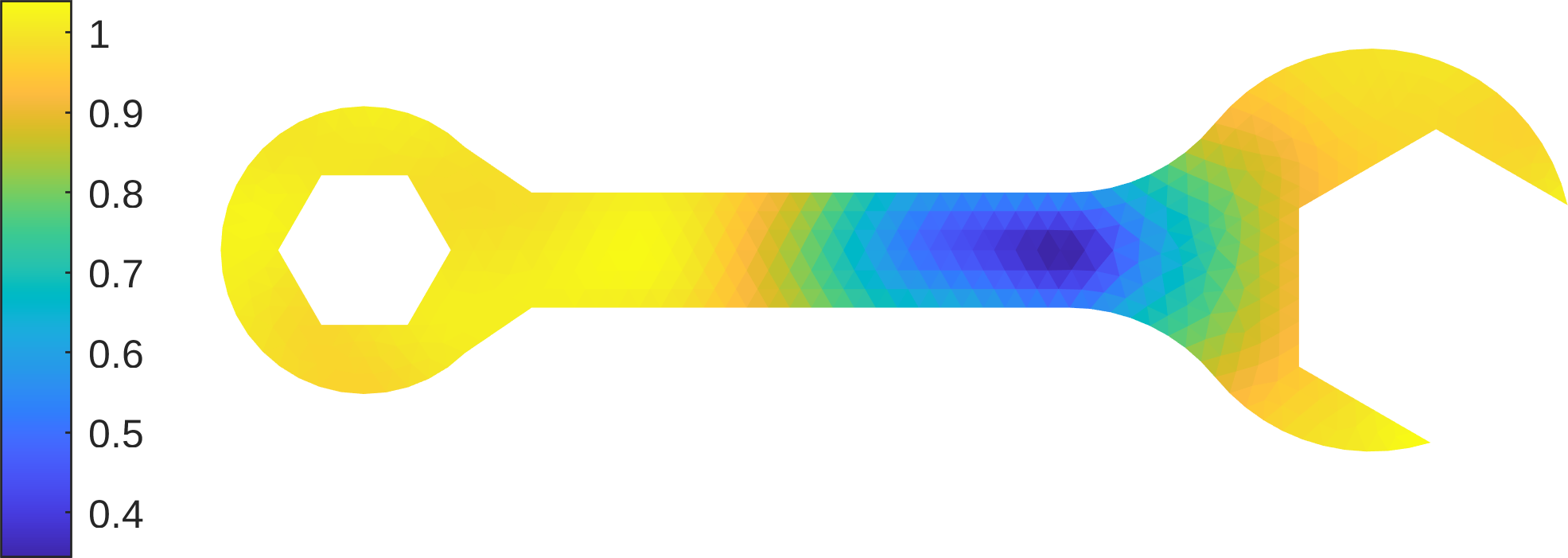}
\end{minipage}\\[5ex]
\begin{minipage}[b]{0.45\linewidth}
\centering
\begin{tabular}{c|c}
$\DH(\widetilde\pi,~ p)$ & $\DH(\pi,~ \widetilde\pi)$ \\\hline
0.172 & 0.151
\end{tabular}
\end{minipage}
\hfill
\begin{minipage}[b]{0.45\linewidth}
\centering
\begin{tabular}{c|c}
$\DH(\widetilde\pi,~ p)$ & $\DH(\pi,~ \widetilde\pi)$ \\\hline
0.298 & 0.289
\end{tabular}
\end{minipage}
\caption{Top two rows: two prior realizations of $\log E$ (log-Young's modulus) and the von Mises stress. Rows 3 and 4: mean and standard deviation of $\log E$ with respect to the posterior conditioned on the data produced from the corresponding prior realizations.
Bottom row: Hellinger-distance errors between the conditional DIRT approximation and the exact reduced posterior, and between reduced and original posterior densities.
}
\label{fig:wrench:samples}
\end{figure}

In Figure~\ref{fig:wrench:samples} (top) we show two realizations of the logarithm of Young's modulus $\log E$ drawn from the prior distribution and the corresponding von Mises stress. These samples are treated as the ``ground truth'' to simulate the observation data. Using truncation parameters $s =13$ and $t=7$, we compute posterior mean and standard deviation of $\log E$ conditioned on those ``ground truth'' data and show the corresponding results in the remaining rows of Figure~\ref{fig:wrench:samples}. The last row of Figure~\ref{fig:wrench:samples} shows the Hellinger distance between the conditional DIRT approximations and the reduced posterior \eqref{eq:api_with_independence_structure}, and the Hellinger distance between the reduced posterior and the original posterior.
%

\subsection{Elliptic PDE with Besov prior}\label{sec:besov2}

\begin{figure}[h!]
\centering
\begin{minipage}[b]{0.45\linewidth}
\includegraphics[scale=0.4]{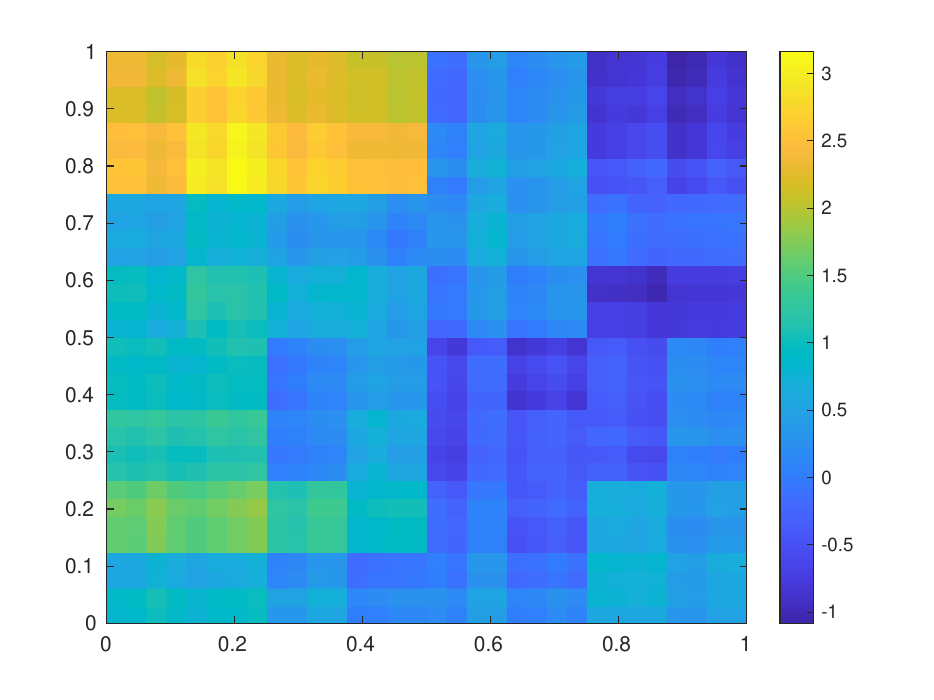}
\end{minipage}
\hfill
\begin{minipage}[b]{0.45\linewidth}
\includegraphics[scale=0.4]{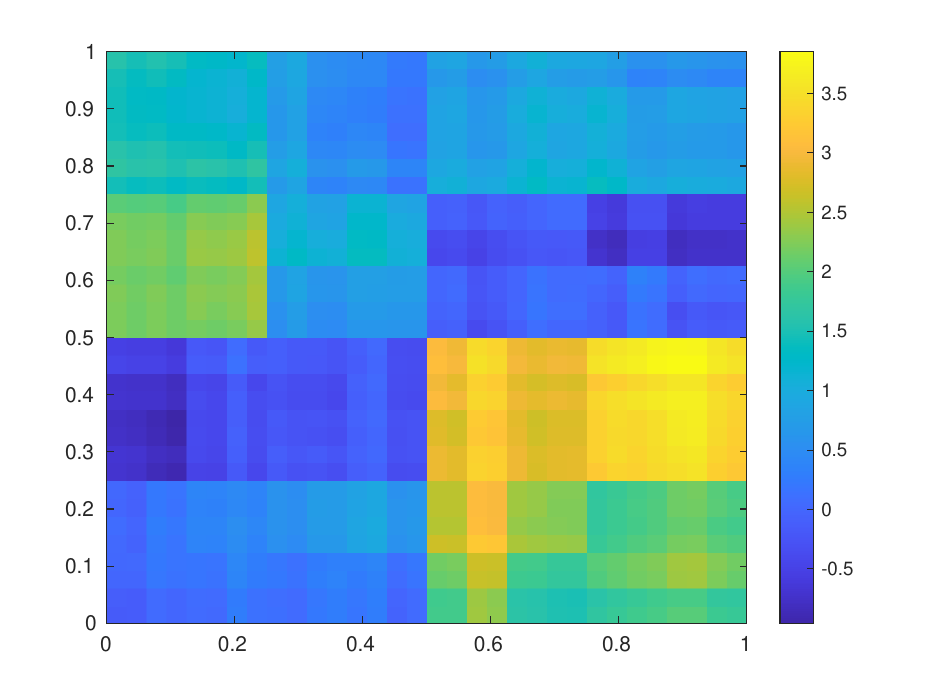}
\end{minipage}\\[1ex]
\begin{minipage}[b]{0.45\linewidth}
\includegraphics[scale=0.4]{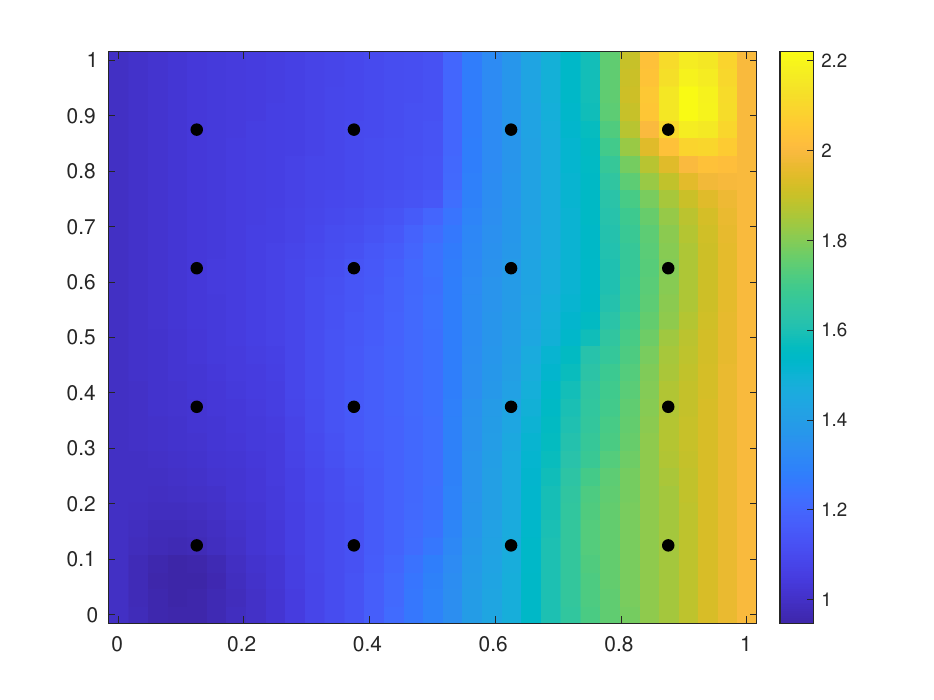}
\end{minipage}
\hfill
\begin{minipage}[b]{0.45\linewidth}
\includegraphics[scale=0.4]{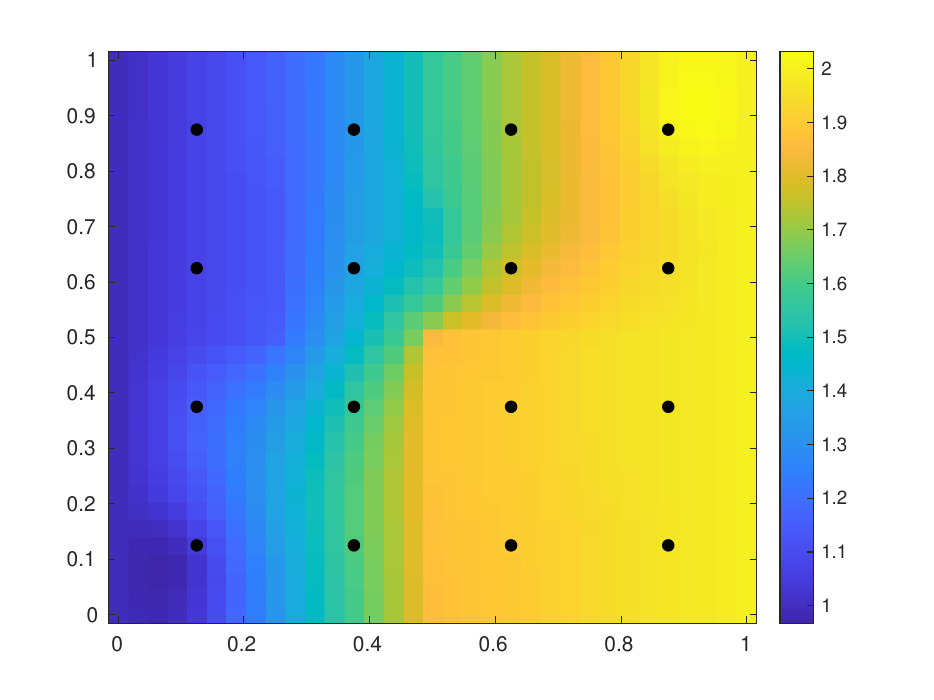}
\end{minipage}\\[1ex]
\begin{minipage}[b]{0.45\linewidth}
\centering
\begin{tabular}{c|c}
$\DH(\widetilde\pi,~ p)$ & $\DH(\pi,~ \widetilde\pi)$ \\\hline
0.247 & 0.135
\end{tabular}
\end{minipage}
\hfill
\begin{minipage}[b]{0.45\linewidth}
\centering
\begin{tabular}{c|c}
$\DH(\widetilde\pi,~ p)$ & $\DH(\pi,~ \widetilde\pi)$ \\\hline
0.255  &  0.127
\end{tabular}
\end{minipage}
\caption{Top two rows: two prior realizations of $\log \kappa$ and the solution $u$. Black dots show the locations of the observations.
Bottom row: Hellinger-distance errors between the conditional DIRT approximation and the reduced posterior, and between reduced and original posterior densities.
}
\label{fig:besov:truth}
\end{figure}

\begin{figure}[h!]
\centering
\begin{minipage}[b]{0.45\linewidth}
\includegraphics[scale=0.4]{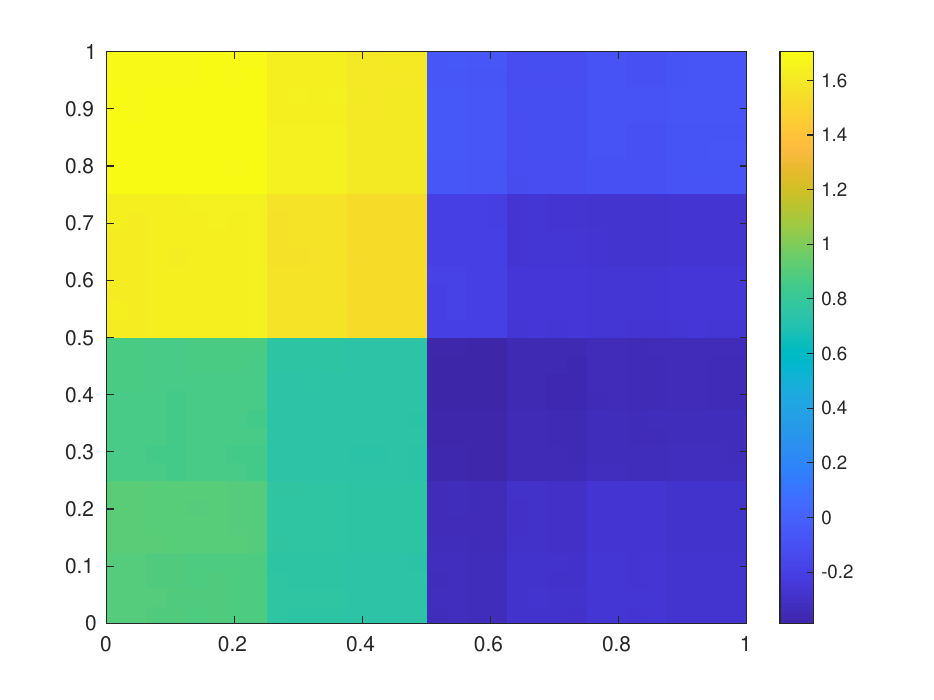}
\end{minipage}
\hfill
\begin{minipage}[b]{0.45\linewidth}
\includegraphics[scale=0.4]{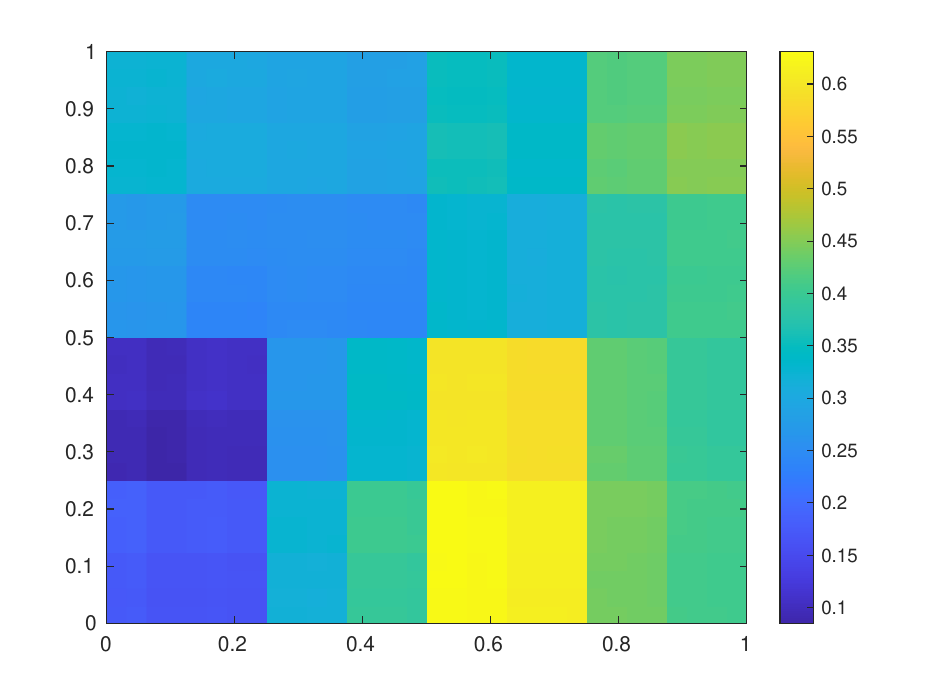}
\end{minipage}\\[1ex]
\begin{minipage}[b]{0.45\linewidth}
\includegraphics[scale=0.4]{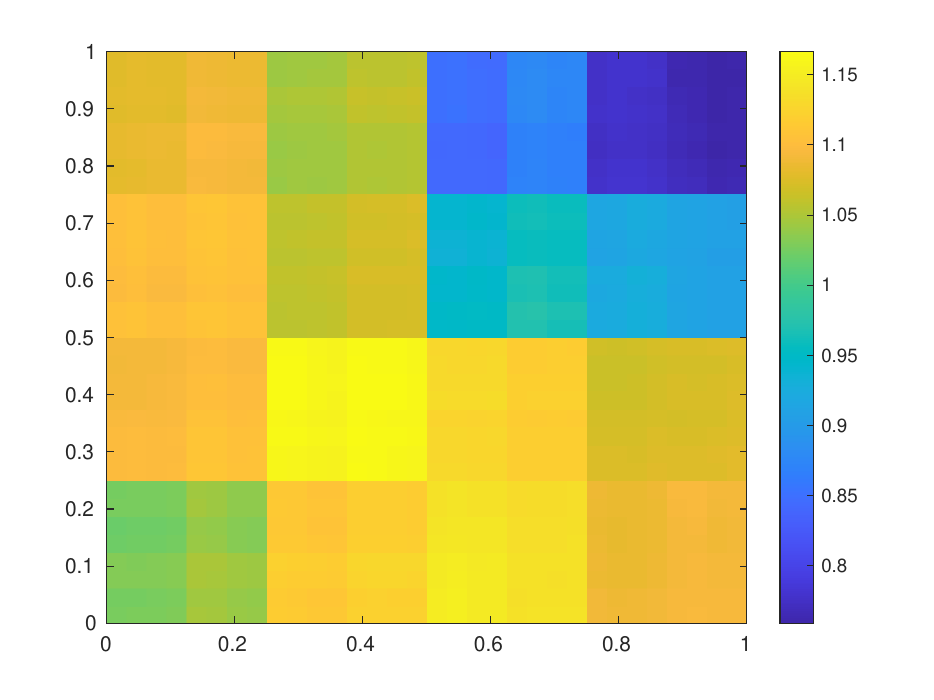}
\end{minipage}
\hfill
\begin{minipage}[b]{0.45\linewidth}
\includegraphics[scale=0.4]{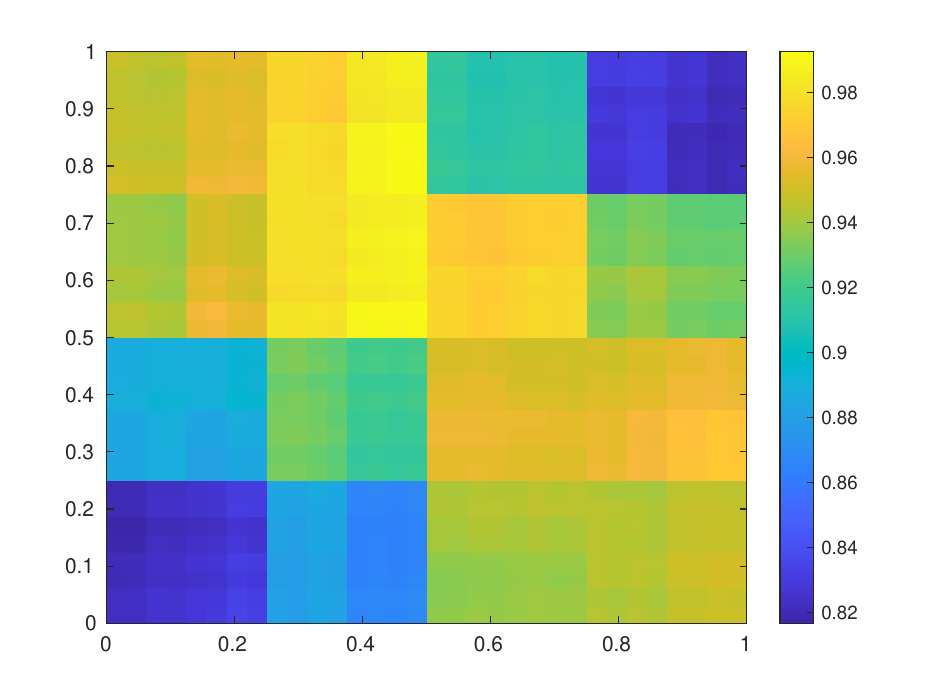}
\end{minipage}\\[1ex]
\begin{minipage}[b]{0.45\linewidth}
\includegraphics[scale=0.4]{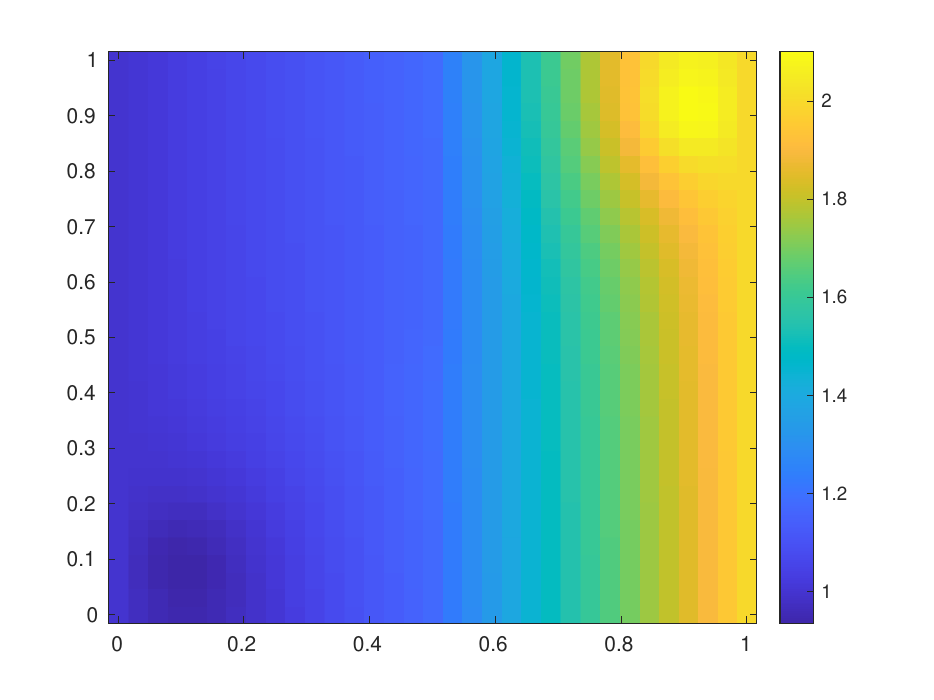}
\end{minipage}
\hfill
\begin{minipage}[b]{0.45\linewidth}
\includegraphics[scale=0.4]{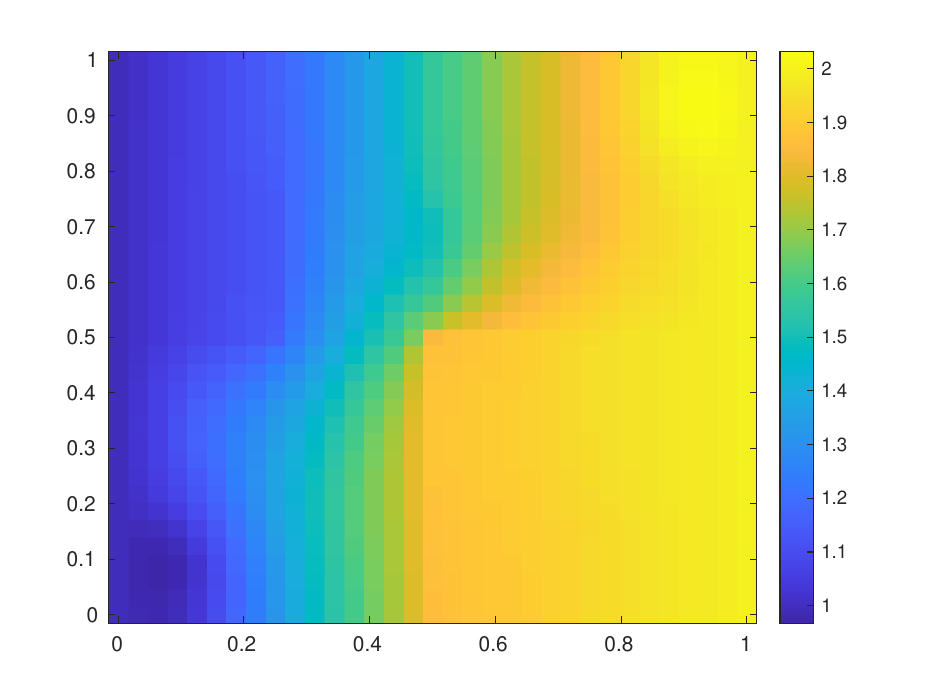}
\end{minipage}
\caption{Top two rows: mean and standard deviation of $\log \kappa$ with respect to the posterior conditioned on the data produced from the corresponding prior realizations. Bottom row: the solution computed from mean $\log\kappa$.}
\label{fig:besov:post}
\end{figure}

In Figure~\ref{fig:besov:truth} we plot two synthetic ``ground truth'' samples of the logarithm of the diffusion coefficient, and the corresponding solution used to generate the observation data.
We reduce the parameter dimension to $t=12$.
The last row of Figure~\ref{fig:besov:truth} shows the Hellinger distance between the conditional DIRT approximations and the reduced posterior \eqref{eq:api_with_independence_structure}, and the Hellinger distance between the reduced posterior and the original posterior.
In Figure~\ref{fig:besov:post} we compute posterior mean and standard deviation of $\log \kappa$ conditioned on those ``ground truth'' data.
We observe that the inferred posterior mean coefficient reproduces the observed solution.

\bibliographystyle{elsarticle-num}
\bibliography{references}

\end{document}